%\documentclass[]{article}
%\usepackage{nips15submit_e,times}
%\nipsfinalcopy

%\usepackage{pdfsync}

\documentclass{aamas2016}

% if you are using PDF LaTeX and you cannot find a way for producing
% letter, the following explicit settings may help
 
\pdfpagewidth=8.5truein
\pdfpageheight=11truein

% For figures

\usepackage{graphicx} % more modern
\usepackage{caption}

\usepackage{amsmath, amssymb}

% For citations
%\usepackage[numbers,sort&compress]{natbib}
%\usepackage{natbib}

% For algorithms
\usepackage{algorithm}
\usepackage{algpseudocode}

\input{Definitions.tex}

\usepackage[colorlinks,linkcolor=red,citecolor=blue,urlcolor=blue]{hyperref}

%\title{Hybrid temporal difference learning for effective on and off-policy learning}
\title{Investigating Practical Linear Temporal Difference Learning \!\!\!}
%\title{Investigating practical linear temporal difference learning}
%\title{On the practicality of linear, off-policy temporal difference learning}

\numberofauthors{2}

\author{
% You can go ahead and credit any number of authors here,
% e.g. one 'row of three' or two rows (consisting of one row of three
% and a second row of one, two or three).
%
% The command \alignauthor (no curly braces needed) should
% precede each author name, affiliation/snail-mail address and
% e-mail address. Additionally, tag each line of
% affiliation/address with \affaddr, and tag the
% e-mail address with \email.
% 1st. author
\alignauthor
Adam White\\
       \affaddr{Department of Computer Science}\\
       \affaddr{Indiana University}\\
       \affaddr{Bloomington, IN 47405, USA}\\
       \email{adamw@indiana.edu}
\alignauthor
Martha White\\
       \affaddr{Department of Computer Science}\\
       \affaddr{Indiana University}\\
       \affaddr{Bloomington, IN 47405, USA}\\
       \email{martha@indiana.edu}
       }

% dimensions

% vectors

\newcommand\defeq{\mathrel{\overset{\makebox[0pt]{\mbox{\normalfont\tiny\sffamily def}}}{=}}}

% matrices

% functions
\newcommand{\dmu}{{d^\mu}}

% Sets

%\newcommand{\States}}{\mathcal{S}}

%adam's defs----
\newcommand{\qw}{\mathbf w}
\newcommand{\E}{\mathbb{E}}
\newcommand{\fs}{\mathbf{x}}
\newcommand{\h}{\mathbf h}
\newcommand{\vx}{\mathbf x}
\def\e{{\mathbf e}}
\newcommand{\emu}{\e^\mu}

\newcommand{\A}{\mathcal{A}}
\newcommand{\Ss}{\mathcal{S}}
\newcommand{\R}{\mathbb{R}}
%------

%\newcommand{\MSPBE}{\text{MSPBE}}
%\newcommand{\MSPBE}{\text{\small MSPBE}}
\newcommand{\MSPBE}{{\tt MSPBE}}

\newcommand{\citet}[1]{\cite{#1}}

\begin{document}

\maketitle

\begin{abstract}
%This paper investigates practical, linear complexity off-policy prediction learning. 
Off-policy reinforcement learning has many applications including: learning from demonstration, learning multiple goal seeking policies in parallel, and representing predictive knowledge. Recently there has been an proliferation of new policy-evaluation algorithms that fill a longstanding algorithmic void in reinforcement learning: combining robustness to off-policy sampling, function approximation, linear complexity, and temporal difference (TD) updates. 
This paper contains two main contributions. 
First, we derive two new hybrid TD policy-evaluation algorithms, which fill a gap in this collection of algorithms.
Second, we perform an empirical comparison to elicit which of these new linear TD methods should be preferred in different situations, and make concrete suggestions about practical use. 
%
%Towards this goal we perform an extensive empirical comparison of twelve algorithms, varying the feature representation, algorithm parameters, and task.
%%At this point, it is not clear which of these methods are most useful to a reinforcement learning practitioner.
%%Our second contribution is an extensive empirical comparison of twelve algorithms, varying the feature representation, algorithm parameters, and task.
%In this paper, we derive two new hybrid TD methods which fill a gap in this collection of algorithms. We then set out to 
%%At this point, it is not clear which of these methods are most useful to a reinforcement learning practitioner. We take significant step  paper is a first step towards 
%elicit which of these new linear TD methods should be preferred in different situations. Towards this goal we perform an extensive empirical comparison of twelve algorithms, varying the feature representation, algorithm parameters, and task.
%%In addition to algorithmic specific conclusions, we found three predominant trends from our extensive empirical analysis. 
%%First, the common belief that TD methods are superior to gradient TD methods seems false, particularly with the addition of true online traces. Second, the true-online methods are beneficial in on-policy learning, where long traces are useful, but less so in off-policy learning. Third, when computation is at premium more expensive true-online methods are hardly worth the increased computational cost.
\end{abstract}

\keywords{Reinforcement learning; temporal difference learning; off-policy learning}
\section{Introduction}
%\\\\
Until recently, using temporal difference (TD) methods to approximate a value function from off-policy samples was potentially unstable without resorting to quadratic (in the number of features) computation and storage, even in the case of linear approximations. Off-policy learning involves learning an estimate of total future reward that we would expect to observe if the agent followed some target policy, while learning from samples generated by a different behavior policy. This off-policy, policy-evaluation problem, when combined with a policy improvement step, can can be used to model many different learning scenarios, such as learning from many policies in parallel \cite{sutton2011horde}, learning from demonstrations \cite{argall2009asurvey}, learning from batch data \cite{lin1992self}, or simply learning about the optimal policy while following an exploratory policy, as in the case of Q-learning \cite{watkins1989learning}. In this paper, we focus exclusively on the off-policy, policy evaluation problem, commonly referred to as value function approximation or simply the {\em prediction problem}. Over the past decade there has been an proliferation of new linear-complexity, policy-evaluation methods designed to be convergent in the off-policy case.

These novel algorithmic contributions have focused on different ways of achieving stable off-policy prediction learning.
% while maintaining linear computational complexity and utilizing TD's naturally incremental boot-strapping update mechanism. 
The first such methods were the gradient TD family of algorithms that perform approximate stochastic gradient descent on the mean squared projected Bellman error (MSPBE).
%\footnote{The least-squares solution to the MSPBE has the same fixed point as conventional TD.}.
%---whose fixed point is the same as that which conventional TD converges too. 
The primary drawback of these methods is the requirement for a second set of learned weights, a second step size parameter, and potentially high variance updates due to importance sampling. Empirically the results have been mixed,  
with some results indicating that TD can be superior in on-policy settings\cite{sutton2009fast}, and others concluding the exact opposite \cite{dann2014policy}.
%the in some on-policy domains the performance of conventional TD is superior when eligibility traces are not used \citep{sutton2009fast}, while other found the opposite conclusion \citep{dann2014policy}.

Later, provisional TD (PTD) was introduced \cite{sutton2014anew} to rectify the issue that the bootstrap parameter $\lambda$, used in gradient TD methods\cite{maei2011gradient} does not correspond well with the same parameter used by conventional TD learning \cite{sutton1988learning}. Specifically, 
for $\lambda=1$, 
%that when this parameter was set to the extreme value, one, 
gradient TD methods do not correspond to any known variant of off-policy Monte Carlo. The PTD algorithm fixes this issue, and in on-policy prediction is exactly equivalent to the conventional TD algorithm. PTD does not use gradient corrections, and is only guaranteed to converge in the tabular off-policy prediction setting. Its empirical performance relative to TD and gradient TD, however, is completely unknown.

Recently Sutton et al.~\citet{sutton2015anemphatic} observed that conventional TD does not correct its update based on the notion of a follow-on distribution.
%TODO: martha I think we need to define this
 %induced when operating in a Markov decision process. 
This distributional mis-match provides another way to understand the off-policy divergence of conventional off-policy TD. 
%Utilizing a scalar state-based interest function to account for the follow-on distribution produces the 
They derive the Emphatic TD (ETD) algorithm that surprisingly achieves convergence \cite{yu2015onconvergence} without the need for a second set of weights, like those used by gradient TD methods. Like gradient TD methods, however, it seems that this algorithm also suffers from high variance due to importance sampling. Hallak et al.~\cite{hallak2015generalized} introduced a variant ETD that utilizes a scaling parameter $\beta$, which is meant to reduce the magnitude of the follow-on trace. Comparative empirical studies for ETD and ETD($\beta$), however,  have been limited. 

The most recent contribution to this line of work explores a mirror-prox approach to minimizing the MSPBE \cite{mahadevan2012sparse,mahadevan2014proximal,liu2015finite}. The main benefit of this work was that it enabled the first finite sample analysis of an off-policy TD-based method with function approximation, and the application of advanced stochastic gradient optimizations. Liu et al. \cite{liu2015finite} introduced two mirror-prox TD algorithms, one based on the GTD2 algorithm \cite{sutton2009fast} the other based on TDC \cite{sutton2009fast}\footnote{The GTD2 and TDC algorithms are gradient TD methods that do not use eligibility traces; $\lambda = 0$.} 
and showed that these methods outperform their base counter-parts on Baird's counterexample \cite{baird1995residual}, but did not extend their new methods with eligibility traces.

A less widely known approach to the off-policy prediction problem is based on algorithms that perform precisely TD updates when the data is sampled on-policy, and corrected gradient-TD style updates when the data is generated off-policy. The idea is to exploit the supposed superior efficiency of TD in on-policy learning, while maintaining robustness in the off-policy case. These ``hybrid" TD methods were introduced for state value-function based prediction \cite{maei2011gradient}, and state-action value-function based prediction \cite{hackman2012thesis}, but have not been extended to utilize eligibility traces, nor compared with the recent developments in linear off-policy TD learning (many developed since 2014).

Meanwhile a separate but related thread of algorithmic development has sought to improve the operation of eligibility traces used in both on- and off-policy TD algorithms. This direction is based on another nonequivalence observation: the update performed by the forward view variant of the conventional TD is only equivalent to its backward view update at the end of sampling trajectories. 
The proposed true-online TD (TO-TD) prediction algorithm \cite{vanseijen2014true}, and true-online GTD (TO-GTD) prediction algorithm \cite{vanhasselt2014off}
remedy this issue, and have been shown to outperform conventional TD and gradient TD methods respectively on chain domains.
The TO-TD algorithm requires only a modest increase in computational complexity over TD, however, the
TO-GTD algorithm is significantly more complex to implement and requires three eligibility traces compared to GTD. Nevertheless, both TO-TD and TO-GTD achieve linear complexity, and can be implemented in a completely incremental way.
%It turns out that this equivalence can be made on each time step by modifying the update target of the algorithm, producing a new algorithm called true online TD, which critically has been shown to outperform conventional TD with a modest increase in computation \citep{vanseijen2014true}. The off-policy variant of this algorithm which combines true online traces and GTD \citep{vanhasselt2014off}, is thought to be superior to GTD but is substantially more complex to implement and requires three eligibility traces and thus significantly increased computation (but still linear).    
  
Although there asymptotic convergence properties of many of these methods has been rigorously characterized, but empirically there is still much we do not understand about this now large collection of methods. A frequent criticism of gradient TD methods, for example, is that they are hard to tune and not well-understood empirically. It is somewhat disappointing that perhaps the most famous application of reinforcement learning---learning to play Atari games \cite{mnih2015human}---uses potentially divergent off-policy Q-learning. In addition, we have very little understanding of how these methods compare in terms of learning speed, robustness, and parameter sensitivity.   
By clarifying some of the empirical properties of these algorithms, we hope 
%to remedy this fact and
to promote more wide-spread adoption of these theoretically sound and computationally efficient algorithms.
%How do these new methods preform relative to each other, and how do the results change as we vary the policies, MDPs, and the features? Which methods have parameters that are less sensitive and thus easier to tune? Which methods are easier to implement and what are the computational demands in practice? 

This paper has two primary contributions. First, we introduce a novel extension of hybrid methods to eligibility traces resulting in two new algorithms,
HTD($\lambda$) and true-online HTD($\lambda$). Second, we provide an empirical study of TD-based prediction learning with linear function approximation. The conclusions of our experiments are surprisingly clear:
\begin{enumerate}[leftmargin=0.2cm,itemindent=.5cm,labelwidth=\itemindent,labelsep=-0.1cm,align=left,itemsep=-.2cm]
\item GTD($\lambda$) and TO-GTD($\lambda$) should be preferred if robustness to off-policy sampling is required \\ 
\item Between the two GTD($\lambda$) should be preferred if computation time is at a premium\\
\item Otherwise, TO-ETD($\lambda,\beta$) was clearly the best across our experiments except on Baird's counterexample.
%\item A relatively unknown method, PTD, performs well overall,
%balancing learning speed, computational efficiency and parameter sensitivity. 
%If robustness to off-policy sampling is desired, then HTD appears
%to be the next best choice. 
%\item TO-GTD, the best of the true-online methods, performed surprisingly well in both on-policy and off-policy settings,
%but exhibits unexpected parameter sensitivity. 
%where in general true-online methods provided a large benefit. In off-policy learning, the 
%\item TO-ETD and the MP methods performed poorly overall, indicating it is not yet clear how to use these more advanced methods (e.g., how to set the interest in TO-ETD). 
%\item if robustness of off-policy sampling is critical, HTD($\lambda$) and GTD($\lambda$) provides the best balance of sample and computational efficiency
%\item if more computation is available, the new true-online hybrid methods achieve the best overall performance across on- and off-policy learning tasks
\end{enumerate}

\section{Background}
This paper investigates the problem of estimating the discounted sum of future rewards {\em online} and with function approximation. In the context of reinforcement learning we take online to mean that the agent makes decisions, the environment produces outcomes, and the agent updates its parameters in a continual, real-time interaction stream. We model the agent's interaction as Markov decision process defined by a countably infinite set of states $\Ss$, a finite set of actions $\A$, and a scalar discount function $\gamma: \Ss \rightarrow \R$. The agent's observation of the current situation is summarized by the feature vector $\vx(S_t) \in \R^d$, where $S_t \in \Ss$ is the current state and $d\ll|\Ss|$. On each time step $t$, the agent selects an action according to it's {\em behavior policy} $A_t \sim \mu(S_t, \cdot)$, where $\mu: \Ss \times \A\rightarrow[0,1]$.
%and $\mu(s,a) \defeq {\bf pr}(A_t = a | S_t = s)$. 
The environment then transitions into a new state $S_{t+1} \sim P(S_t, A_t, \cdot)$, and emits a scalar reward $R_{t+1} \in \R$. The agent's objective is to evaluate a fixed {\em target policy} $\pi: \Ss \times \A\rightarrow[0,1]$, or estimate the expected return for policy $\pi$: %$G_t$,
\begin{align*}
v^\pi(s) &\defeq \E [G_t | S_t = s, A_t \sim \pi]\\ 
\text{ for return } G_t &\defeq \sum_{i=0}^\infty \left(\prod_{j=1}^{i} \gamma_{t+j}\right) R_{t+i + 1} \hspace{0.5cm} \triangleright \gamma_j \defeq \gamma(s_j)
.
\end{align*}
where $v^\pi(s)$ is called the {\em state-value function} for policy $\pi$.%: \Ss \times \A \rightarrow [0,1]$. 

All the methods evaluated in this study perform temporal difference updates, and most utilize eligibility traces. The TD($\lambda$) algorithm is the prototypical example of these concepts and is useful for understanding all the other algorithms discussed in the remainder of this paper. TD($\lambda$) estimates $v^\pi$ as a linear function of the weight vector $\qw \in \R^d$, where the estimate 
%$\hat{v,\qw}: \Ss \times \R^d \rightarrow \R$, 
is formed as an inner product between the weight vector and the features of the current state: 
%$v^\pi(s) \approx \hat{v}(s,\qw) = \qw^\top \vx(s) \in \R$
$\qw^\top \vx(s) \approx v^\pi(s)$. The algorithm maintains a memory trace of recently experienced features, called the eligibility trace $\e\in\R^d$, allowing updates to assign credit to previously visited states. The TD($\lambda$) algorithm requires linear computation and storage $O(d)$, and can be implemented incrementally as follows:
\begin{align*}
\delta_t &\leftarrow R_{t+1} + \gamma_{t+1} \qw_t^\top \vx(S_{t+1}) - \qw_t^\top \vx(S_{t})\\  
\e_t &\leftarrow \lambda_t \gamma_t \e_{t-1} + \fs(S_t)\\
\Delta \qw &\leftarrow \alpha\delta_t\e_t \hspace{2.0cm} \triangleright \qw_{t+1} \leftarrow \qw_t + \Delta \qw
.
\end{align*}

In the case when the data is generated by a behavior policy, $\mu$, 
with $\pi \neq \mu$, 
%that is different from the policy used to condition the value function, 
we say that the data is generated off-policy. In the off-policy setting we must estimate $v^\pi$ with samples generated by selecting actions according to $\mu$. This setting can cause the TD($\lambda$) algorithm to diverge. The GTD($\lambda$) algorithm solves the divergence issue by minimizing the MSPBE, resulting in a stochastic gradient descent algorithm that looks similar to TD($\lambda$), with some important differences. GTD($\lambda$) uses importance weights, $\rho_t \defeq \frac{\pi(s,a)}{\mu(s,a)}\in \R$ in the eligibility trace to reweight the data and obtain an unbiased estimate of $\E[G_t]$. Note, in the policy iteration case---not studied here---it is still reasonable to assume knowledge of $\pi(s,a)$ for all $s\in \Ss,a\in \A$; for example when $\pi$ is near greedy with respect to the current estimate of the state-action value function. The GTD($\lambda$) has a auxiliary set of learned weights, $\h \in \R^d$, in addition to the primary weights $\qw$, 
which maintain a quasi-stationary estimate of a part of the MSPBE.
%and these auxiliary weights are used to correct the update of the primary weights. 
Like the TD($\lambda$) algorithm, GTD($\lambda$) requires only linear computation and storage and can be implemented fully incrementally as follows: 
\begin{align*}\vspace{-1cm}
\delta_t &\leftarrow R_{t+1} + \gamma_{t+1} \qw_t^\top \vx(S_{t+1}) - \qw_t^\top \vx(S_{t})\\  
\e_t &\leftarrow \rho_t(\lambda_t \gamma_t \e_{t-1} + \fs(S_t)) \hspace{.75cm} \triangleright {\text{weighted by}~\rho_t}\\
\Delta \qw &\leftarrow \alpha\delta_t\e_t  - \underbrace{\alpha\gamma_{t+1}(1-\lambda_{t+1})(\e_t^\top \h_t) \vx(S_{t+1})}_{\text{correction term}}\\
\Delta \h &\leftarrow \alpha_\h[ \delta_t\e_t  - (\vx(S_{t})^\top \h_t) \vx(S_{t}) ] \hspace{.75cm} \triangleright {\text{auxiliary weights}}
\vspace{-1cm}
\end{align*}
The auxiliary weights also make use of a step-size parameter, $\alpha_\h$ which is usually not equal to $\alpha$.

Due to space constraints we do not describe the other TD-based linear learning algorithms found in the literature and investigated in our study. We provide each algorithm's pseudo code in the appendix, and in the next section describe two new off-policy, gradient TD methods, before turning to empirical questions.

\section{HTD derivation}
Conventional temporal difference updating can be more data efficient than gradient temporal difference updating, but the correction term used by gradient-TD methods helps prevent divergence.  
Previous empirical studies\cite{sutton2009fast} demonstrated situations (specifically on-policy) where linear TD(0) can outperform gradient TD methods, and others \cite{hackman2012thesis} demonstrated that Expected Sarsa(0) can outperform multiple variants of the GQ(0) algorithm, even under off-policy sampling. On the other hand, TD($\lambda$) can diverge on small, though somewhat contrived counterexamples.

The idea of hybrid-TD methods is to achieve sample efficiency closer to TD($\lambda$) during on-policy sampling, while ensuring non-divergence under off-policy sampling.
To achieve this, a hybrid algorithm could do conventional, uncorrected TD updates when the data is sampled on-policy, and use gradient corrections when the data is sampled off-policy. This approach was pioneered by Maei \citet{maei2011gradient}, leading to the derivation of the Hybrid Temporal Difference learning algorithm, or HTD(0). Later, Hackman\cite{hackman2012thesis} produced a hybrid version of the GQ(0) algorithm, estimating state-action value functions rather than state-value functions as we do here. In this paper, we derive the first hybrid temporal difference method to make use of eligibility traces, called HTD($\lambda$).
%using a derivation scheme similar to \citep{maei2011gradient}, thought the derivation is more involved. 

The key idea behind the derivation of HTD learning methods is to modify the gradient of the MSPBE to produce a new learning algorithm. Let $\E_\mu$ represent the expectation according to samples generated under the behavior policy,
$\mu$. The MSPBE\cite{sutton2009fast} can be written as 
\begin{equation*}\label{eq2}
 \MSPBE(\qw) 
%= \underbrace{\E_\mu[\delta_t\e_t]^{\top}}_{-A_\pi \qw + b_\pi}\underbrace{\E_\mu[\fs(S_t)\fs(S_t)^{\top}}_{C}]^\inv \E_\mu[\delta_t\e_t]
= \underbrace{\E_\mu[\delta_t\e_t]^{\top}}_{-A_\pi \qw + b_\pi}\underbrace{\E_\mu[\fs(S_t)\fs(S_t)^{\top}}_{C}]^\inv \E_\mu[\delta_t\e_t]
,
\end{equation*}
where %as $\mu(s_t,a_t) \rho_t = \pi(s_t,a_t)$ and 
$\e_t = \rho_t(\lambda_t \gamma_t \e_{t-1} + \fs(S_t))$ and
 \begin{align}
 %\fs^{\pi,\gamma}(s_t) = \sum_{a_t\in \mathcal{A}} \pi(s_t,a_t) \sum_{s_{t+1} \in \mathcal{S}} P(s_{t+1}|s_t,a_t) \gamma(s_{t+1})\fs(s_{t+1})\\
A_\pi &\defeq \E_\mu[ \e_t (\fs(S_t) - \gamma_{t+1} \fs(S_{t+1}) )^\top] \label{eq_tracesample}\\
 &=   \sum_{s_t \in \mathcal{S}}  \dmu(s_t)  \sum_{a_t \in \mathcal{A}} \underbrace{\mu(s_t,a_t) \rho_t }_{\pi(s_t,a_t)} (\gamma_t\lambda \E_\mu[\e_{t-1} | s_t] + \vx(s_t))  \nonumber \\
 &\hspace{1.4cm} \sum_{s_{t+1} \in \mathcal{S}} P(s_t,a_t,s_{t+1}) (\fs(s_{t}) - \gamma_{t+1} \fs(s_{t+1})) ^\top \nonumber
  %&=   \sum_{s_t \in \mathcal{S}}  \dmu(s_t)  \sum_{a_t \in \mathcal{A}} \sum_{s_{t+1} \in \mathcal{S}} \pi(s_t,a_t) P(s_t,a_t,s_{t+1})  \nonumber \\
 %&(\fs(s_{t}) - \gamma_{t+1} \fs(s_{t+1})) (\gamma_t\lambda \E_\mu[\e_{t-1} | s_t] + \vx(s_t))^\top
% \end{align}
% and similarly
% %
%\begin{align*}
%A_{\pi}  &=  \E_\mu[ \e_t (\fs(S_t) - \gamma_{t+1} \fs(S_{t+1}) )^\top] \\ 
\end{align}
 \begin{align}
b_{\pi} &\defeq  \E_\mu[R_{t+1}\e_t]  \nonumber\\
%&= \sum_{s_t \in \mathcal{S}}  \dmu(s_t)  \sum_{a_t \in \mathcal{A}} \sum_{s_{t+1} \in \mathcal{S}} \mu(s_t,a_t) P(s_t,a_t,s_{t+1}) \rho_t r_{t+1} (\gamma_t\lambda \E_\mu[\e_{t-1} | s_t] + \vx(s_t))\\
&= \sum_{s_t \in \mathcal{S}}  \dmu(s_t)  \sum_{a_t \in \mathcal{A}} \pi(s_t,a_t) (\gamma_t\lambda \E_\mu[\e_{t-1} | s_t] + \vx(s_t)) \nonumber\\ 
&\hspace{2.5cm}   \sum_{s_{t+1} \in \mathcal{S}} \pi(s_t,a_t) P(s_t,a_t,s_{t+1}) r_{t+1} \nonumber
.
%C&^{-1} = \E_\mu[\fs(S_t)\fs(S_t)^{\top}]^{-1}
\end{align}
%
 %the eligibility trace contains importance sampling weights $\rho_t$. 
% \begin{align*}
%\e_t &= \frac{\pi(A_t | S_t)}{\mu(A_t | S_t)}(\gamma_t\lambda\e_{t-1} + \vx(S_t)) && \triangleright  \rho_t = \frac{\pi(A_t | S_t)}{\mu(A_t | S_t)}
% \end{align*}
%The objective of the GTD($\lambda$) algorithm is to converge to a weight vector where the expected update is zero: 
%  \begin{align*}
% \E_\mu[\delta_t\e_t] &=  \sum_{s_t \in \mathcal{S}}  \dmu(s_t)  \sum_{a_t \in \mathcal{A}} \sum_{s_{t+1} \in \mathcal{S}} \mu(s_t,a_t) P(s_t,a_t,s_{t+1}) \\
% &(r_{t+1} + \gamma_{t+1} \fs(s_{t+1}) - \fs(s_t)) \rho_t (\gamma_t\lambda \E_\mu[\e_{t-1} | s_t] + \vx(s_t))\\
% &=  \sum_{s_t \in \mathcal{S}}  \dmu(s_t)  \sum_{a_t \in \mathcal{A}} \sum_{s_{t+1} \in \mathcal{S}} \pi(s_t,a_t) P(s_t,a_t,s_{t+1}) \\
% &(r_{t+1} + \gamma_{t+1} \fs(s_{t+1}) - \fs(s_t)) (\gamma_t\lambda \E_\mu[\e_{t-1} | s_t] + \vx(s_t))
% ,
% \end{align*}
% The expected value of the trace given the current state is defined recursively [TODO: why do we care]: 
% \begin{align*}
% \E_\mu[\e_{t-1} | s_t] &= \sum_{s_{t-1} \in \mathcal{S}} \sum_{a_{t-1} \in \mathcal{A}}  \mu(s_{t-1}, a_{t-1}) P(s_{t-1}, a_{t-1}, s_t)  \rho_{t-1} \\
% &(\gamma_{t-1}\lambda \E_\mu[\e_{t-2} | s_{t-1}] + \vx(s_{t-1}))\\
% &= \sum_{s_{t-1} \in \mathcal{S}} \sum_{a_{t-1} \in \mathcal{A}}  \pi(s_{t-1}, a_{t-1}) P(s_{t-1}, a_{t-1}, s_t) \\
% &(\gamma_{t-1}\lambda \E_\mu[\e_{t-2} | s_{t-1}] + \vx(s_{t-1}))
% .
% \end{align*}
 Therefore, the relative importance given to states in the MSPBE is weighted by the stationary distribution of
 the behavior policy, $d_\mu: \Ss \rightarrow \R$, (since it is generating samples), but the transitions
 are reweighted to reflect the returns that $\pi$ would produce. 
 
The gradient of the MSPBE is:
\begin{align}\label{eq2}
&-\frac{1}{2}\nabla_{\qw} \MSPBE(\qw) = -A^\top_\pi C^{-1} (-A_\pi\qw + b_\pi)
.
%&= \E_\mu[ (\fs(S_t) - \gamma_{t+1} \fs(S_{t+1}) )\e_t^\top]\E_\mu[\fs(S_t)\fs(S_t)^{\top}]^\inv \E_\mu[\delta_t\e_t]
\end{align}
%
%Because  $-A_{\pi}\qw_t +b_{\pi} = \E_\mu[\delta_t\e_t]$ (see \citep{maei2011gradient}), 
Assuming $A_{\pi}^{-1}$ is non-singular,
we get the TD-fixed point solution:
\begin{align}\label{eq3}
&0 = -\frac{1}{2}\nabla_{\qw} {\MSPBE}(\qw) \implies -A_{\pi} \qw + b_{\pi} = 0
%&\implies  \qw_t = A_{\pi}^{-1}b_{\pi}
. 
\end{align}
The value of $\qw$, for which \eqref{eq3} is zero, is the solution found by linear TD($\lambda$) and LSTD($\lambda$) where $\pi=\mu$. 
%It is not hard to see that the TD-fixed point is independent of $C^{-1}$.
The gradient of the MSPBE yields an incremental learning rule with the following general form (see \cite{bertsekas1996neuro}):
\begin{align}
\qw_{t+1} \leftarrow \qw_t + \alpha(M\qw_t + b), \label{eq_general}
\end{align}
where $M =  -A_{\pi}^\top C^{-1}A_{\pi}$ and $b = A^{\top}_{\pi}C^{-1} b_{\pi}$. 
The update rule, in the case of TD($\lambda$), will yield stable convergence if $A_\pi$ is positive definite (as shown by Tsitsiklis and van Roy \citet{tsitsiklis1997ananalysis}). In off-policy learning, we require $A_{\pi}^\top C^{-1}A_{\pi}$ to be positive definite to satisfy the conditions of the ordinary differential equation proof of convergence \cite{maei2010gq}, which holds because $C^{-1}$ is positive definite and therefore $A_{\pi}^\top C^{-1}A_{\pi}$ is positive definite, because $A_{\pi}$ is full rank (true by assumption). See Sutton et al. \citet{sutton2015anemphatic} for a nice discussion on why the $A_{\pi}$ matrix must be positive definite to ensure stable, non-divergent iterations.   
The $C$ matrix in Equation \eqref{eq3},  can be replaced by any positive definite matrix and the fixed point will be unaffected, but the rate of convergence will almost surely change.

Instead of following the usual recipe for deriving GTD, let us try replacing $C^{-1}$ with 
\begin{align*}
{A_{\mu}^{-\top}} \defeq  \E_{\mu}[(\fs(S_t)- \gamma_t\fs(S_{t+1})) {\emu_t}^{\top} ],
\end{align*}
where $\emu$ is the regular on-policy trace for the behavior policy (i.e., no importance weights)
\begin{align*}
\emu_t = \gamma_t\lambda\emu_{t-1} + \vx(S_t)
.
\end{align*}
%
%\begin{equation*}A_\mu \defeq \E_{\mu}[\e_t(\fs(S_t)- \fs^{\mu,\gamma}(S_{t}))^{\top} ],
%\end{equation*}
% \begin{equation*}A_\pi \defeq \E_{\mu}[\e_t(\fs(S_t)- \fs^{\pi,\gamma}(S_{t}))^{\top} ]. 
% \end{equation*}
The matrix ${A_{\mu}^{-\top}}$ is a positive definite matrix (proved by Tsitsiklis and van Roy~\citet{tsitsiklis1997ananalysis}).
Plugging ${A_{\mu}^{-\top}}$ into \eqref{eq2} results in the following expected update:
\begin{align} 
&\frac{1}{\alpha}\mathbb{E}[\Delta\qw_t] 
= A^{\top}_{\pi} {A_{\mu}^{-\top}} (-A_{\pi} \qw_t + b_\pi) \nonumber\\
&= (A_{\mu}^{\top} - A_{\mu}^{\top} + A_{\pi}^{\top}) {A_{\mu}^{-\top}} (-A_{\pi} \qw_t + b_\pi) \nonumber\\
&= (A_{\mu}^{\top} {A_{\mu}^{-\top}})(-A_{\pi} \qw_t + b_\pi) + (A_{\pi}^{\top} - A_{\mu}^{\top}){A_{\mu}^{-\top}} (-A_{\pi} \qw_t + b_\pi) \nonumber\\
&= (-A_{\pi} \qw_t + b_\pi) + (A_{\pi}^{\top} - A_{\mu}^{\top}) {A_{\mu}^{-\top}} (-A_{\pi} \qw_t + b_\pi) \nonumber\\
&= (-A_{\pi} \qw_t + b_\pi) \ + \label{htd_dev}\\
&\E_\mu \left[ \left( \fs(S_t) - \gamma_{t+1}\fs(S_{t+1}) \right) \left(\e_t - \emu_t \right)^\top\right] {A_{\mu}^{-\top}}(-A_{\pi} \qw_t + b_\pi) \nonumber
\end{align}
As in the derivation of GTD($\lambda$) \cite{maei2011gradient}, 
let the vector $\h_t$ form a quasi-stationary estimate of the final term,
%
%\begin{align*} 
$${A_{\mu}^{-\top}}(-A_{\pi} \qw_t + b_\pi)
.
$$ 
%\end{align*}
%
Getting back to the primary weight update, we can sample the first term using the fact that
$(-A_{\pi} \qw_t + b_\pi) = \E_\mu [\delta_t\e_t ]$ (see \cite{maei2011gradient}) and use \eqref{eq_tracesample}
to get the final stochastic update
\begin{align}
\Delta\qw_t \leftarrow \alpha\left(\delta_t \e_t + (\vx_t - \gamma_{t+1}\vx_{t+1})\left(\e_t - \emu_t \right)^\top \h_t\right)
. 
\label{HTD}
\end{align}
Notice that when the data is generated on-policy ($\pi = \mu$),
$\e_t = \emu_t$, and thus the correction term disappears and we are left with precisely linear TD($\lambda$). 
When $\pi \ne \mu$, the TD update is corrected as in GTD: 
unsurprisingly, the correction is slightly different but has the same basic form.
 
To complete the derivation, we must derive an incremental update rule for $\h_t$.
We have a linear system, because
\begin{align*} 
\h_t &= {A_{\mu}^{-\top}}(-A_{\pi} \qw_t + b_\pi) 
\implies A^{\top}_{\mu} \h_t = -A_{\pi} \qw_t + b_\pi.
 \end{align*}
Following the general expected update in \eqref{eq_general},
\begin{align} 
  \h_{t+1} \leftarrow \h_t + \alpha_\h \left((-A_{\pi} \qw_t + b_\pi) - A^{\top}_{\mu} \h_t\right) \label{eq_htd}
  \end{align}
  which converges if $A^{\top}_{\mu}$ is positive definite for any fixed $\qw_t$
  and $\alpha_\h$ is chosen appropriately (see Sutton et al.'s recent paper\cite{sutton2015anemphatic}
  for an extensive discussion of convergence in expectation).
% We can define the convex quadratic loss function
% $$L(\h) = \tfrac{1}{2}\h^\top A^{\top}_{\mu} \h - (-A_{\pi} \qw_t + b_\pi)^\top \h$$
% where the negative of the gradient is
% $$-\nabla L(\h) =  (-A_{\pi} \qw_t + b_\pi)^\top - A^{\top}_{\mu} \h.$$
%% giving the update
%%\begin{eqnarray*} 
%%\alpha_\h \left((-A_{\pi} \qw_t + b_\pi) - A^{\top}_{\mu} \h_t\right)
%% \end{eqnarray*}
%% where $\alpha_\h$ is a step-size.
%% %As in the update to TD, we want our expected error $-A_{\pi} \qw_t + b) - A^{\top}_{\mu} \h_t$ to be zero,
%% %giving
%%Now because
To sample this update, recall
\begin{align*} 
A^{\top}_{\mu} \h_t = \mathbb{E}_{\mu}[(\fs(S_t)-\fs(S_{t+1})){\emu_t}^{\top} ] \h_t 
 \end{align*}
giving stochastic update rule for $\h_t$:
\begin{align*} 
\Delta \h_t &\leftarrow \alpha_\h\left[ \delta_t\e_t  - (\vx_t - \gamma_{t+1} \vx_{t+1}){\emu_t}^\top \h_t \right]
.
 \end{align*}
 %which has a similar form to the TD($\lambda$) update where $R_{t+1} = \delta_t$.
 As in GTD, $\alpha\in\R$ and $\alpha_\h\in\R$ are step-size parameters, and $\delta_t \defeq R_{t+1} + \gamma_{t+1} \qw_t^{\top} \vx_{t+1} - \qw_t^{\top} \vx_t$. %. The eligibility trace is updated using $\e_t = \rho_t(\vx_t + \gamma_t\lambda \e_{t-1})$.
%  because$$ $$  
This hybrid-TD algorithm should converge under off-policy sampling using a proof technique similar to the one used for GQ($\lambda$) (see Maei \& Sutton's proof~\cite{maei2010gq}), but we leave this to future work. The HTD($\lambda$) algorithm is completely specified by the following equations:
\begin{align*} 
\e_t &\leftarrow \rho_t (\lambda_t \gamma_t \e_{t-1} + \vx_t) \\
\emu_t &\leftarrow \lambda_t \gamma_t \emu_{t-1} + \vx_t\\
\Delta \qw_t &\leftarrow \alpha\left[ \delta_t\e_t  + (\gamma_{t+1} \vx_{t+1} - \vx_t )(\emu_t - \e_t)^\top \h_t \right]\\
\Delta \h_t &\leftarrow \alpha_\h\left[ \delta_t\e_t  + (\gamma_{t+1} \vx_{t+1} - \vx_t){\emu}_t^\top \h_t \right]
 \end{align*}
 %
%For an efficient implementation, the update should be done in the following ordering
%%
%\begin{align*} 
%\e_t &= \rho_t (\lambda_t \gamma_t \e_{t-1} + \vx_t)\\
%\emu_t &= \lambda_t \gamma_t \emu_{t-1} + \vx_t\\
%s &= \e_t^\top \h_t\\
%s^{\mu} &= {\emu_t}^\top \h_t\\
%\Delta \qw_t &= \alpha\left[ \delta_t\e_t  + (\gamma_{t+1} \vx_{t+1} - \vx_t )(s^{\mu} - s) \right]\\
%\Delta \h_t &= \alpha_\h\left[ \delta_t\e_t  + (\gamma_{t+1} \vx_{t+1} - \vx_t) s^{\mu} \right]
% \end{align*}
This algorithm
 can be made more efficient by exploiting the
 common terms in $\Delta \qw_t$ and $\Delta \h_t$, as shown in the appendix. 
  %Notice that for $\pi = \mu$, we get that $(\e_t - \emu_t) = \zerovec$ and
 %so HTD($\lambda$) reduces to TD($\lambda$).
   
\section{True online HTD}
Recently, a new forward-backward view equivalence has been proposed for online TD methods,
resulting in true-online TD \cite{vanseijen2014true} and true-online GTD \cite{vanhasselt2014off} algorithms.
The original forward-backward equivalence was for offline TD($\lambda$)\footnote{The idea of defining a forward view objective and then converting this computationally impractical forward-view into an efficiently implementable algorithm using traces is extensively treated in Sutton and Barto's introductory text \cite{sutton1998reinforcement}.}.
To derive a forward-backward equivalence under online updating,
a new truncated return was proposed,
%The goal, then, was to derive true-online methods that are equivalent to
%an online $\lambda$-return algorithm, \textit{in the interim}. To do so, a new return,
%called a truncated return, was proposed, 
which uses the online weight vector that changes into the future,%\footnote{Note the time index $k$ on $\qw$.},
\begin{align*}
G_{k,t}^{\lambda,\rho} \defeq \rho_k (R_{k+1} + \gamma_{k+1} [ (1-\lambda_{k+1}) \vx_{t+1}^\top {\qw_k} + \lambda_{k+1} G_{k+1,t}^{\lambda,\rho}])
,
\end{align*}
with $G_{t,t}^{\lambda,\rho} \defeq \rho_t \vx_{t}^\top \qw_{t-1}$. 
A forward-view algorithm can be defined
that computes $\qw_{k}$ online assuming access to future samples, and then an exactly equivalent incremental backward-view
algorithm can be derived that does not require access to future samples. This framework was used to derive the TO-TD algorithm for the on-policy setting, and TO-GTD for the more general off-policy setting. 
This new true-online equivalence is not only interesting theoretically, but
also translates into improved prediction and control performance \cite{vanseijen2014true,vanhasselt2014off}.

In this section, we derive a true-online variant of HTD($\lambda$).
When used on-policy HTD($\lambda$) behaves similarly to TO-TD($\lambda$).
Our goal in this section is to combine the benefits of both hybrid learning and true-online traces in a single algorithm.
 We proceed with a similar derivation to TO-GTD($\lambda$) \cite[Theorem 4]{vanhasselt2014off},
with the main difference appearing in the update of the auxiliary weights.
Notice that the primary weights $\qw$, and the auxiliary weights $\h$, of HTD($\lambda$) have a similar structure.
Recall from \eqref{htd_dev}, the modified gradient of the MSPBE, or expected primary-weight update can be written as:
\begin{align*} 
\frac{1}{\alpha}\mathbb{E}[\Delta\qw_t] 
&= (-A_{\pi} \qw_t + b_\pi) \  \\
&+ \E_\mu \left[ \left( \fs(S_t) - \gamma_{t+1}\fs(S_{t+1}) \right) \left(\e_t - \emu_t \right)^\top\right] \h_t
\end{align*}
Similarly, we can rewrite the expected update of the auxiliary weights by plugging $A_{\mu}^\top$ into \eqref{eq_htd}:
\begin{align*} 
\frac{1}{\alpha_h}\mathbb{E}[\Delta\h_t] 
&= (-A_{\pi} \qw_t + b_\pi) \ \\
&+ \E_\mu \left[ \left( \fs(S_t) - \gamma_{t+1}\fs(S_{t+1}) \right) {\emu_t}^\top\right] \h_t
\end{align*}
%
%Therefore, the derivation simply follows the same procedure as true online GTD($\lambda$) \cite[Theorem 4]{vanhasselt2014off}.
As in the derivation of TO-GTD \cite[Equation 17,18]{vanhasselt2014off}), 
for TO-HTD we will sample the second part of the update using a backward-view
and obtain forward-view samples for $(-A_{\pi} \qw_t + b_\pi)$.
%Another trace is introduced to forward sample $E[\delta^{\rho\lambda} \vx_t]$.
%Here the main difference is that the auxiliary weights do not have the same LMS form
%as the auxiliary weights in GTD($\lambda$). For TO-GTD($\lambda$), the auxiliary weights could be completely sampled in a
%forward way; the primary weights, however, include one backward view sample (i.e, the $\gamma_{t+1} (1-\lambda_{t+1}) \h_t^\top \e_t \vx_{t+1}$, from  \cite[Equation 17,18]{vanhasselt2014off}).  
%In our setting, both the primary and secondary weights will have a backward view sample,
%and a component that is forward view.
%
The resulting TO-HTD($\lambda$) algorithm is completely specified by the following equations
\begin{align} 
\e_t &\leftarrow \rho_t (\lambda_t \gamma_t \e_{t-1} + \vx_t)\nonumber\\
\emu_t &\leftarrow \lambda_t \gamma_t \emu_{t-1} + \vx_t\nonumber\\
\e^o_t &\leftarrow \rho_t(\lambda_t \gamma_t \e^o_{t-1} + \alpha_t (1-\rho_t \gamma_t \lambda_t \vx_t^\top \e^o_{t-1}) \vx_t) \nonumber\\
\mathbf{d} &= \delta_t \e^o_t + (\e^o_t - \alpha_t \rho_t \vx_t)(\qw_t - \qw_{t-1})^\top \vx_t \label{eq_tohtd}\\
\qw_{t+1} &\leftarrow \qw_t + \mathbf{d}  + \alpha_t (\gamma_{t+1} \vx_{t+1} - \vx_t)(\emu_t - \e_t )^\top \h_t\nonumber\\
\h_{t+1} &\leftarrow  \h_t + \mathbf{d}  + \alpha_\h (\gamma_{t+1} \vx_{t+1} - \vx_t){\emu}_t^\top \h_t\nonumber
 \end{align}
 In order to prove that this is a true-online update, we use the constructive theorem due to van Hasselt et al.~\cite{vanhasselt2014off}.
 \begin{theorem}[True-online HTD($\lambda$)]
 For any $t$, the weight vectors $\qw_t^t, \h_t^t$ as defined by the forward view 
 \begin{align*}
 \qw_{k+1}^t &= \qw_k^t + \alpha_k(G_{k,t}^{\lambda,\rho} - \rho_k \vx_t^\top \qw_k^t)\vx_k \\
 & \hspace{1.0cm}+ \alpha_k (\vx_t - \gamma_{t+1} \vx_{t+1})(\e_t - \emu_t)^\top \h_k^t
\end{align*}
 \begin{align*}
  \h_{k+1}^t &= \h_k^t + \alpha_{h,k}(G_{k,t}^{\lambda,\rho} - \rho_k \vx_t^\top \qw_k^t)\vx_k \\
 & \hspace{1.0cm}+ \alpha_{h,k} (\vx_t - \gamma_{t+1} \vx_{t+1}){\emu_t}^\top \h_k^t
 \end{align*}
 are equal to $\qw_t$, $\h_t$ as defined by the backward view in \eqref{eq_tohtd}.
 \end{theorem}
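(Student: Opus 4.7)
The plan is to apply the constructive forward-to-backward equivalence theorem of van Hasselt et al.\ \cite{vanhasselt2014off}, which underlies the TO-GTD($\lambda$) derivation, after decomposing each forward-view increment into two pieces: a truncated $\lambda$-return piece of exactly the form treated by that theorem, and an additive hybrid-correction piece. Because the forward-view updates for $\qw^t_k$ and $\h^t_k$ are linear in the weight vectors, the contributions of the two pieces may be analyzed independently and summed.

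First, I would isolate the truncated-return pieces
\begin{align*}
\alpha_k (G_{k,t}^{\lambda,\rho} - \rho_k \vx_t^\top \qw_k^t)\vx_k \quad\text{and}\quad \alpha_{h,k} (G_{k,t}^{\lambda,\rho} - \rho_k \vx_t^\top \qw_k^t)\vx_k.
\end{align*}
These are exactly the forward-view increments analysed in \cite[Theorem 4]{vanhasselt2014off}; applying the constructive theorem produces the dutch-style trace $\e^o_t \leftarrow \rho_t(\lambda_t \gamma_t \e^o_{t-1} + \alpha_t (1-\rho_t \gamma_t \lambda_t \vx_t^\top \e^o_{t-1}) \vx_t)$ together with the backward increment $\mathbf{d} = \delta_t \e^o_t + (\e^o_t - \alpha_t \rho_t \vx_t)(\qw_t - \qw_{t-1})^\top \vx_t$ that appears in \eqref{eq_tohtd}. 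Since the $\qw$ and $\h$ updates share the same $\vx_k$-aligned $\lambda$-return term and differ only in step-size, a single invocation of the theorem produces the common $\mathbf{d}$ appearing in both lines of \eqref{eq_tohtd}.

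Next, I would handle the hybrid-correction pieces by induction on $t$. Assuming $\qw_{t-1}^{t-1} = \qw_{t-1}$ and $\h_{t-1}^{t-1} = \h_{t-1}$ (the backward-view quantities), the goal is to show that stepping the forward view out to horizon $t$ contributes precisely the additional terms $\alpha_t (\gamma_{t+1} \vx_{t+1} - \vx_t)(\emu_t - \e_t)^\top \h_t$ and $\alpha_h (\gamma_{t+1} \vx_{t+1} - \vx_t){\emu}_t^\top \h_t$ appearing in \eqref{eq_tohtd}. The interim sequences $\qw_0^t,\ldots,\qw_t^t$ and $\qw_0^{t-1},\ldots,\qw_{t-1}^{t-1}$ differ only through (a) the lengthened truncation of $G_{k,t}^{\lambda,\rho}$ relative to $G_{k,t-1}^{\lambda,\rho}$, which is handled by the constructive theorem via $\mathbf{d}$, and (b) the newly applicable hybrid correction at step $t$, which is linear in $\h_t$ and collapses to the advertised single increment.

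The main obstacle is the coupling between the two forward views: the update of $\qw_{k+1}^t$ uses $\h_k^t$, which is itself being produced by its own forward view. This coupling must be unwound carefully. It is benign in principle, since the correction is linear in $\h_k^t$ and does not feed back into the $\lambda$-return, so the induction can be carried jointly on the pair $(\qw_k^t, \h_k^t)$. The bulk of the effort will therefore be in setting up the joint inductive hypothesis and invoking van Hasselt et al.'s constructive theorem on the properly decomposed increments; once this bookkeeping is in place, matching the expressions in \eqref{eq_tohtd} reduces to direct substitution.
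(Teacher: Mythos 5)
Your proposal is correct and takes essentially the same route as the paper: the paper likewise invokes van Hasselt et al.'s constructive Theorem~1 separately for $\qw$ and $\h$, supplying the truncated return as $Y_k^t$ and passing the hybrid-correction terms in as the theorem's additive vectors $\mathbf{g}_{w,k}$ and $\mathbf{g}_{h,k}$, so that the dutch trace $\e^o_t$ and the common increment $\mathbf{d}$ fall out exactly as you describe. The only difference is that Theorem~1 already absorbs an arbitrary additive term in the forward view, so your separate induction on the correction piece (and the worry about the $\qw$--$\h$ coupling) is extra bookkeeping rather than a distinct idea.
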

 \begin{proof}
 We apply \cite[Theorem 1]{vanhasselt2014off}.
 The substitutions are
\begin{align*}
&\eta_t = \rho_t \alpha_t\\
&\mathbf{g}_{w,k} = \alpha_k (\vx_k - \gamma_{k+1} \vx_{k+1})(\e_k - \emu_k)^\top \h_k\\
&\mathbf{g}_{h,k} = \alpha_{h,k} (\vx_k - \gamma_{k+1} \vx_{k+1}){\emu_k}^\top \h_k\\
&Y_t^t = \qw_{t-1}^\top \vx_t\\
&Y_k^t = R_{k+1} + \gamma_{k+1}(1-\lambda_{k+1} \rho_{k+1}) \qw_k^\top \vx_{k+1} + \gamma_{k+1} \lambda_{k+1} G_{k+1,t}^{\lambda,\rho}
 \end{align*}
where $\mathbf{g}_{w,k}$ is called $\vx_k$ in van Hasselt's Theorem 1~\cite{vanhasselt2014off}. The proof then follows through in the same way as in van Hasselt's Theorem 4~\cite{vanhasselt2014off}, where we apply Theorem 1
to $\qw$ and $\h$ separately.
 \end{proof}
 
Our TO-HTD(0) algorithm is equivalent to HTD(0), but TO-HTD($\lambda$) is not equivalent to TO-TD($\lambda$) under on-policy sampling.
To achieve the later equivalence, replace $\delta_t \defeq R_{t+1} + \gamma_{t+1}\qw_{t}^\top\vx_{t+1} + \qw_{t-1}^\top\vx_{t}$ and $\mathbf{d} \defeq \delta_t \e^o_t - \alpha_t \rho_t \vx_t(\qw_t - \qw_{t-1})^\top \vx_t$. We opted for the first equivalence for two reasons. 
In preliminary experiments, TO-HTD($\lambda$) described in Equation \eqref{eq_tohtd} already exhibited similar
performance compared to TO-TD($\lambda$), and so designing for the second equivalence was unnecessary. Further, TO-GTD($\lambda$) was derived to ensure equivalence between TO-GTD(0) and GTD(0); this choice, therefore, better parallels that equivalence. 
 
 Given our two new hybrid methods, and the long list of existing linear prediction algorithms we now focus on how these algorithms perform in practice. 
\section{Experimental study}
Our empirical study focused on three main aspects: (1) early learning performance with different feature representations, (2) parameter sensitivity, and, (3) efficacy in on and off-policy learning.
The majority of our experiments were conducted on random MDPs (variants of those used in previous studies\cite{mahmood2015off, geist2014off}). Each random MDP contains 30 states, and three actions in each state. From each state, and for each action, the agent can transition to one of four next states, assigned randomly from the entire set without replacement. Transition probabilities for each MDP instance are randomly sampled from $[0,1]$ and the transitions were normalized to sum to one. The expected reward for each transition is also generated randomly in $[0,1]$ and the reward on each transition was sampled without noise. Two transitions are randomly selected to terminate: $\gamma(s_i,s_j) = 0$ for $i\ne j$. 
%encoded with a $\gamma(S_t,S_{t+1}) = 0$. 
Each problem instance is held fixed during learning. 

We experimented with three different feature representations. The first, a {\em tabular} representation where each state is represented with a binary vector with a single one corresponding the current state index. This encoding allows perfect representation of the value function with no generalization over states. The second representation is computed by taking the tabular representation and {\em aliasing} five states to all have the same feature vector, so the agent cannot differentiate these states. These five states were selected randomly without replacement for each MDP instance. The third representation is a dense {\em binary} encoding where the feature vector for each state is the binary encoding of the state index, and thus the feature vector for a 30 state MDP requires just five components. Although the binary representation appears to exhibit an inappropriate amount of generalization,
 we believe it to be more realistic that a tabular representation, because access to MDP state is rare in real-world domains (e.g., 
 a robotic with continuous sensor values). The binary representation should be viewed as an 
 approximation to the poor, and relatively low-dimensional (compared to the number of states in the world) representations common in real applications. All feature encoding we normalized. Experiments conducted with the binary representation use $\gamma=0.99$, and the rest use $\gamma=0.9$.

To generate policies with purposeful behavior, we forced the agent to favor a single action in each state. The target policy is generated by randomly selecting an action and assigning it probability 0.9 (i.e., $\pi(s,a_i)=0.9$) in each state, and then assigning the remaining actions the remaining probability evenly. In the off-policy experiments the behavior policy is modified to be slightly different than the target policy, by selecting the same base action, but instead assigning a probability of 0.8 (i.e., $\mu(s,a_i)=0.8$) . This choice ensures that the policies are related, but guarantees that $\rho_t$ is never greater than 1.5 thus avoiding inappropriately large variance due to importance sampling\footnote{See the recent study by Mahmood \& Sutton \cite{mahmood2015off} for an extensive treatment of off-policy learning domains with large variance due to importance sampling.}. 

Our experiment compared 12 different linear complexity value function learning algorithms, including: GTD($\lambda$), HTD($\lambda$), true-online GTD($\lambda$), true-online HTD($\lambda$), true-online ETD($\lambda$), true-online ETD($\lambda,\beta$), PTD($\lambda$), GTD2 - mp($\lambda$), TDC - mp($\lambda$), linear off-policy TD(0), TD($\lambda$), true- online TD($\lambda$). The later two being only applicable in on-policy domains, and the two mirror-prox methods are straight-forward extensions (and described in the appendix) of the GTD2-mp and TDC-mp methods \cite{mahadevan2014proximal} to handle traces ($\lambda>0$). We drop the $\lambda$ designation of each method in the figure labels to reduce clutter. 

%\subsection{Learning performance}
Our results were generated by performing a large parameter sweep, averaged over many independent runs, for each random MDP instance, and then averaging the results over the entire set of MDPs. We tested 14 different values of the step-size parameter $\alpha \in \{0.1\times 2^j|j=-8,-7,...,6\}$, seven values of $\eta \in\{2^j|j=-4,-2,-1,0,1,2,4\}$ ($\alpha_\h \defeq \alpha\eta$), and 20 values of $\lambda =$ $\{0,0.1,...,0.9,0.91,...,1.0\}$. We intentionally precluded smaller values of $\alpha_\h$ from the parameter sweep because many of the gradient TD methods simply become their on-policy variants as $\alpha_\h$ approaches zero, whereas in some off-policy domains values of $\alpha_\h > \alpha$ are required to avoid divergence \cite{white2015thesis}. We believe this range of $\eta$ fairly reflects how the algorithms would be used in practice if avoiding divergence was a priority. The $\beta$ parameter of TO-ETD($\lambda,\beta$) was set equal to $0.5\gamma_t$. Each algorithm instance, defined by one combination of $\alpha,\eta$, and $\lambda$ was evaluated using the mean absolute value error on each time step,
\begin{align*}
\epsilon_t \defeq \sum_{s\in\Ss}d_\mu(s)\left|\frac{\vx(s)^\top\qw_t - V^*(s)}{V^*(s)}\right|,
\end{align*} 
averaged over 30 MDPs, each with 100 runs. Here $V^*:\Ss\rightarrow\R$ denotes the {\em true} state-value function, which can be easily computed with access to the parameters of the MDP.
%\footnote{Which are, of course, unobservable to the learning algorithms.}.
%The results for each problem instance were averaged over 30 randomly generated MDPs and all the data logged for offline analysis. 
%Figure \ref{rand_on} displays the 

The graphs in Figures \ref{figure_offpolicy} and \ref{figure_onpolicy} include 
(a) learning curves with $\alpha,\eta$, and $\lambda$ selected to minimize the mean absolute value error, for each of the three different feature representations, and
%Figure \ref{rand_on} also contains the 
(b) parameter sensitivity graphs for $\alpha,\eta$, and $\lambda$, in which the mean absolute value error is plotted against the parameter value, while the remaining two parameters are selected to minimize mean absolute value error. 
These graphs are included across feature representations, for on and off-policy learning.
Across all results the parameters are selected to optimize performance over the last half of the experiment to ensure stable learning throughout the run.

% and for Baird's counterexample.
%Figure \ref{rand_off} contains the analogous graphs for the off-policy random MDPs.   

To analyze large variance due to importance sampling and off-policy learning we also investigated Baird's counterexample \cite{baird1995residual}, a simple MDP that causes TD learning to diverge. This seven state MDP uses a target policy that is very different from the behavior policy, a feature representation that allows perfect representation of the value function, but also causes inappropriate generalization. We used the variant of this problem described by Maei \cite{maei2011gradient} and White \cite[Figure 7.1]{white2015thesis}. 
%In this task the MSPBE is often used as a performance measure, however, for completeness we 
We present results with the root mean squared error
\footnote{In this counterexample the mean absolute value error is not appropriate because the optimal values for this task are zero. The MSPBE is often used as a performance measure, but the MSPBE changes with $\lambda$; for completeness, we include results with the MSPBE in the appendix.},
$$
\epsilon_t \defeq \sum_{s\in\Ss}d_\mu(s)\left(\vx(s)^\top\qw_t - V^*(s) \right)^2,
$$
in Figure \ref{figure_offpolicy}.
%Figure \ref{baird} contains the learning curves, and parameter sensitivity graphs for Baird's counter example with performance optimized over the last 1000 steps. 
The experiment was conducted in the same way was the random MDPs, except we did not average over MDPs---there is only one---and we used different parameter ranges. We tested 11 different values of the step-size parameter $\alpha \in \{0.1\times2^j|j=-10,-9,...,-1,0\}$, 
12 values of $\eta \in \{2^j | j = -16, -8, \ldots, -2,-1,0,1,2,\ldots,32\}$ %$\eta = 2^{-16,-8,...,-2,-1,0,1,2,4,...,16,32}$ 
($\alpha_\h \defeq \alpha\eta$), and the same 20 values of $\lambda$. We did not evaluate TD(0) on this domain because the algorithm will diverge and that has been shown many times before.
%In this task, due to the excessive noise due to large importance sampling corrections ($\max \rho = 7$)

%\subsection{Computational efficiency versus learning speed}
In addition to performance results in Figures \ref{figure_offpolicy} and \ref{figure_onpolicy}, Table \ref{table_runtimes} summarizes the runtime comparison
for these algorithms.
%The algorithms compared above, 
Though the algorithms are all linear in storage and computation, they do differ in both implementation and runtime,
particularly due to true-online traces. The appendix contains several plots of runtime verses value error illustrating the trade-off between computation and sample complexity for each algorithm.
  Due to space constraints, we have included the aliased tabular representation results for on-policy learning in the appendix, 
since they are similar to the tabular representation results in on-policy learning.
%In an effort to gauge the trade-off between learning efficiency and computation we revisited two of the problem settings investigated above: on-policy random MDPs with tabular features, and off-policy random MDPs with tabular features. Table \ref{table_runtimes} summarizes the runtime comparison
%for these algorithms. 
%Additional figures in the appendix give a notion of runtime trade-offs, depending on available computation.

\begin{table*}[htbp!]
\scriptsize    
\vspace{-0.5cm}
\begin{center}
  \begin{tabular}{c| c | c | c | c | c | c | c | c | c | c  | c | c}
    \hline
    & TD(0) & TD($\lambda$) & TO-TD & PTD & GTD & TO-ETD & TO-ETD($\beta$) & HTD & TO-GTD & GTD-MP & TDC-MP & TO-HTD\\ 
    \hline
   \hspace{-0.3cm}On-policy & 120.0 & 132.7 & 150.1 & 172.4 & 204.6 & 287.8 & 286.0 & 311.8 & 366.2 & 467.4 & 466.2 & 466.0\\
       \hline
   \hspace{-0.3cm}Off-policy & 108.3 & - & - & 158.7 & 175.2 & 249.65  & 254.7 & 267.5 & 316.2 & 407.8 & 395.7 & 403.3\\
   \hline
  \end{tabular}
  \vspace{-0.6cm}
\end{center}
  \caption{Average runtime in microseconds for 500 steps of learning, averaged over 30 MDPS, with 100 runs each, with 30-dimensional tabular features.}\label{table_runtimes}
\end{table*}

\newcommand{\gwidth}{0.333\textwidth}

%\onecolumn
%\begin{figure*}
%\begin{tabular}{cc}
%   \includegraphics[width=\gwidth]{} &   \includegraphics[width=\gwidth]{} \\
% (a) Learning curves & (b) Alpha sensitivity \\[6pt]
%  \includegraphics[width=\gwidth]{} &   \includegraphics[width=\gwidth]{} \\
% (c) Beta sensitivity & (d) Lambda senstivity \\[6pt]
% %\multicolumn{2}{c}{\includegraphics[width=\gwidth]{final_figs/onPolicySparseBinaryAlphaPro.pdf} }\\
% %\multicolumn{2}{c}{(e) fifth}
% \end{tabular}
% \caption{old experiments}
% \end{figure*}

%\newcommand{\addlabel}[1]{\hspace{-2.0cm}$\begin{array}{c} \\ \\ #1 \\ \end{array}$}
\newcommand{\addlabel}[1]{\hspace{-2.0cm}\parbox{0.5cm}{\vspace{-5.0cm}$#1$}\hspace{-2.0cm}}

\begin{figure*}
\centering
%\begin{adjustwidth}{-2cm}{}
\hspace*{-1.2cm}
\begin{tabular}{ccc}
% Learning curves
  \vspace{-.2cm}
Tabular features & Aliased Tabular features &  Binary features \\
  \vspace{-.3cm}
\includegraphics[width=\gwidth]{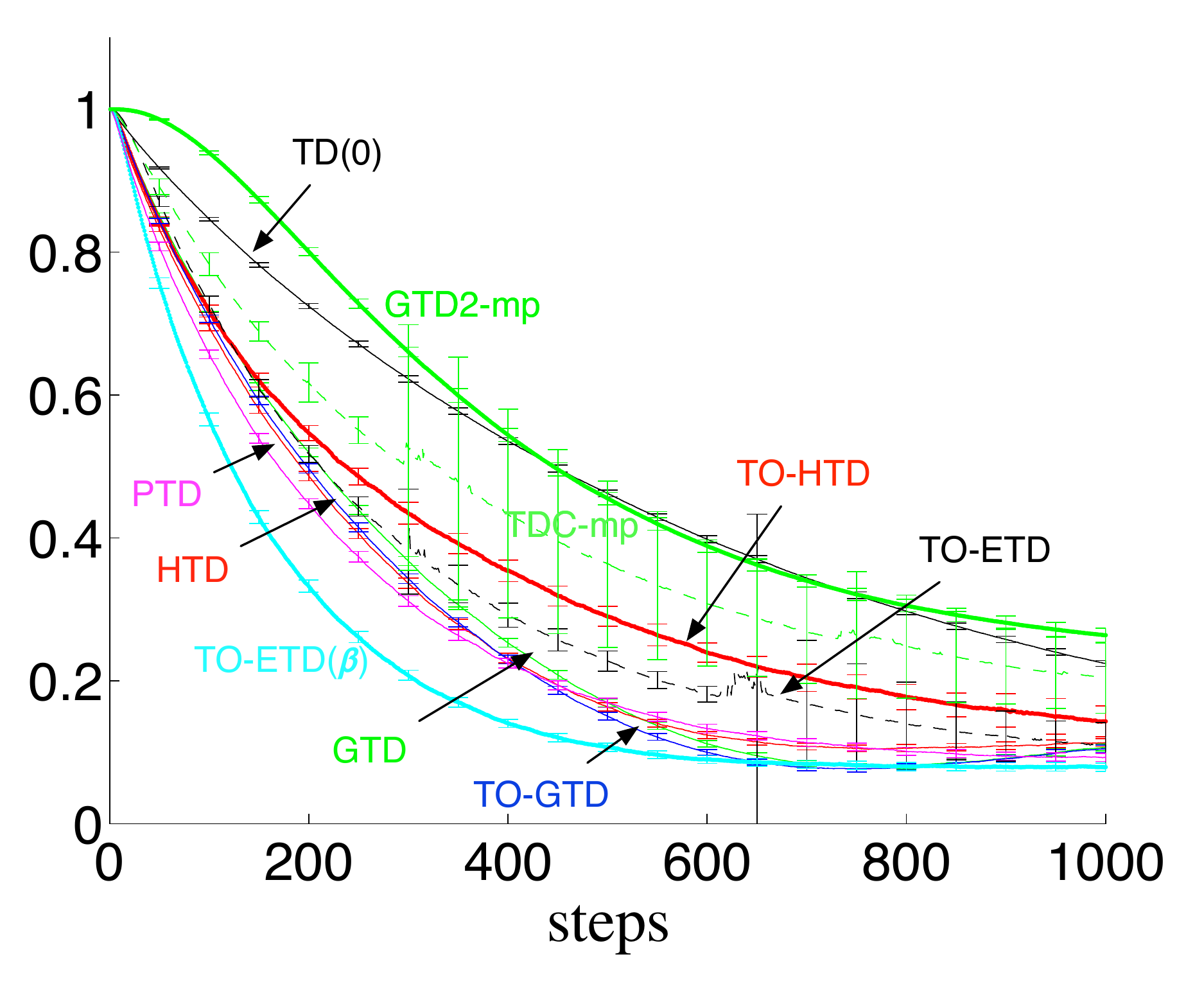} &   \includegraphics[width=\gwidth]{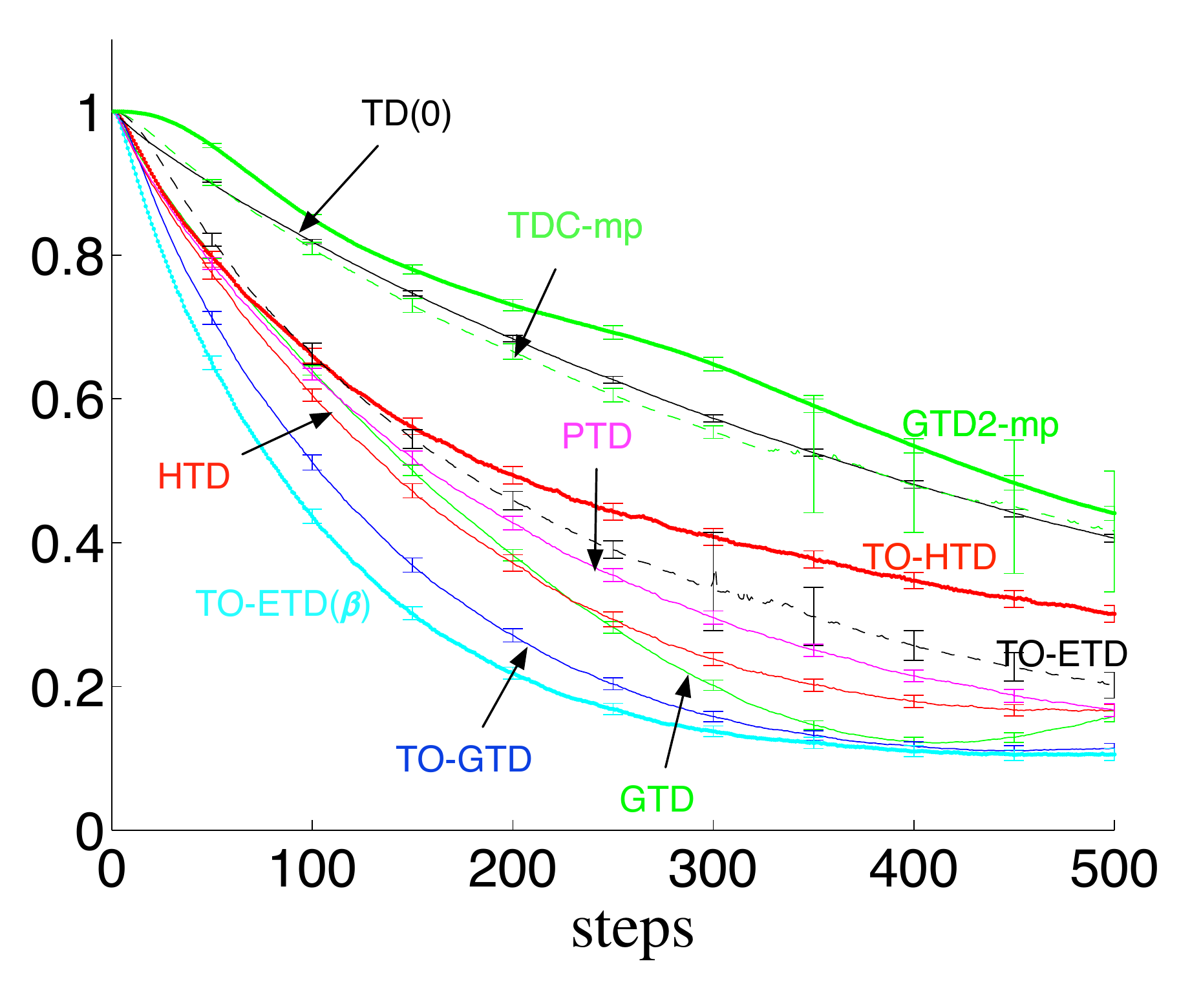} & \includegraphics[width=\gwidth]{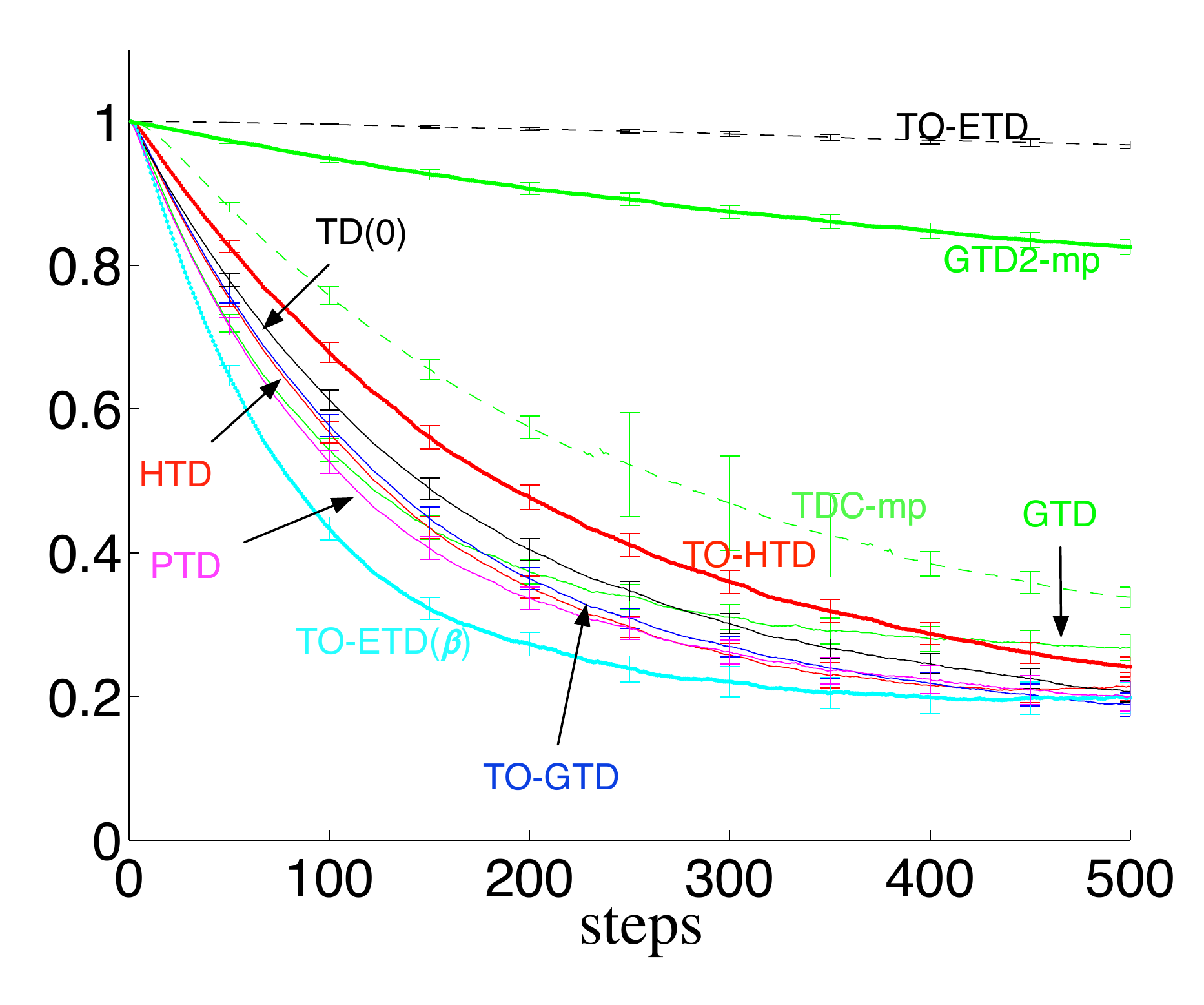} \\
  \vspace{-.3cm}
   % Alpha sensitivity
\includegraphics[width=\gwidth]{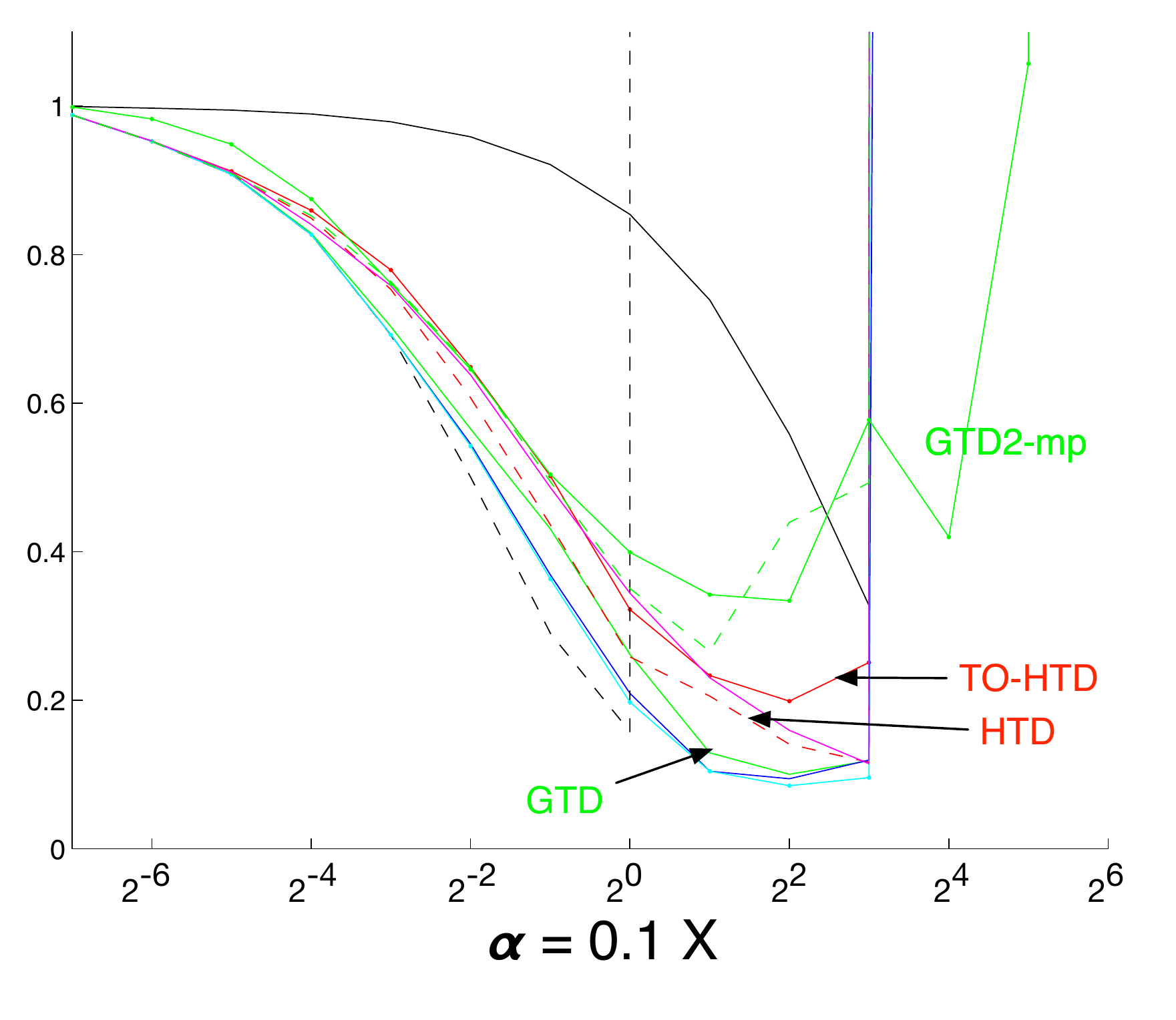} &   \includegraphics[width=\gwidth]{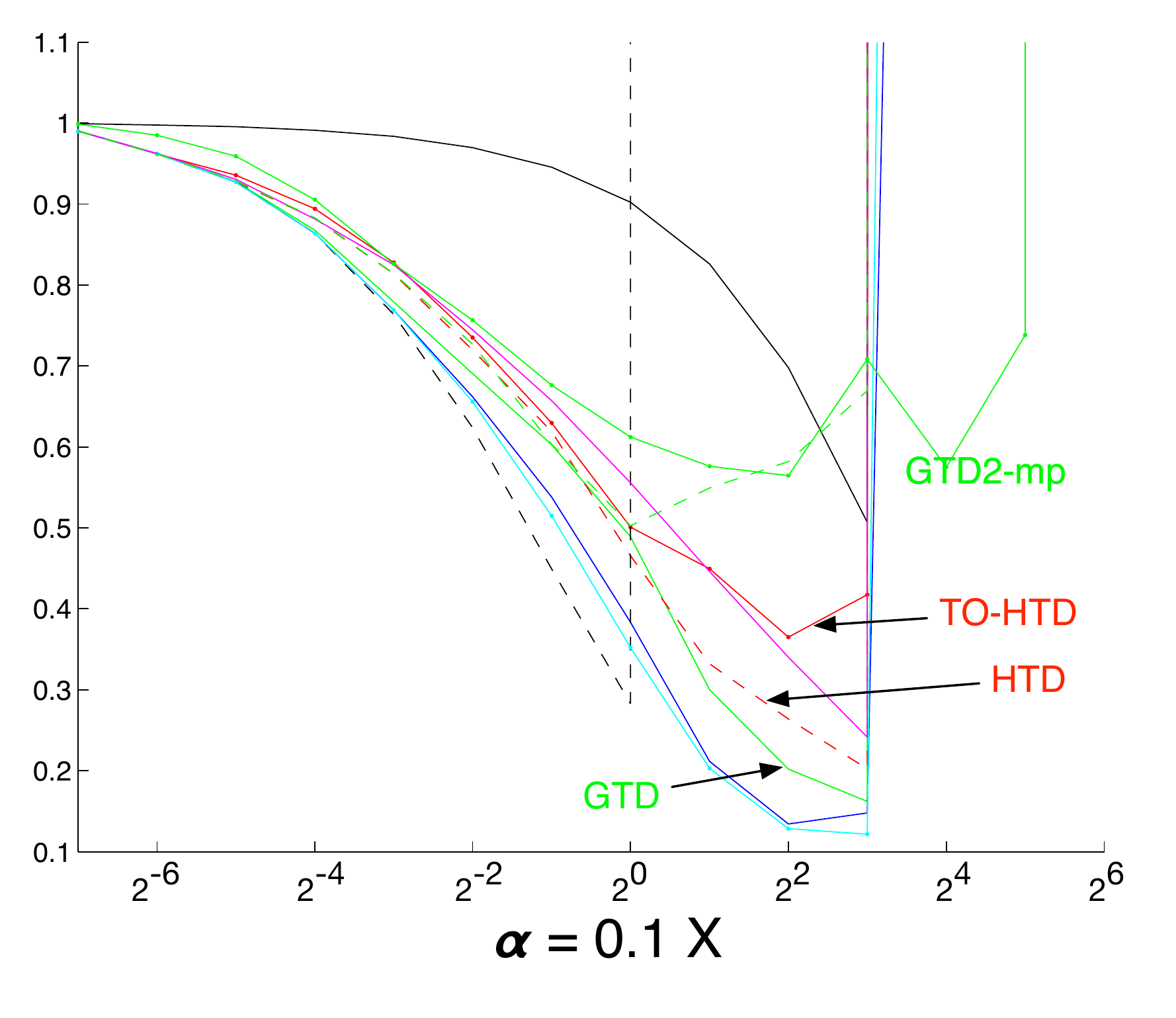} & \includegraphics[width=\gwidth]{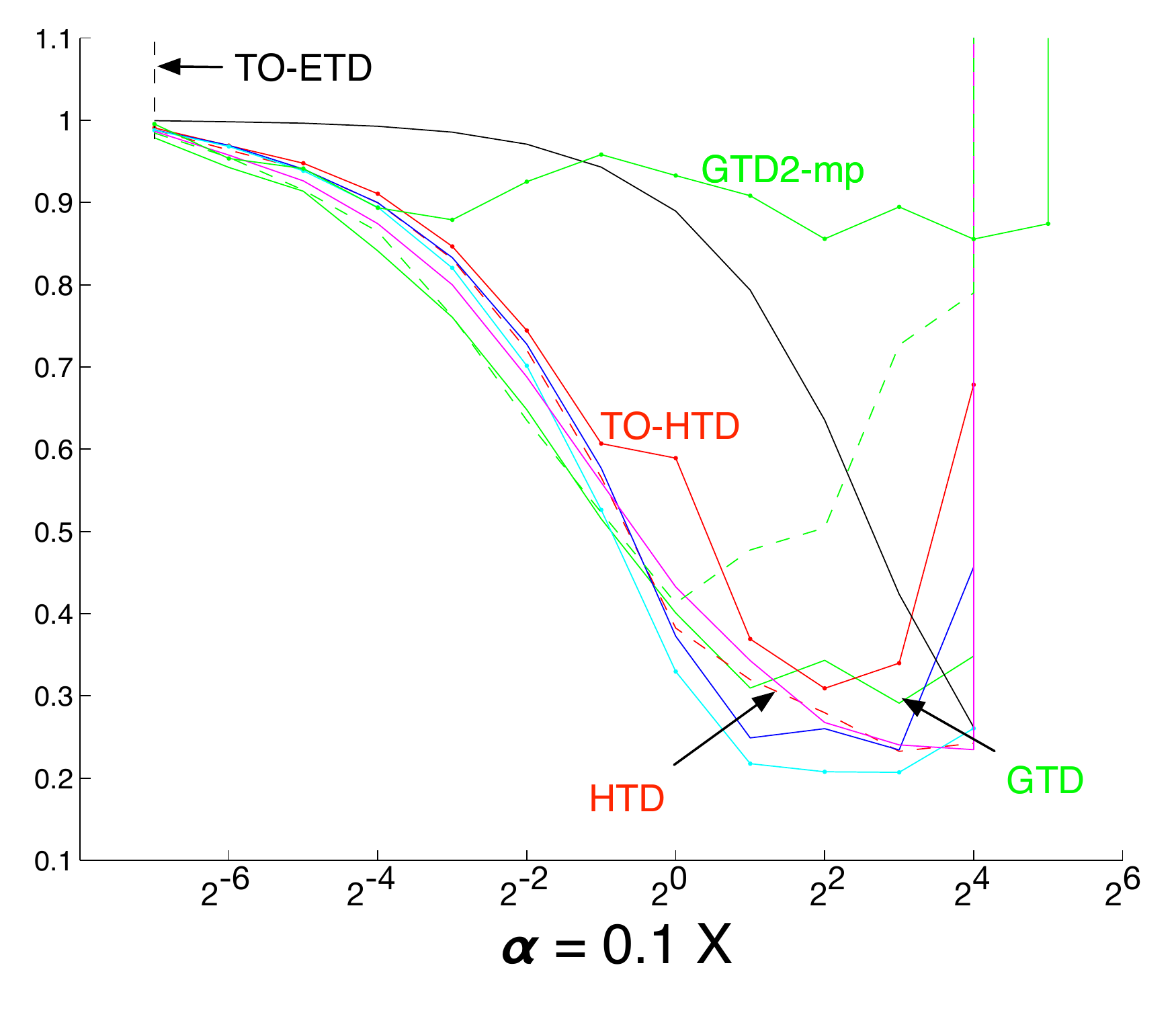} \\
  % Beta sensitivity
    \vspace{-.3cm}

\includegraphics[width=\gwidth]{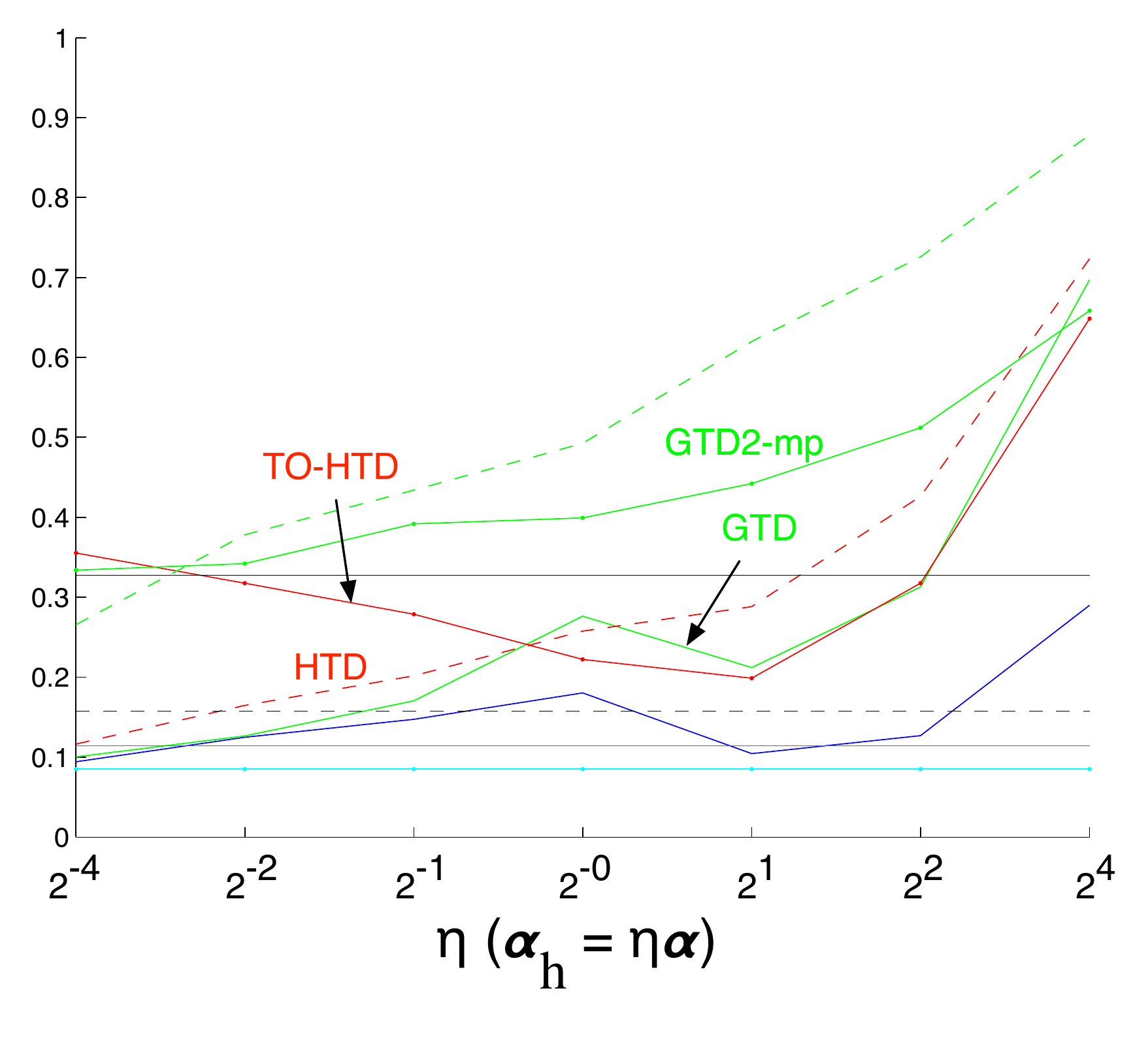} &   \includegraphics[width=\gwidth]{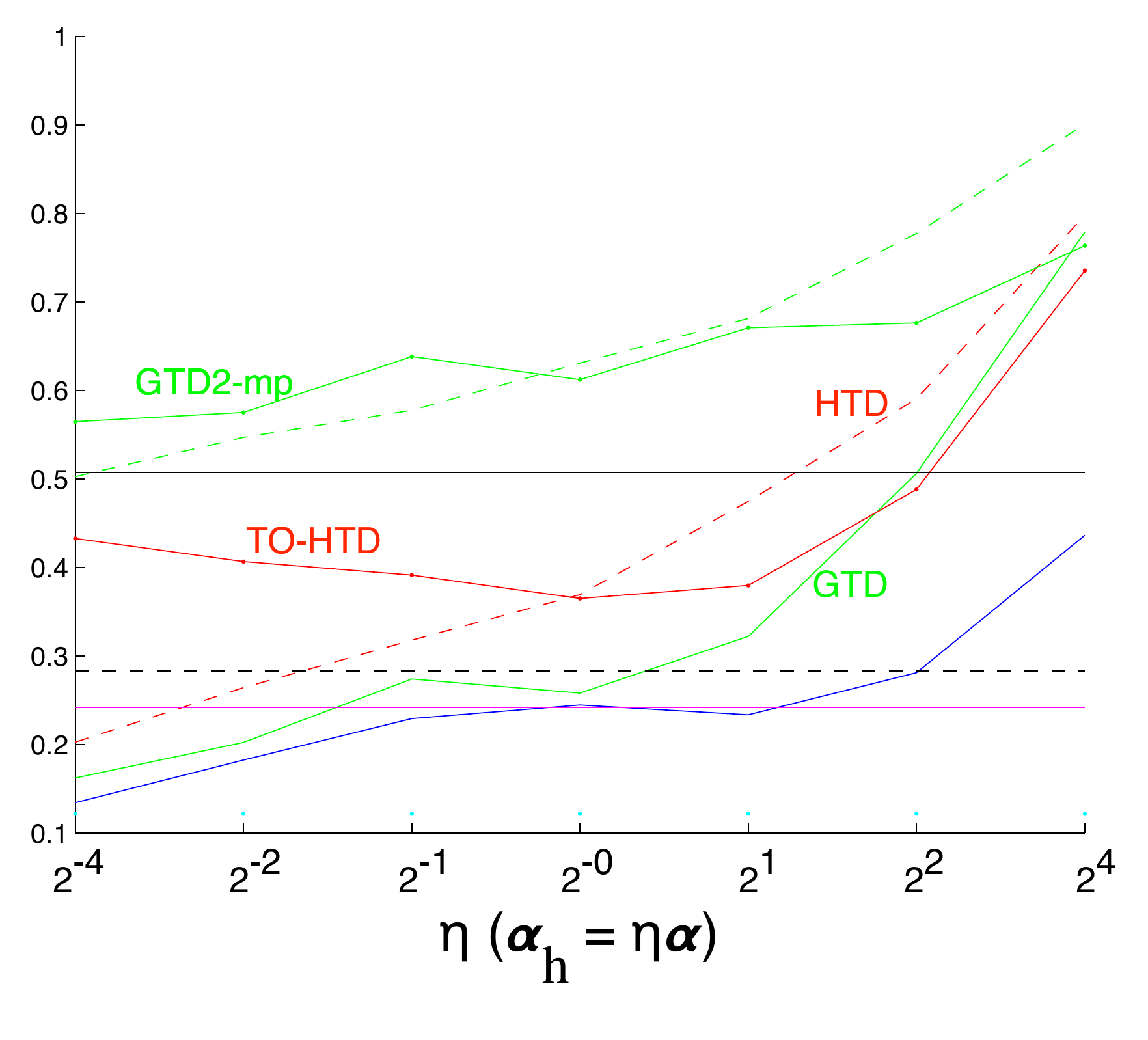} & \includegraphics[width=\gwidth]{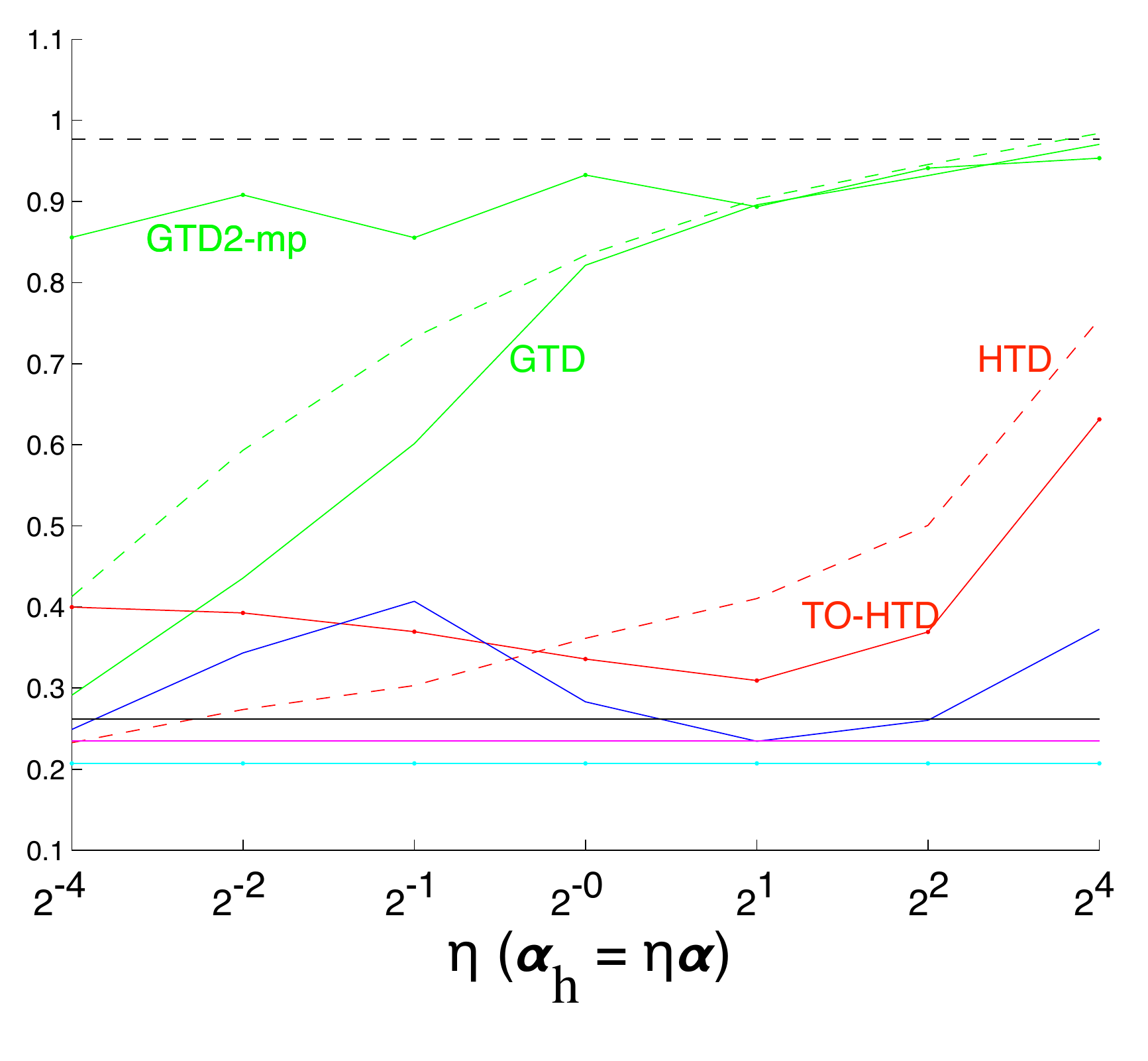} \\
   % Lambda sensitivity
  \includegraphics[width=\gwidth]{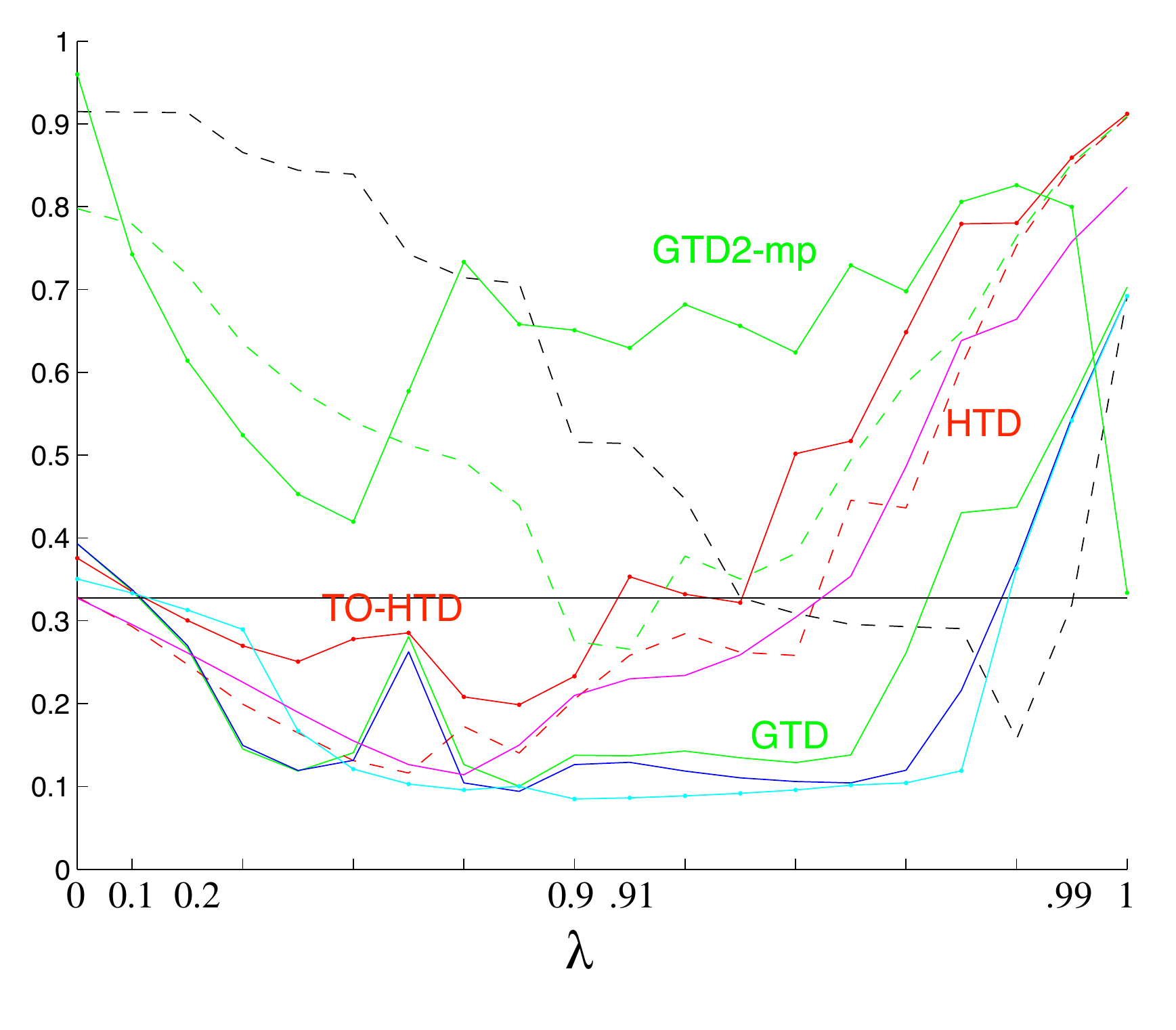} &   \includegraphics[width=\gwidth]{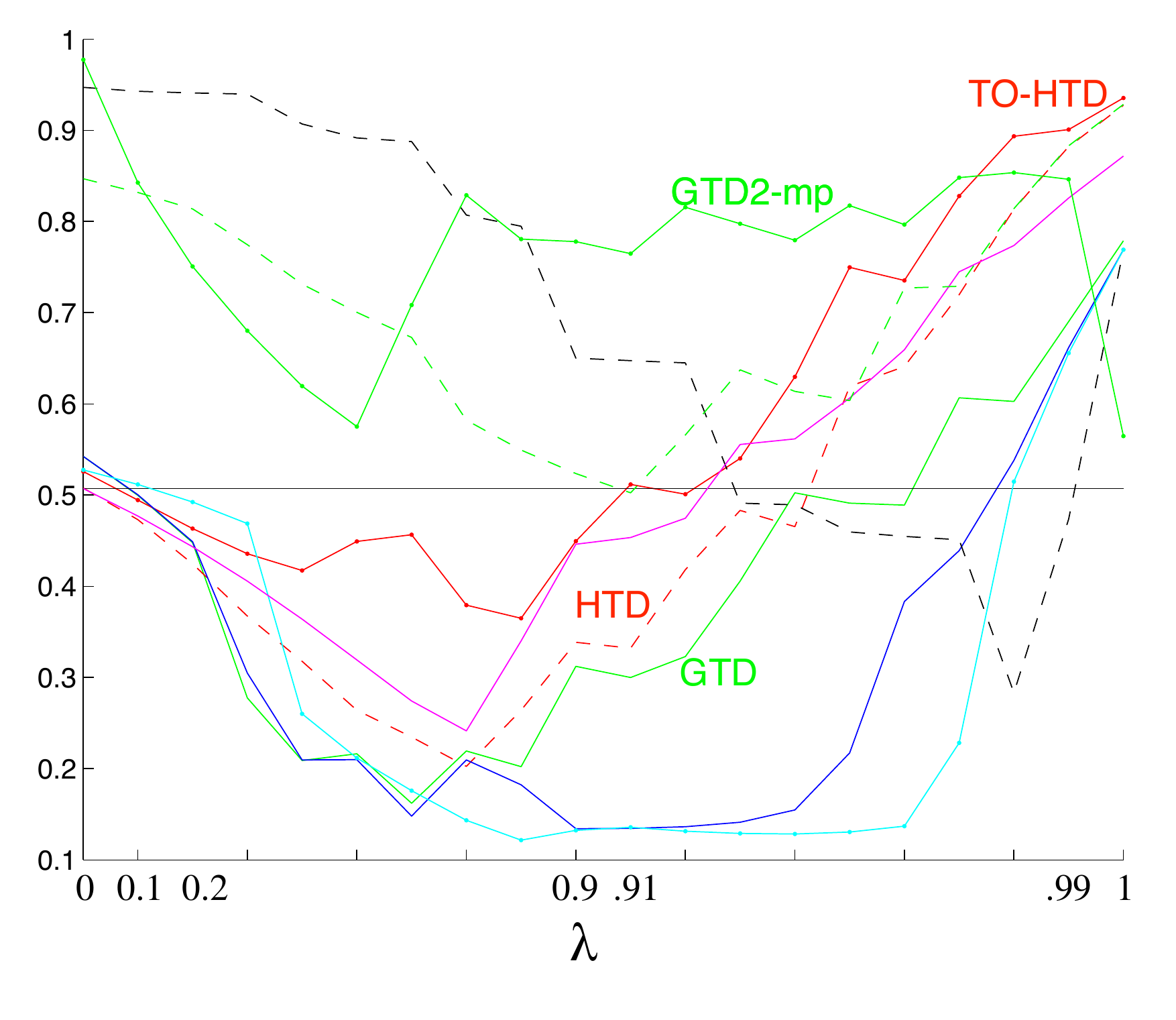} & \includegraphics[width=\gwidth]{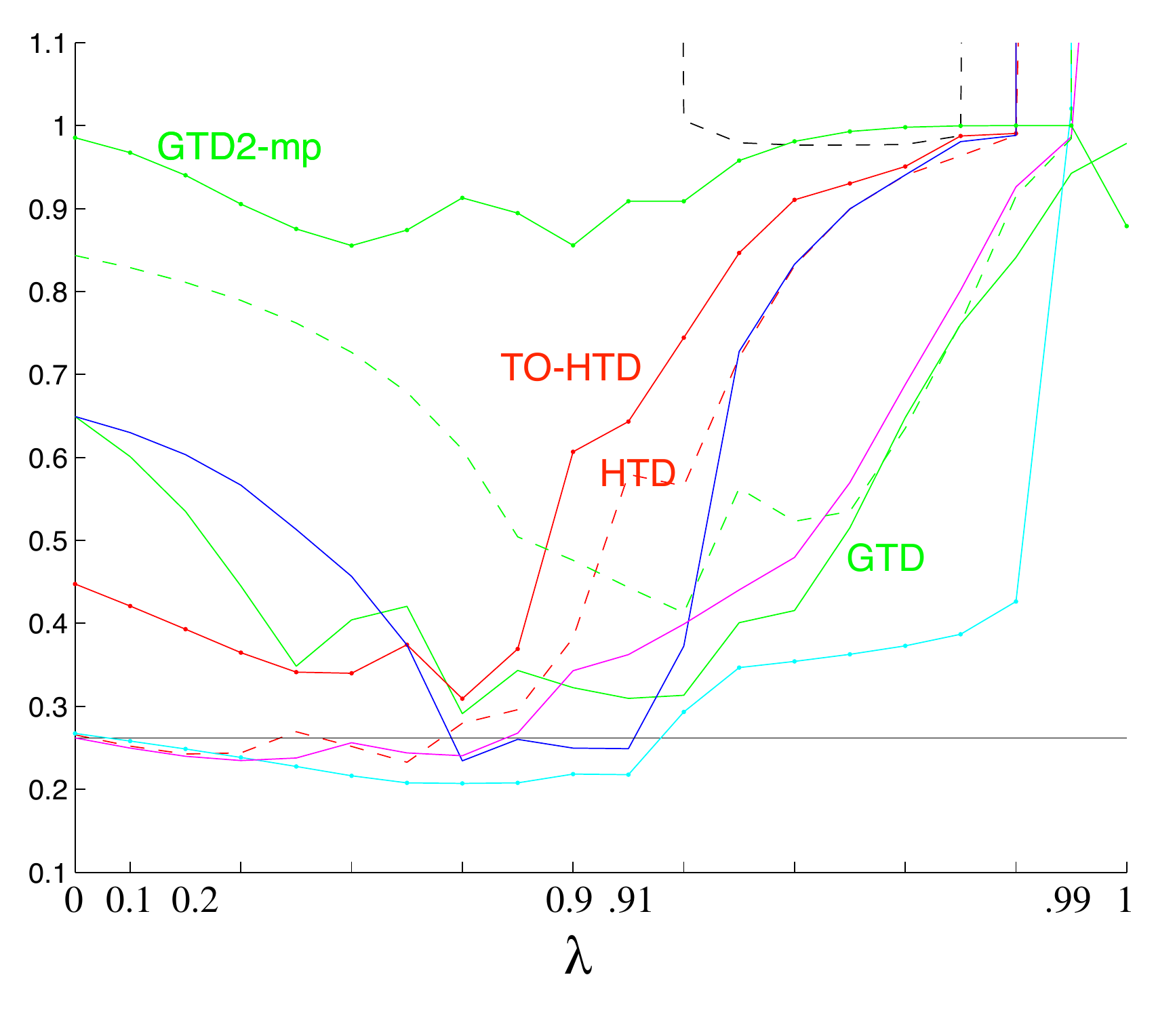} \\
 \end{tabular}
% \end{adjustwidth}
  \vspace{-.5cm}
 \caption{ \textbf{Off-policy} performance on random MDPs with three different representations. All plots report mean absolute value error averaged over 100 runs and 30 MDPs. The plots are organized in columns left to right corresponding to tabular, aliased, and binary features. The plots are organized in rows from top to bottom corresponding to learning curves, $\alpha$,  $\eta$, and $\lambda$ sensitivity. The error bars are standard errors ($s/\sqrt(n)$) computed from 100 independent runs.}\label{figure_offpolicy}
 \end{figure*}
 
 \begin{figure*}
\centering
\hspace*{-1.2cm}
\begin{tabular}{ccc}
% Learning curves
Tabular features & Binary features  & Baird's counterexample\\
\includegraphics[width=\gwidth]{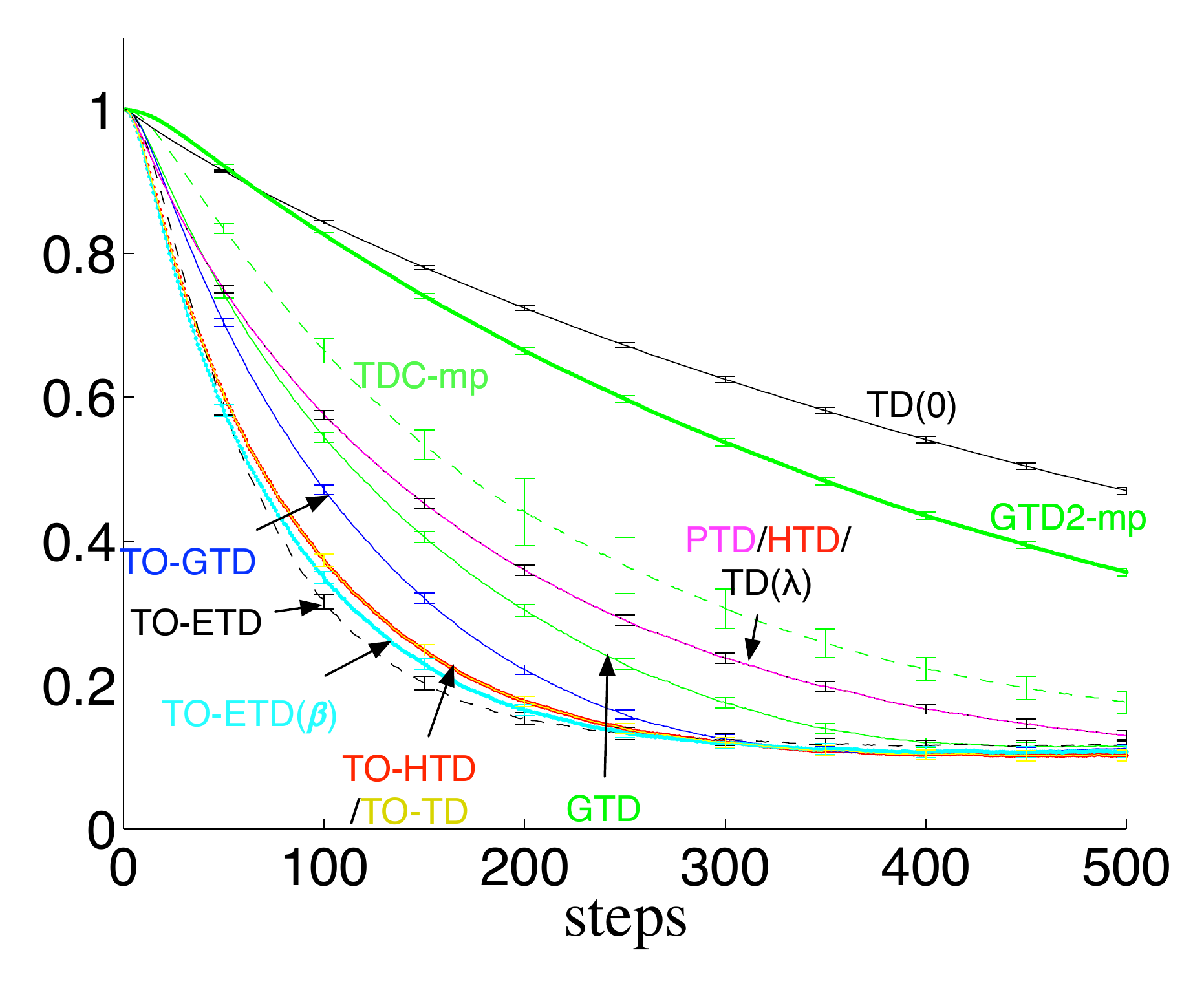} & \includegraphics[width=\gwidth]{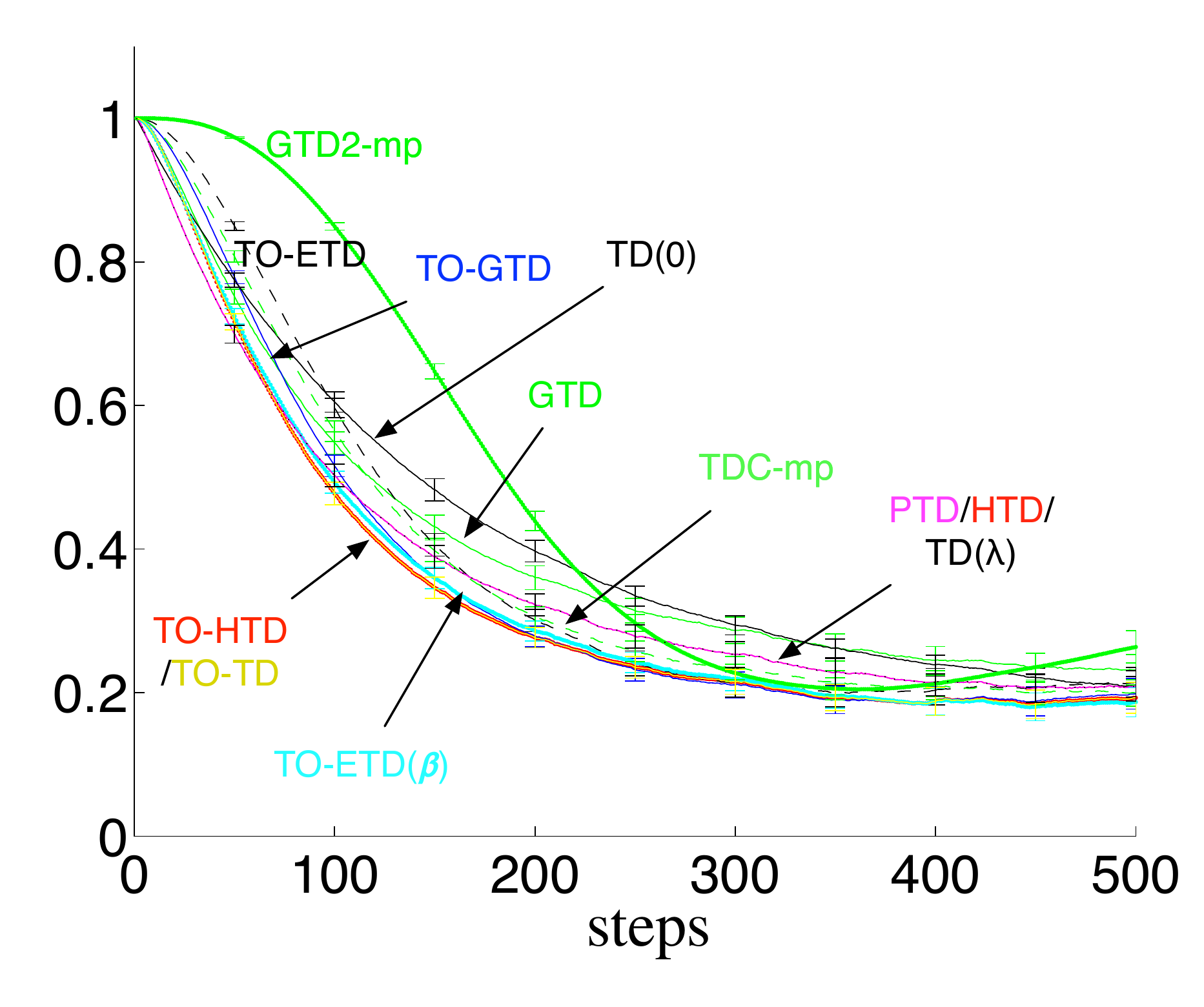} &    \includegraphics[width=\gwidth]{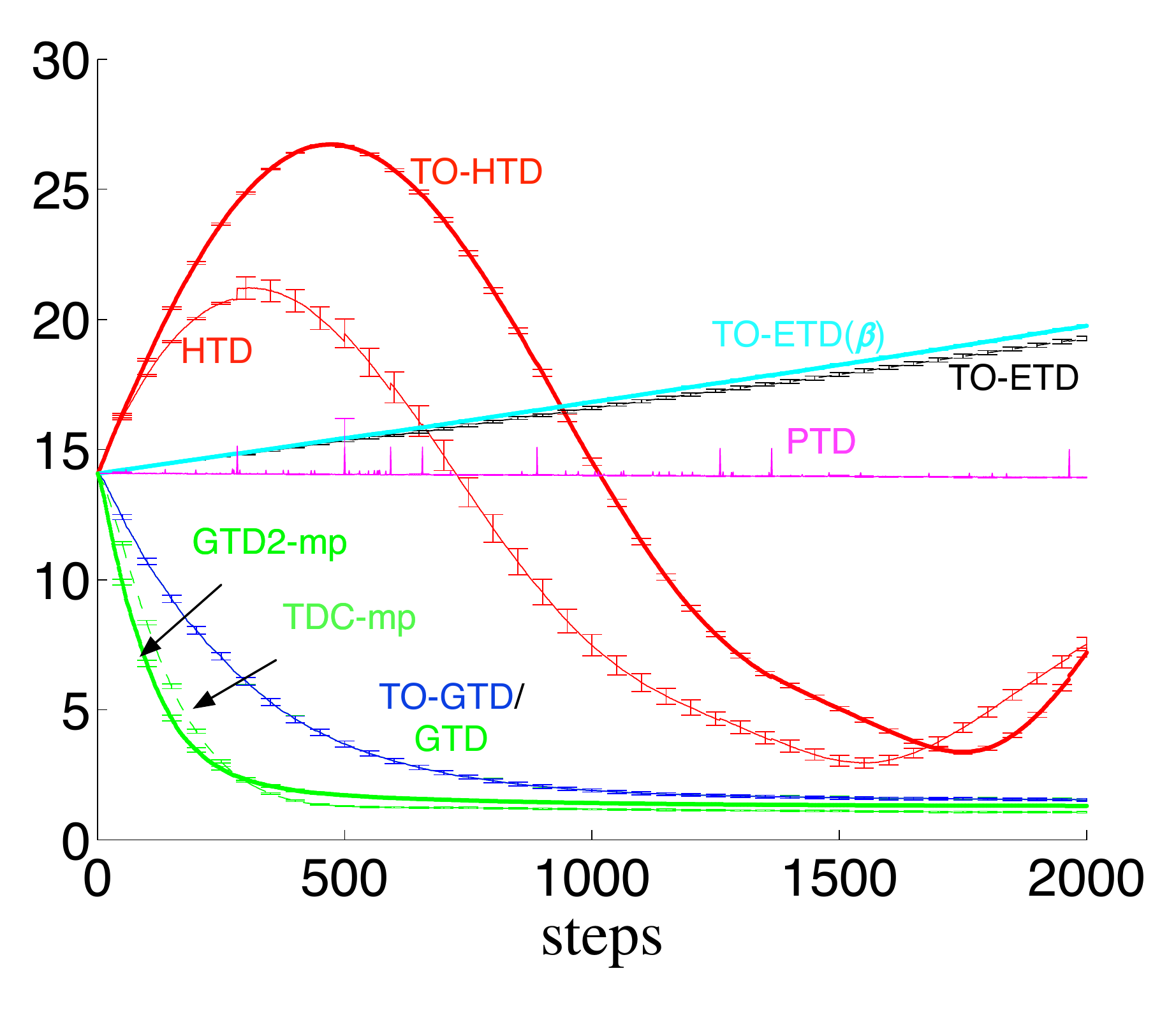}\\
   % Alpha sensitivity
\includegraphics[width=\gwidth]{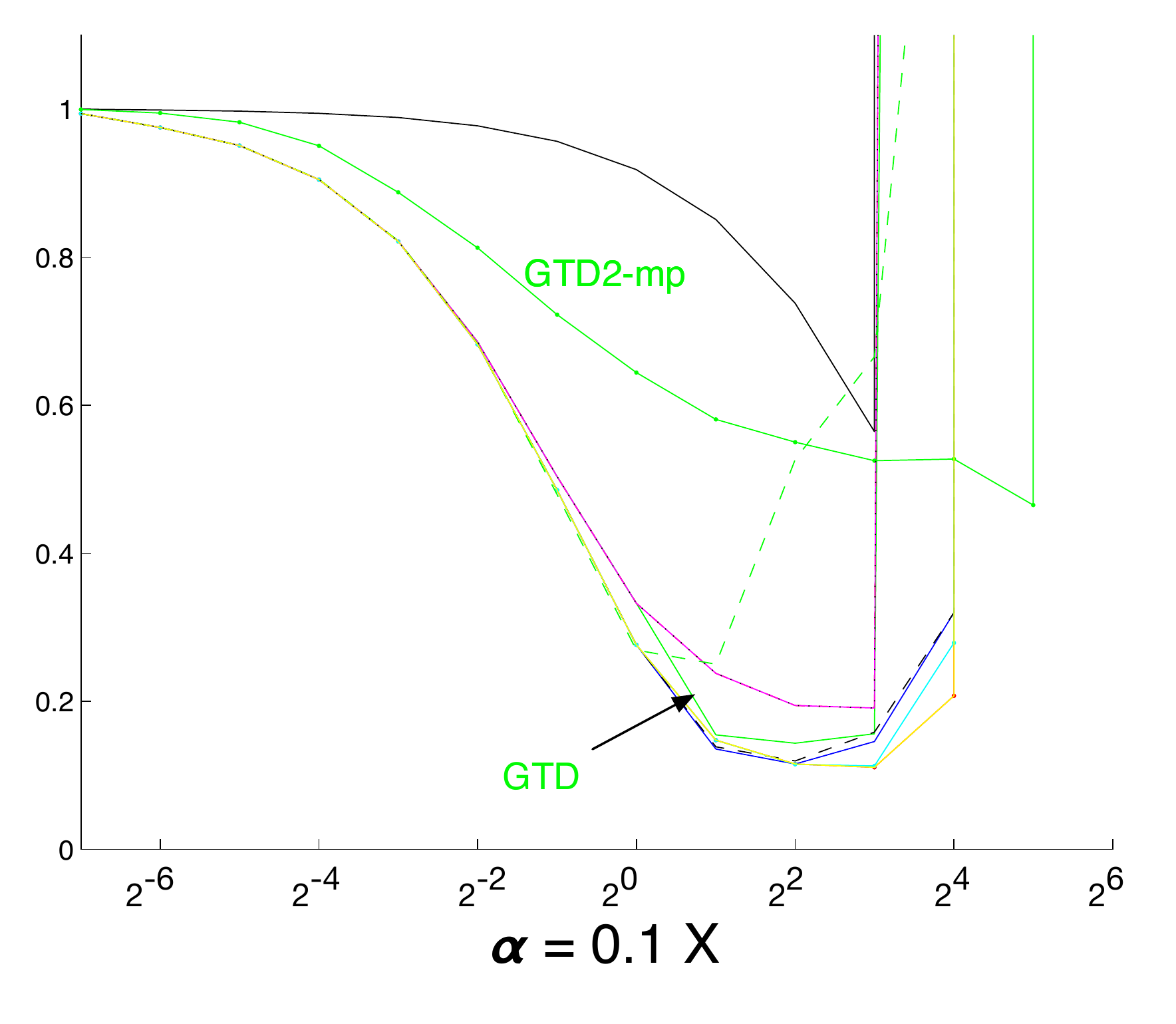} & \includegraphics[width=\gwidth]{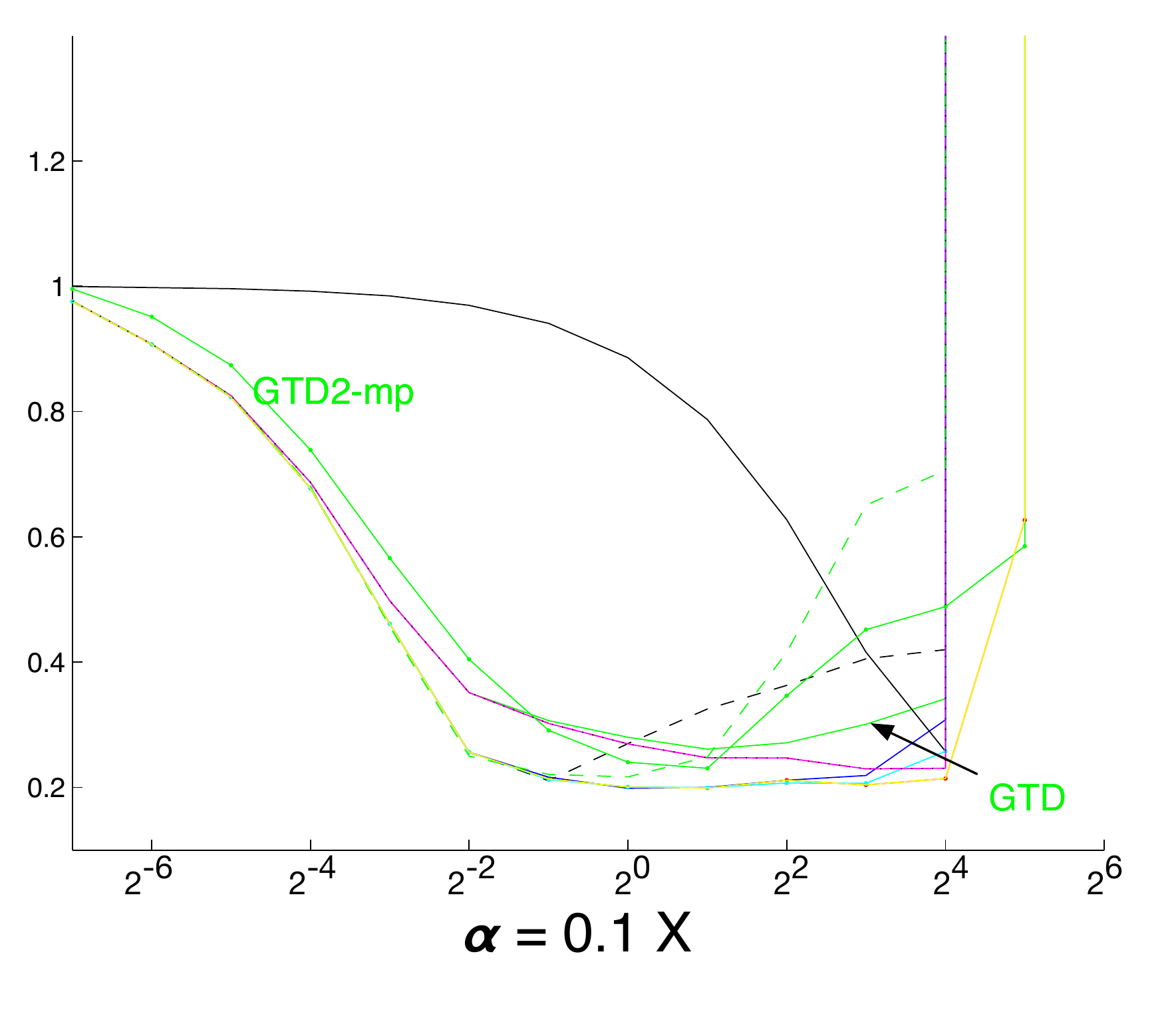} &  \includegraphics[width=\gwidth]{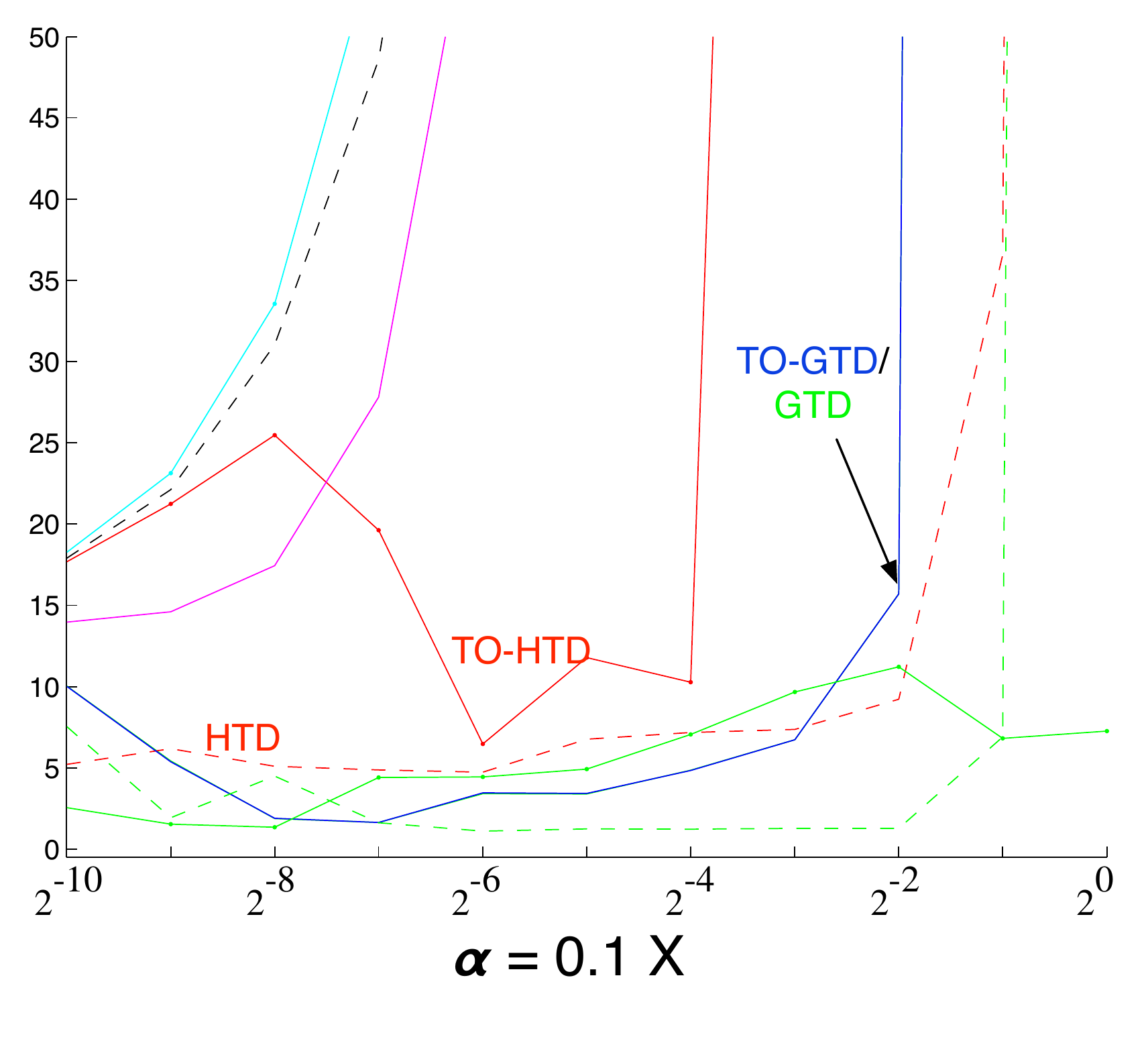}\\
  % Beta sensitivity
\includegraphics[width=\gwidth]{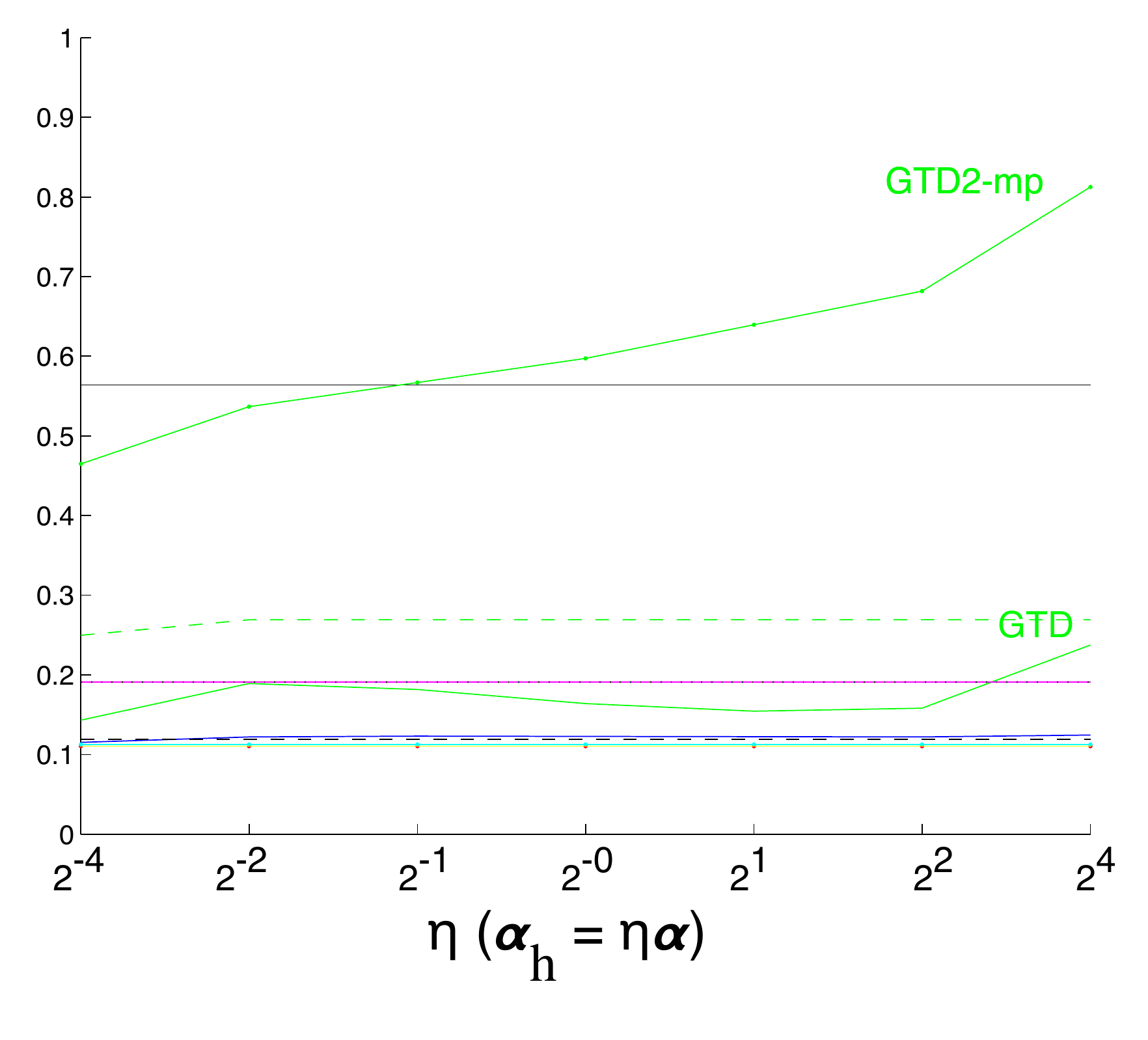} & \includegraphics[width=\gwidth]{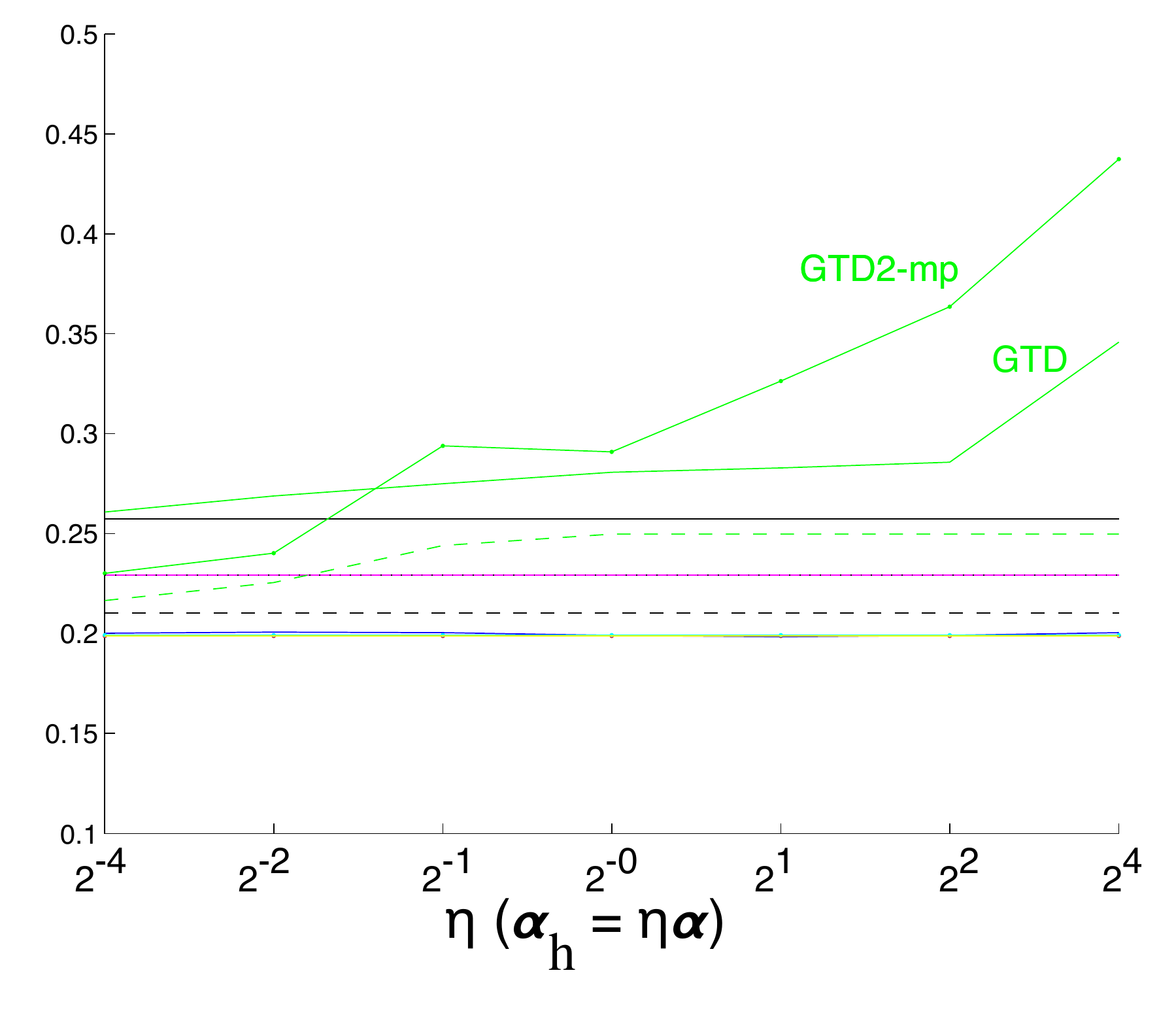} &   \includegraphics[width=\gwidth]{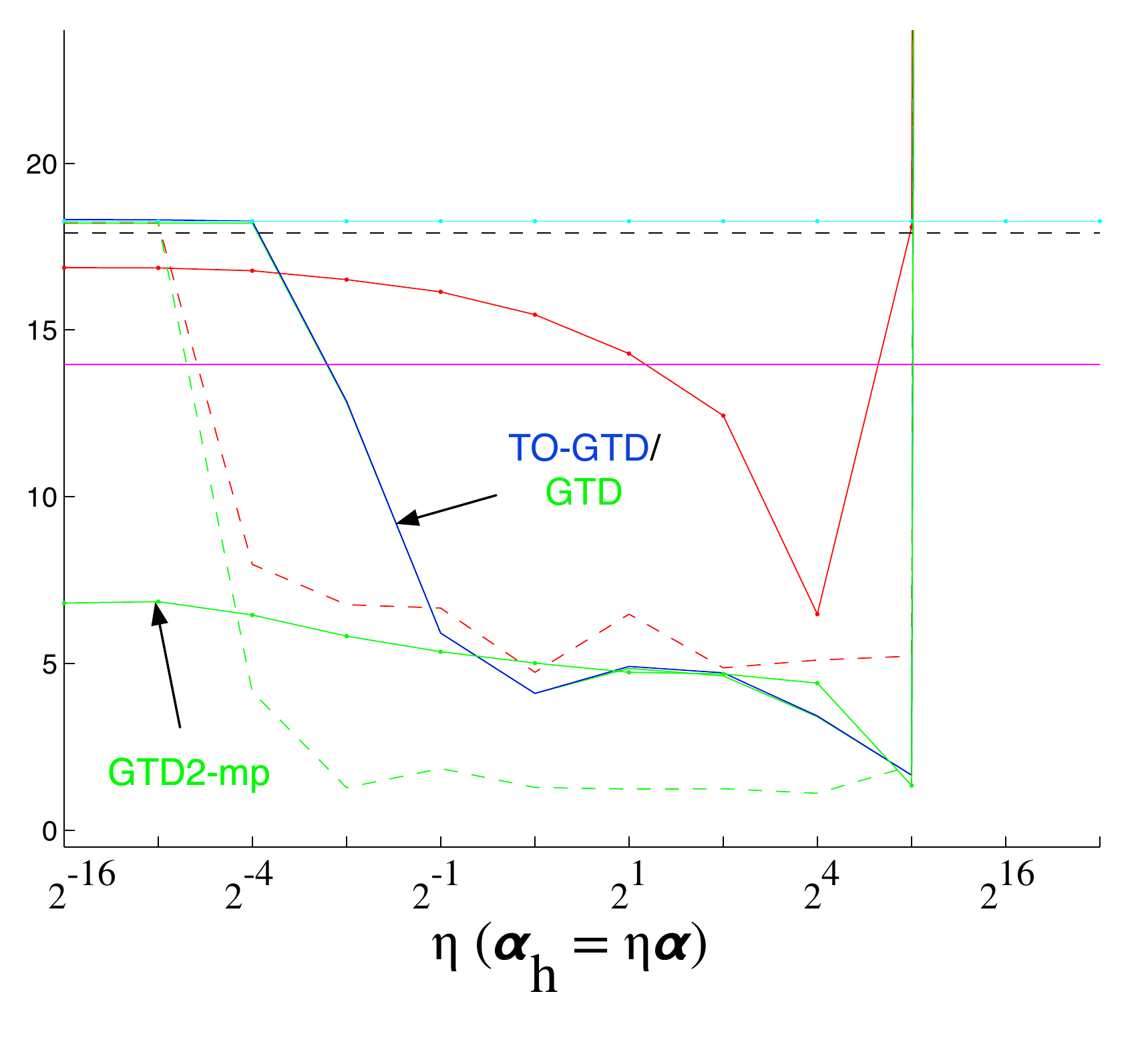} \\
   % Lambda sensitivity
\includegraphics[width=\gwidth]{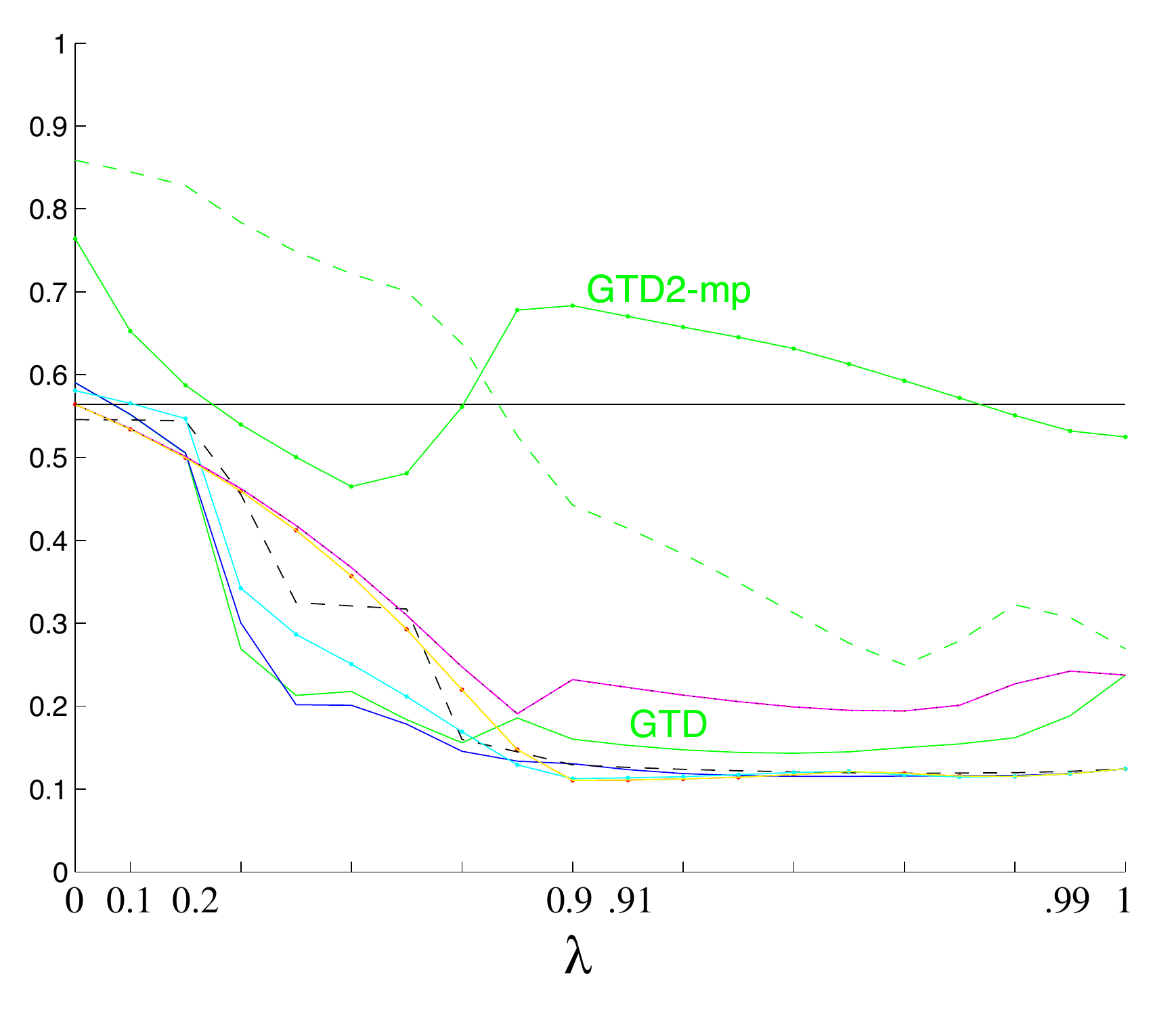}  & \includegraphics[width=\gwidth]{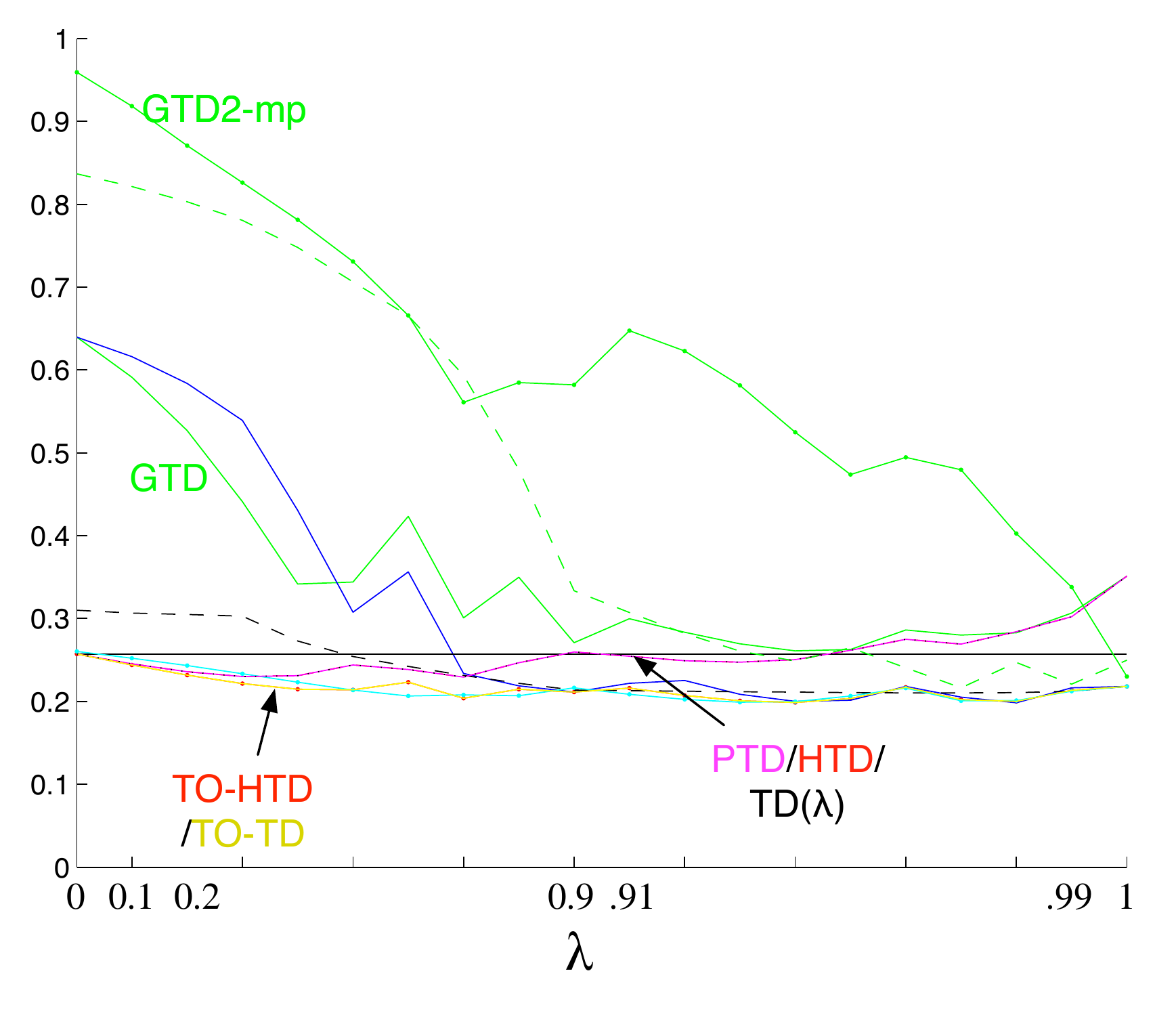}  &  \includegraphics[width=\gwidth]{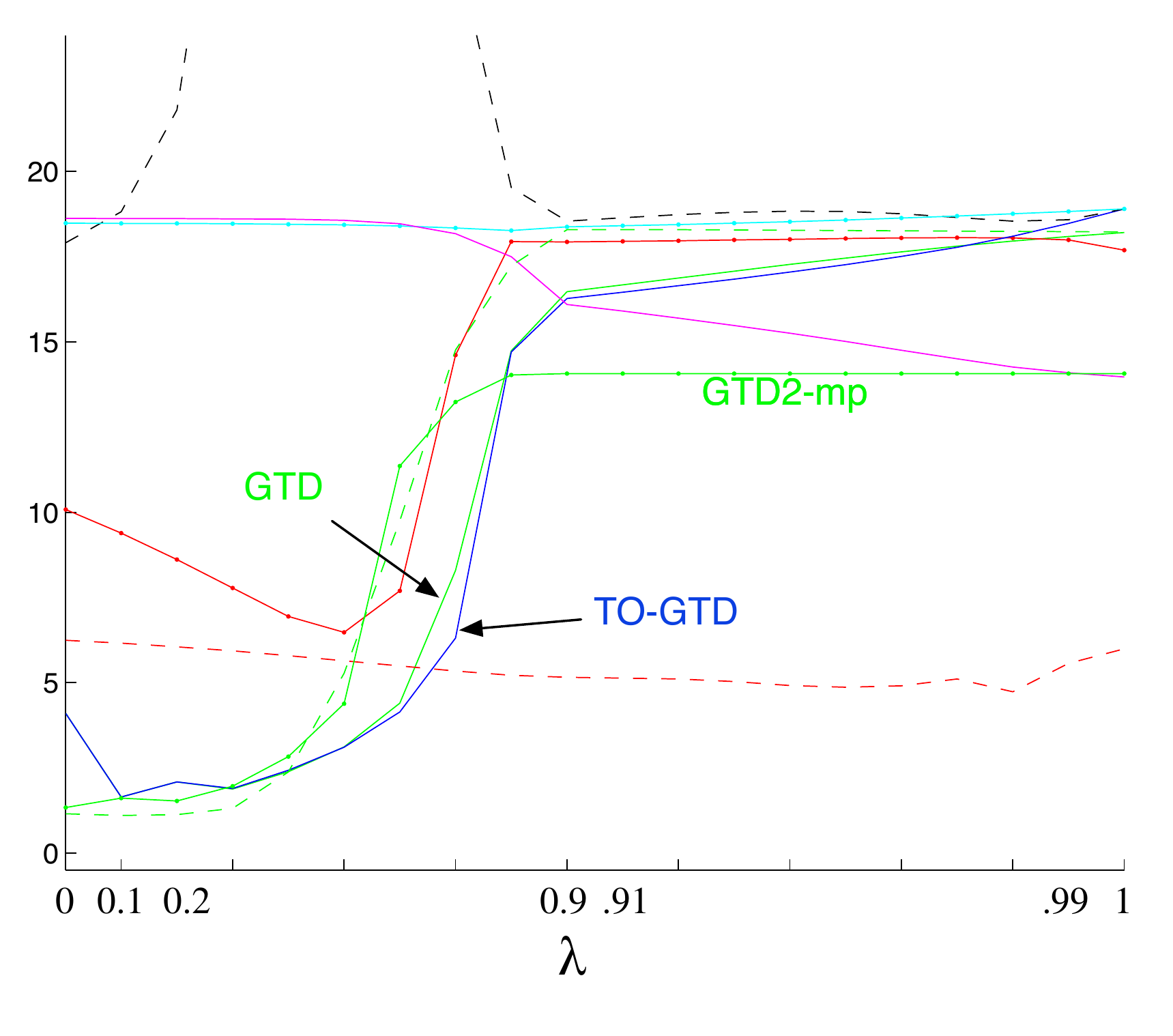}\\
 \end{tabular}
 \caption{\textbf{On-policy} performance on random MDPs with two different representations and \textbf{off-policy} performance on Baird's counterexample. All plots report mean absolute value error averaged over 100 runs and 30 random MDPs, and 500 runs for Baird's. The plots are organized in columns left to right corresponding to results on random MDPs with tabular and binary features, and results on Baird's counterexample. The plots are also organized in rows from top to bottom corresponding to learning curves, $\alpha$,  $\eta$, and $\lambda$ sensitivity.}\label{figure_onpolicy}
 \end{figure*}

\section{Discussion}

There are three broad conclusions suggested by our results. First, we could not clearly demonstrate the supposed superiority of TD($\lambda$) over gradient TD methods in the on-policy setting. In both tabular and aliased feature settings GTD($\lambda$) achieved faster learning and superior parameter sensitivity compared to TD, PTD, and HTD. Notably, the $\eta$-sensitivity of the GTD algorithm was very reasonable in both domains, however, large $\eta$ were required to achieve good performance on Baird's for both GTD($\lambda$) and TO-GTD($\lambda$). Our on-policy experiments with binary features did indicate a slight advantage for TD($\lambda$), PTD, and HTD, and that PTD and HTD exhibit zero sensitivity to the choice of $\alpha_\h$ as expected. In off-policy learning there is little difference between GTD($\lambda$) and PTD and HTD. Our results combined with the prior work of Dann et al. \cite{dann2014policy} suggest that the advantage of conventional TD($\lambda$) over gradient TD methods, in on-policy learning, is limited to specific domains.   

Our second conclusion, is that the new mirror prox methods achieved poor performance in most settings except Baird's counterexample. Both GTD2-mp and TDC-mp achieved the best performance in Baird's counterexample. We hypothesize that the two-step gradient computation more effectively uses the transition to state 7, and so is ideally suited to the structure of the domain\footnote{Baird's counterexample uses a specific initialization of the primary weights: far from one of the true solutions $\qw = \vec{0}$.}. However, the GTD2-MP method performed worse than off-policy TD(0) in all off-policy random MDP domains, while the learning curves of TDC-mp exhibited higher variance than other methods in all but the on-policy binary case and high parameter sensitivity across all settings except Baird's. This does not seem to be a consequence of the extension to eligibility traces because in all cases except Baird's, both TDC-mp and GTD2-mp performed best with $\lambda>0$. Like GTD and HTD, the mirror prox methods would likely have performed better with values of $\alpha_\h > \alpha$, however, this is undesirable because larger $\alpha_\h$ is required to ensure good performance in some off policy domains, such as Baird's (e.g., $\eta = 2^8$). 

Third and finally, several methods exhibited non-convergent behavior on Baird's counterexample. All methods that exhibited reliable error reduce in Baird's did so with $\lambda$ near zero, suggesting that eligibility traces are of limited use in these more extreme off-policy domains. In the case of PTD, non-convergent behavior is not surprising since our implementation of this algorithm does not include gradient correction---a possible extension suggested by the authors \cite{sutton2014anew}---and thus is only guaranteed to converge under off-policy sampling in the tabular case. For the emphatic TD methods the performance on Baird's remains a concern, especially considering how well TO-ETD($\lambda,\beta$) performed in all our other experiments. The addition of the $\beta$ parameter appears to significantly improve TO-ETD in the off-policy domain with binary features, but could not mitigate the large variance in $\rho$ produced by the counterexample. It is not clear if this bad behavior is inherent to emphatic TD methods\footnote{The variance of TO-ETD has been examined before in two state domains \cite{sutton2015anemphatic}. ETD is thought to have higher variance that other TD algorithms due to the emphasis weighting.}, 
or could be solved by more careful specification of the state-based interest function. In our implementation, we followed the original author's recommendation of setting the interest for each state to 1.0 \cite{sutton2015anemphatic}, because all our domains were discounted and continuing. Additionally, both HTD($\lambda$) and TO-HTD($\lambda$) did not diverge on Baird's, but performance was less than satisfactory to say the least.   

Overall, the conclusions implied by our empirical study are surprisingly clear. If guarding against large variance due to off-policy sampling is a chief concern, then GTD($\lambda$) and TO-GTD($\lambda$) should be preferred. Between the two, GTD($\lambda$) should be preferred if computation is at a premium. If poor performance in problems like Baird's is not a concern, then TO-ETD($\lambda,\beta$) was clearly the best across our experiments, and exhibited nearly the best runtime results. TO-ETD($\lambda$) on the other hand, exhibited high variance in off-policy domains, and sharp parameter sensitivity, indicating parameter turnng of emphatic methods may be an issue in practice.

\section{Appendix}
Additional results and analysis can be found in the full version of the paper: http://arxiv.org/abs/1602.08771.
%at:\\ 
%\href{http://homes.soic.indiana.edu/adamw/white2016Off.pdf}{http://homes.soic.indiana.edu/adamw/white2016Off.pdf}. 
\newpage

%\newpage
%\bibliographystyle{unsrtnat} %
%\renewcommand{\bibsection}{\section*{References } }
\bibliographystyle{abbrv}
%\vfill\eject
\bibliography{paper.bib}

\newpage
%!TEX root = paper.tex
 
 \appendix
 
\section{Algorithms}

The original ETD($\lambda$) algorithm as proposed by Sutton et al. (2015) is an not entirely obvious manipulation of the true online ETD($\lambda$) described above and used in our experiments. The difference is in the definition of the eligibility trace and the primary weight update. To achieve the original ETD($\lambda$) algorithm modify the above true-online ETD($\lambda$) algorithm to use $$\e_t \leftarrow \rho_t(\gamma_t\lambda\e_{t-1} + M_t\vx_t),$$ and $$\Delta \qw \leftarrow \alpha\delta_t\e_t.$$ 

In all the algorithms that follow, we assume $\qw_0$, $\h_0$ are initialized arbitrarily, and eligibility traces are initialized to a vector of zeros (e.g., $\e_{-1} = \vec{0}$).\\\\ 
%For further implementation details on ETD, see \\
%\texttt{https://webdocs.cs.ualberta.ca/~sutton/papers/TOETD-june2015.pdf}.
\subsection*{TD($\lambda$)}
\begin{flalign*}
&\delta_t \defeq R_{t+1} + \gamma_{t+1} \vx_{t+1}^\top\qw_t - \vx_{t}^\top\qw_t&\\ 
&\e_t \leftarrow \lambda_t \gamma_t \e_{t-1} + \vx_t&\\
&\Delta \qw \leftarrow \alpha\delta_t\e_t& 
\end{flalign*}

\subsection*{True-online TD($\lambda$)}
\begin{flalign*} 
&v^\prime \defeq \qw_t^\top \vx_{t+1}&\\
&\delta_t \defeq R_{t+1} + \gamma_{t+1}v^\prime - v&\\
&\e_t \leftarrow \gamma_t\lambda_t\e_{t-1} + \alpha \left[ 1-\gamma_t\lambda_t \e_{t-1}^\top\vx_t\right]\vx_t&\\
&\Delta \qw \leftarrow \delta_t\e_t +  \alpha[v - \qw_t^\top\vx_t]\vx_t&\\
&v \leftarrow v^\prime \hspace{1cm}\triangleright v~\text{initialized to}~0 & 
\end{flalign*}

\subsection*{GTD($\lambda$)}
\begin{flalign*} 
&\delta_t \defeq R_{t+1} + \gamma_{t+1} \vx_{t+1}^\top\qw_t - \vx_{t}^\top\qw_t&\\ 
&\e_t \leftarrow \rho_t (\lambda_t \gamma_t \e_{t-1} + \vx_t)&\\
&\Delta \qw \leftarrow \alpha\left[ \delta_t\e_t  - \gamma_{t+1}(1-\lambda_{t+1})(\e_t^\top \h_t) \vx_{t+1} \right]&\\
&\Delta \h \leftarrow \alpha_\h\left[ \delta_t\e_t  - (\vx_t^\top \h_t) \vx_{t} \right]&
\end{flalign*}

\subsection*{True-online GTD($\lambda$)}
\begin{flalign*} 
&\delta_t \defeq R_{t+1} + \gamma_{t+1} \vx_{t+1}^\top\qw_t - \vx_{t}^\top\qw_t&\\ 
&\e_t \leftarrow \rho_t\left[\lambda_t \gamma_t \e_{t-1} + \alpha_t \left(1-\rho_t \gamma_t \lambda_t (\vx_t^\top \e_{t-1})\right) \vx_t\right]&\\
&\emu_t \leftarrow \rho_t (\lambda_t \gamma_t \emu_{t-1} + \vx_t)&\\
&\e^h_t \leftarrow \rho_{t-1}\lambda_t \gamma_t \e^h_{t-1} + \alpha_\h \left(1-\rho_{t-1} \gamma_t \lambda_t (\vx_t^\top \e^h_{t-1})\right) \vx_t&\\
&\mathbf{d} \defeq \delta_t \e_t + (\e_t - \alpha \rho_t \vx_t)(\qw_t - \qw_{t-1})^\top \vx_t&\\
&\Delta \qw_t \leftarrow \mathbf{d}  - \alpha\gamma_{t+1}(1 - \lambda_{t+1}) (\h_t^\top\emu_t) \vx_{t+1}&\\
&\Delta \h_t \leftarrow  \rho_t \delta_t \e^h_t - \alpha_\h (\vx_t^\top\h_t) \vx_t&
 \end{flalign*}
 
\subsection*{PTD($\lambda$)}
\begin{flalign*} 
&\delta_t \defeq R_{t+1} + \gamma_{t+1} \vx_{t+1}^\top\qw_t - \vx_{t}^\top\qw_t&\\ 
&\e_t \leftarrow \rho_t (\lambda_t \gamma_t \e_{t-1} + \vx_t)&\\
&\Delta \qw_t \leftarrow \alpha\delta_t\e_t + (\rho_t-1)\h_t&\\
&\h_{t+1} \leftarrow \gamma_t\lambda_t(\rho_t \h_t + \alpha\bar{\delta}_t\e_t)&\\
&\bar{\delta}_t \defeq R_{t+1} + \qw_t^{\top} \vx_{t+1} - \qw_t^{\top} \vx_t&
 \end{flalign*} 
 
\subsection*{True-Online ETD($\lambda$)}
\begin{flalign*} 
&\delta_t \defeq R_{t+1} + \gamma_{t+1} \vx_{t+1}^\top\qw_t - \vx_{t}^\top\qw_t&\\ 
&F_t \leftarrow \rho_{t-1}\gamma_t F_{t-1} + I_t \hspace{2cm}\triangleright F_{-1} = 0&\\
&M_t \defeq \lambda_{t}I_t + (1-\lambda_t)F_{t}&\\
&\e_t \leftarrow \rho_t\gamma_t\lambda_t\e_{t-1} + \rho_t\alpha M_t(1-\rho_t \gamma_t \lambda_t (\vx_t^\top \e_{t-1})) \vx_t&\\
&\Delta \qw \leftarrow \delta_t\e_t  + (\e_t - \alpha M_t \rho_t \vx_{t})(\qw_t - \qw_{t-1})^\top \vx_t&
 \end{flalign*} 
 
\subsection*{True-Online ETD($\beta,\lambda$)}
\begin{flalign*} 
&\delta_t \defeq R_{t+1} + \gamma_{t+1} \vx_{t+1}^\top\qw_t - \vx_{t}^\top\qw_t&\\ 
&F_t \leftarrow \rho_{t-1}\beta_t F_{t-1} + I_t \hspace{2cm}\triangleright F_{-1} = 0&\\
&M_t \defeq \lambda_{t}I_t + (1-\lambda_t)F_{t}&\\
&\e_t \leftarrow \rho_t\gamma_t\lambda_t\e_{t-1} + \rho_t\alpha M_t(1-\rho_t \gamma_t \lambda_t (\vx_t^\top \e_{t-1})) \vx_t&\\
&\Delta \qw \leftarrow \delta_t\e_t  + (\e_t - \alpha M_t \rho_t \vx_{t})(\qw_t - \qw_{t-1})^\top \vx_t&
 \end{flalign*} 
 
\subsection*{GTD2($\lambda$)-MP}
\begin{flalign*} 
&{\delta}_t \defeq R_{t+1} + \gamma_{t+1}\qw_t^{\top} \vx_{t+1} - \qw_t^{\top} \vx_t&\\
&\e_t \leftarrow \rho_t (\lambda_t \gamma_t \e_{t-1} + \vx_t)&\\
&\h_{t+\frac{1}{2}} \leftarrow \h_t + \alpha_\h\left[ \delta\e_t - (\h_t^\top\vx_t)\vx_t \right]&\\
&\qw_{t+\frac{1}{2}} \leftarrow \qw_t + \alpha\left[ (\h_t^\top\vx_t)\vx_t - \gamma_{t+1}(1-\lambda_{t+1})(\h_t^\top\e_t)\vx_{t+1}\right]&\\
&{\delta}_{t+\frac{1}{2}} \defeq R_{t+1} + \gamma_{t+1}\qw_{t+\frac{1}{2}}^{\top} \vx_{t+1} - \qw_{t+\frac{1}{2}}^{\top} \vx_t&\\
&\Delta \qw \leftarrow \alpha\left[ (\h_{t+\frac{1}{2}}^\top\vx_t)\vx_t - \gamma_{t+1}(1-\lambda_{t+1})(\h_{t+\frac{1}{2}}^\top\e_t)\vx_{t+1}\right]&\\
&\Delta \h \leftarrow \alpha_\h\left[ \delta_{t+\frac{1}{2}}\e_t  - (\vx_t^\top \h_{t+\frac{1}{2}}) \vx_{t} \right]&
\end{flalign*}
 
\subsection*{TDC($\lambda$)-MP}
\begin{flalign*} 
&{\delta}_t \defeq R_{t+1} + \gamma_{t+1}\qw_t^{\top} \vx_{t+1} - \qw_t^{\top} \vx_t&\\
&\e_t \leftarrow \rho_t (\lambda_t \gamma_t \e_{t-1} + \vx_t)&\\
&\h_{t+\frac{1}{2}} \leftarrow \h_t + \alpha_\h\left[ \delta\e_t - (\h_t^\top\vx_t)\vx_t \right]&\\
&\qw_{t+\frac{1}{2}} \leftarrow \qw_t + \alpha\left[ \delta\e_t - \gamma_{t+1} (1-\lambda_{t+1})(\h_t^\top\e_t)\vx_{t+1} \right]&\\
&{\delta}_{t+\frac{1}{2}} \defeq R_{t+1} + \gamma_{t+1}\qw_{t+\frac{1}{2}}^{\top} \vx_{t+1} - \qw_{t+\frac{1}{2}}^{\top} \vx_t&\\
&\Delta \qw \leftarrow \alpha\left[ \delta_{t+\frac{1}{2}}\e_t  - \gamma_{t+1}(1-\lambda_{t+1})(\e_t^\top \h_{t+\frac{1}{2}}) \vx_{t+1} \right]&\\
&\Delta \h \leftarrow \alpha_\h\left[ \delta_{t+\frac{1}{2}}\e_t  - (\vx_t^\top \h_{t+\frac{1}{2}}) \vx_{t} \right]&
\end{flalign*} 

\subsection*{HTD($\lambda$)}
%The one with the correct $\Amat_\mu$:
\begin{flalign*} 
&{\delta}_t \defeq R_{t+1} + \gamma_{t+1}\qw_t^{\top} \vx_{t+1} - \qw_t^{\top} \vx_t&\\
&\e_t \leftarrow \rho_t (\lambda_t \gamma_t \e_{t-1} + \vx_t)&\\
&\emu_t \leftarrow \lambda_t \gamma_t \emu_{t-1} + \vx_t&\\
&\Delta \qw_t \leftarrow \alpha\left[ \delta_t\e_t  + (\vx_t - \gamma_{t+1} \vx_{t+1})(\e_t - \emu_t)^\top \h_t \right]&\\
&\Delta \h_t \leftarrow \alpha_\h\left[ \delta_t\e_t  - (\vx_t - \gamma_{t+1} \vx_{t+1})({\emu}_t^\top \h_t) \right]&
 \end{flalign*}
 
\subsection*{True-online HTD($\lambda$)}
\begin{flalign*} 
&{\delta}_t \defeq R_{t+1} + \gamma_{t+1}\qw_t^{\top} \vx_{t+1} - \qw_t^{\top} \vx_t&\\
&\e_t \leftarrow \rho_t (\lambda_t \gamma_t \e_{t-1} + \vx_t)&\\
&\emu_t \leftarrow \lambda_t \gamma_t \emu_{t-1} + \vx_t&\\
&\e^o_t \leftarrow \rho_t(\lambda_t \gamma_t \e^o_{t-1} + \alpha (1-\rho_t \gamma_t \lambda_t \vx_t^\top \e^o_{t-1}) \vx_t)&\\
&\mathbf{d} \defeq \delta_t \e^o_t + (\e^o_t - \alpha \rho_t \vx_t)(\qw_t - \qw_{t-1})^\top \vx_t&\\
&\Delta \qw_t \leftarrow \mathbf{d}  + \alpha (\vx_t - \gamma_{t+1} \vx_{t+1})(\e_t - \emu_t)^\top \h_t&\\
&\Delta \h_t \leftarrow  \mathbf{d}  - \alpha_\h (\vx_t - \gamma_{t+1} \vx_{t+1}){\emu}_t^\top \h_t&
 \end{flalign*}

\newpage

 \section{Additional experiments}
 This section includes additional results further analyzing the relative performance of linear TD-based policy evaluation algorithms. 
 The runtime results are as follows, for Figures \ref{figure_onruntime} and Figure \ref{figure_offruntime}.
The graphs indicate a sample efficiency versus time trade-off. For increasing $c$, the algorithms
are given more time per sample to finish computing. If computation is not done within
the allotted time $c$, then the agent continues to finish computation but has essentially paused interaction.
Several possible iterations may be required by the algorithm, if it is slow, until it is done computing on that 
one sample, at which point it is given a new sample. This simulates a real-time decision making tasks, such as mobile robot control.
New samples cannot be processed or buffered for off-line computation while the previous sample is being process. However, multiple samples can be processed per iteration, where the iteration duration is denoted by $c\in\R$. Typically computationally frugal learning methods perform well when $c$ is smaller, because
more samples can be processed per iteration. For example, for $c = 0.1$ on-policy TD(0) processes 101 samples, 
TD($\lambda$) processes 97 samples, TO-TD processes 85 samples and TO-HTD processes 31 samples.
Even though the TD(0) algorithm was allowed to process more samples per iteration, it did not achieve the best performance trade-off, because TO-TD is both sample efficient and computationally efficient. For larger $c$ more time
is available to each algorithm on each iteration. In this case some
of the other algorithms have better performance trade-offs. 
For example, in off-policy learning, HTD with $c=1.25$ effectively ties PTD for the best performance.  

 %\vspace{-2.0cm}

%\renewcommand{\gwidth}{0.5\textwidth}
\newcommand{\gwidthtwo}{0.5\textwidth}

 %!TEX root = paper.tex

  \begin{figure*}
\begin{tabular}{cc}
   \includegraphics[width=\gwidthtwo]{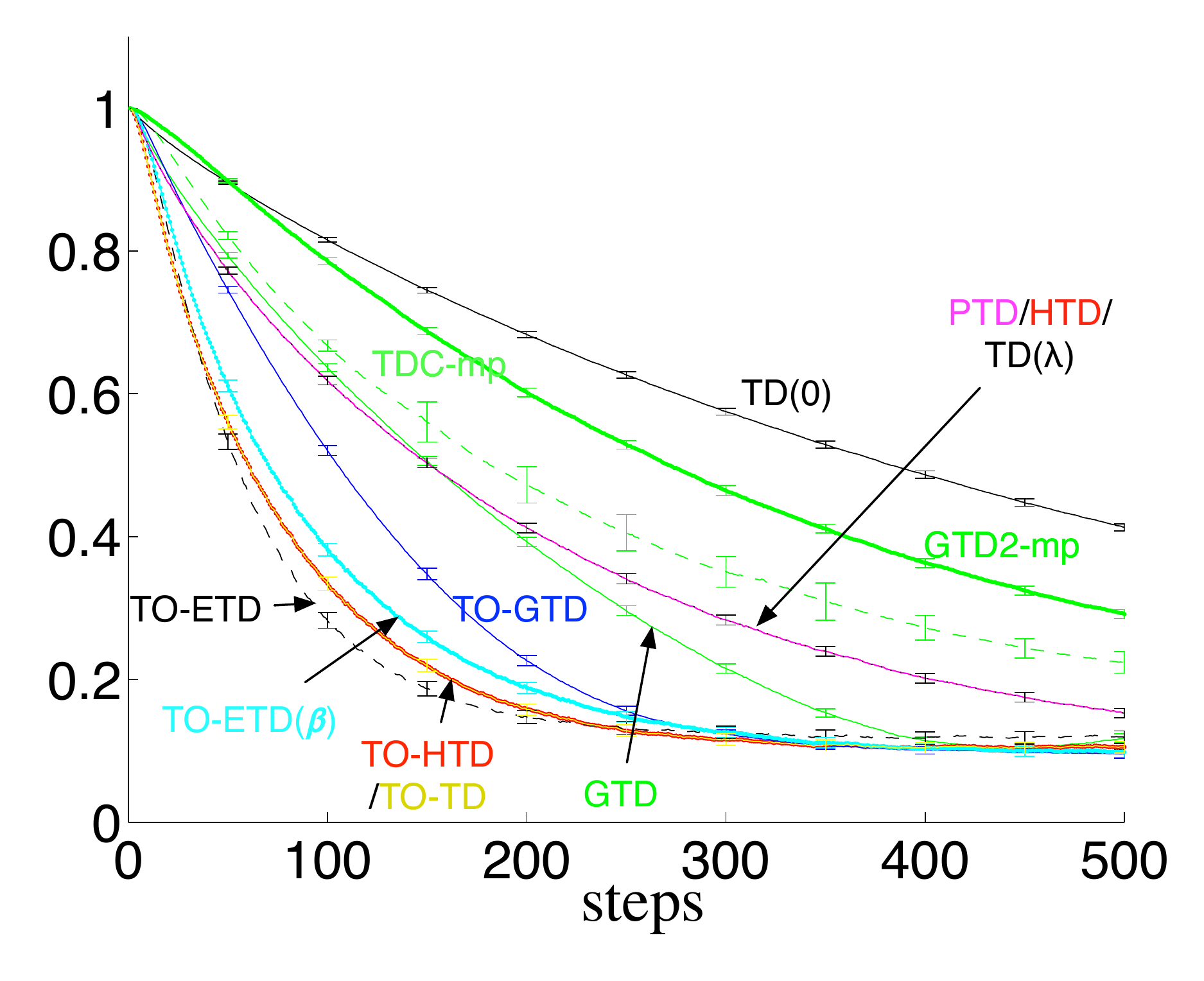} &   \includegraphics[width=\gwidthtwo]{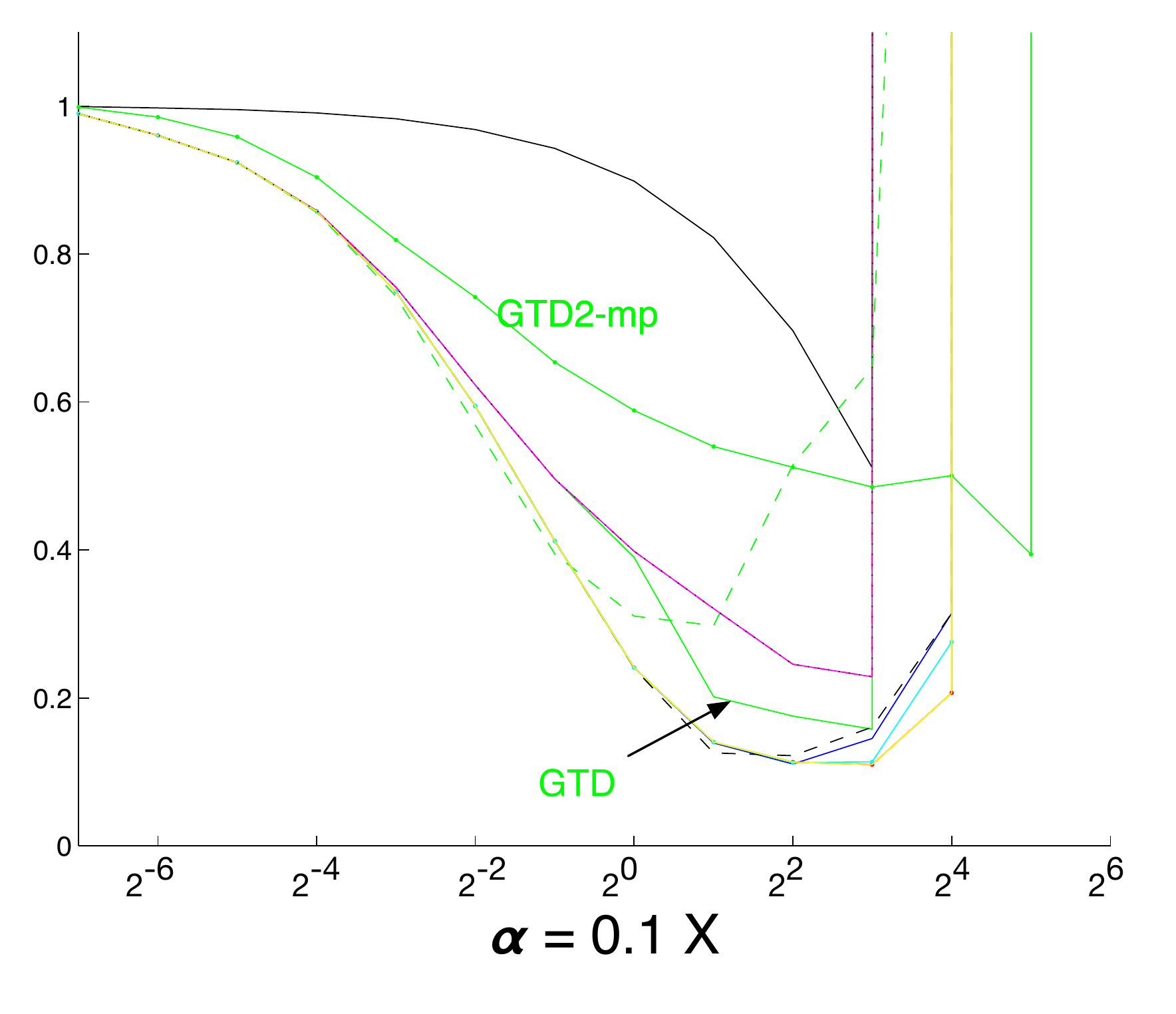} \\
% (a) Learning curves & (b) Alpha sensitivity \\
  \includegraphics[width=\gwidthtwo]{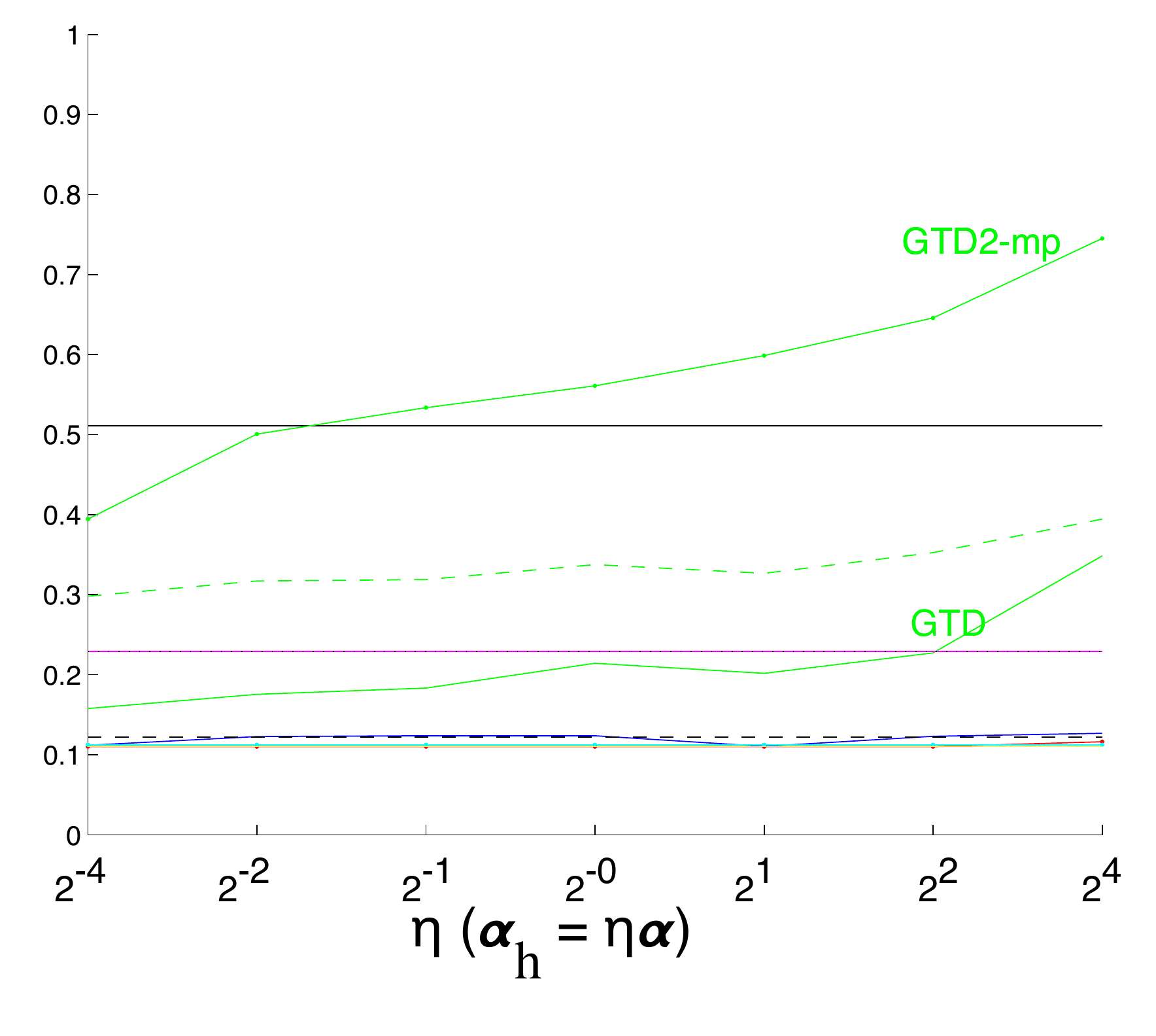} &   \includegraphics[width=\gwidthtwo]{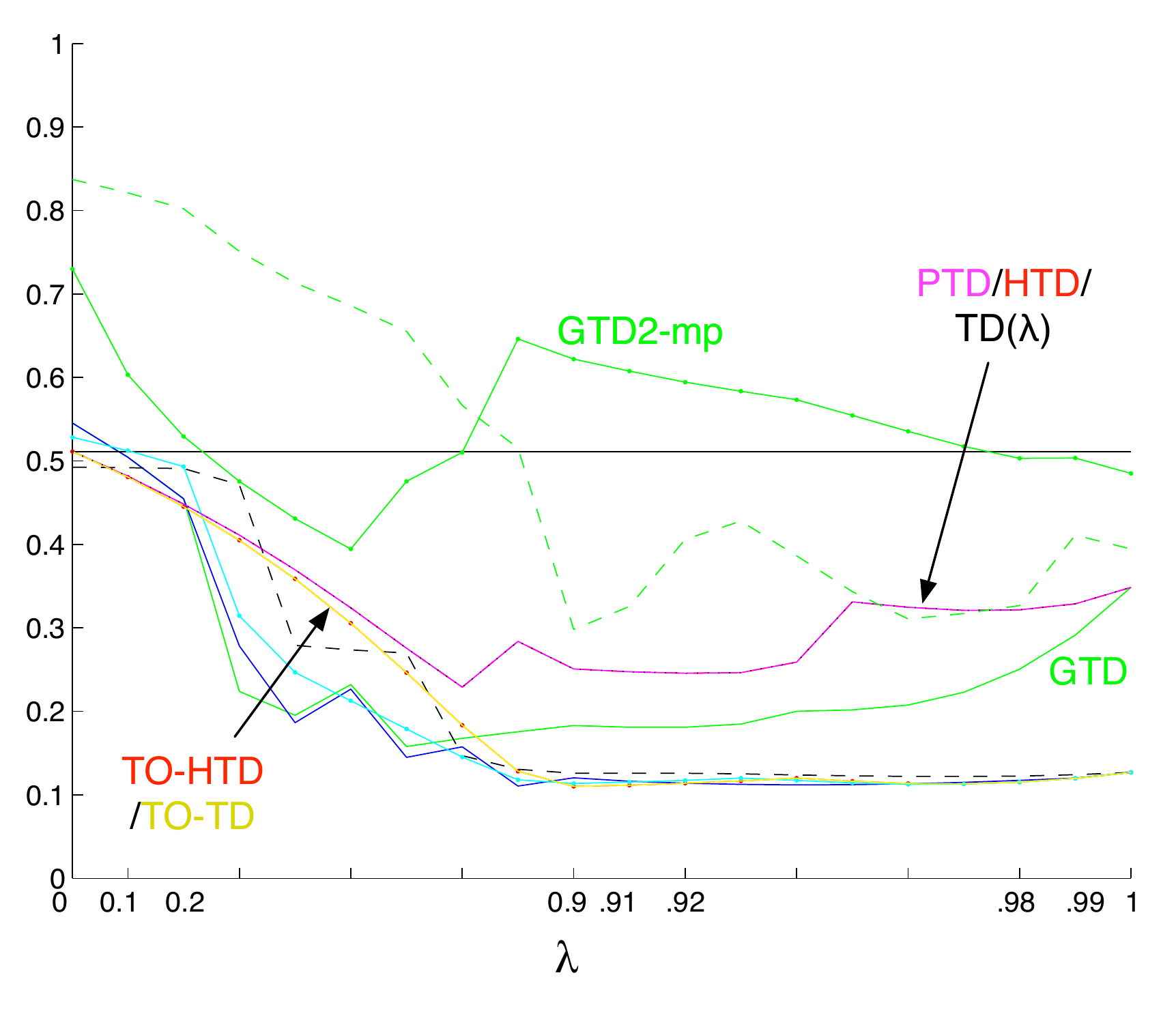} \\
 %(c) Beta sensitivity & (d) Lambda senstivity 
 \end{tabular}
 \caption{\textbf{On-policy} performance on random MDPs with aliased tabular features. All plots report mean absolute value error averaged over 100 runs and 30 MDPs. The top left plot depicts the learning curves with the best parameters found averaged over all random MDP instances. The remaining graphs depict each algorithms parameter sensitivity in mean absolute value error for  $\alpha$,  $\eta$, and $\lambda$.}
 \end{figure*}

  \begin{figure*}
\begin{tabular}{cc}
   \includegraphics[width=\gwidthtwo]{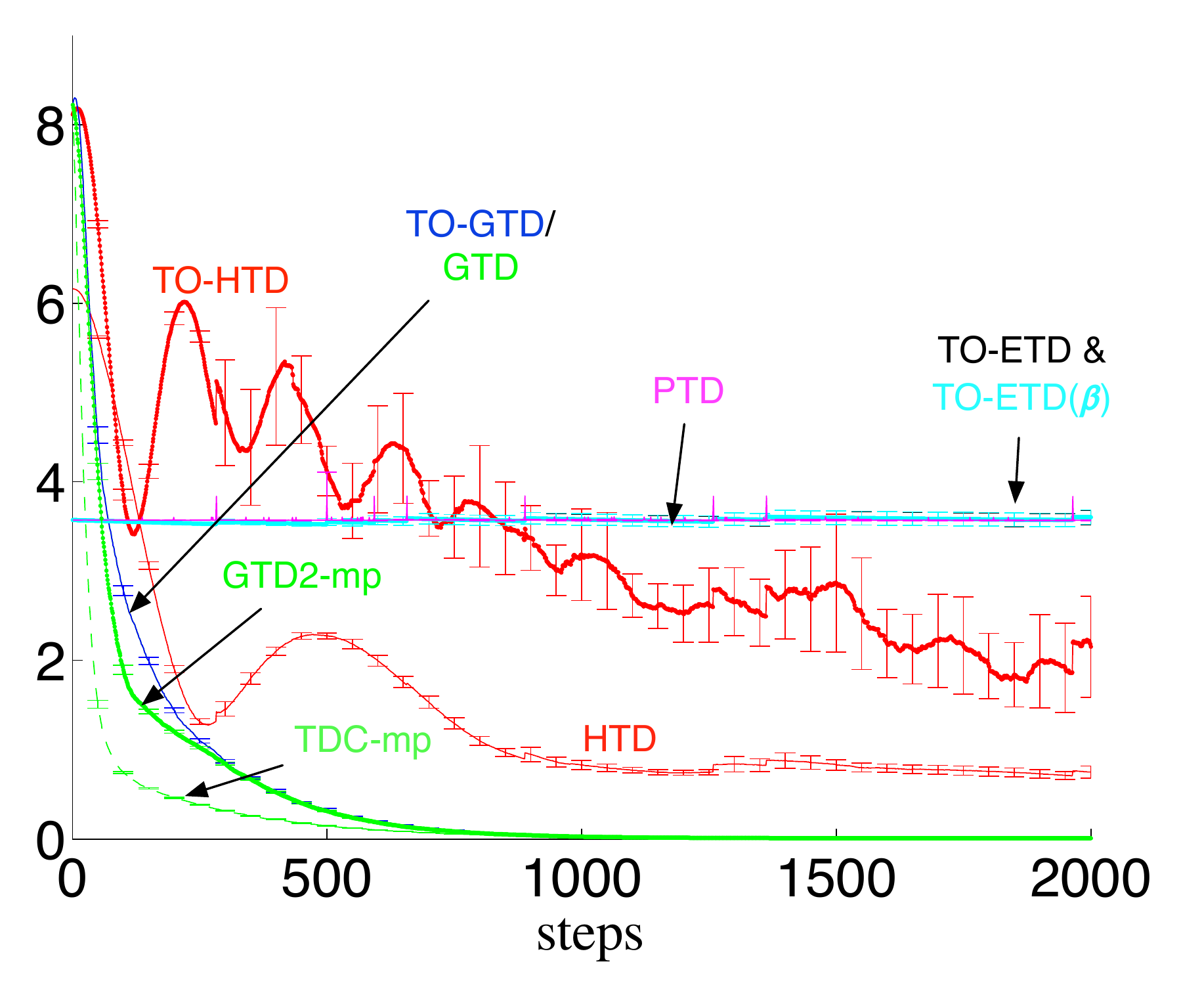} &   \includegraphics[width=\gwidthtwo]{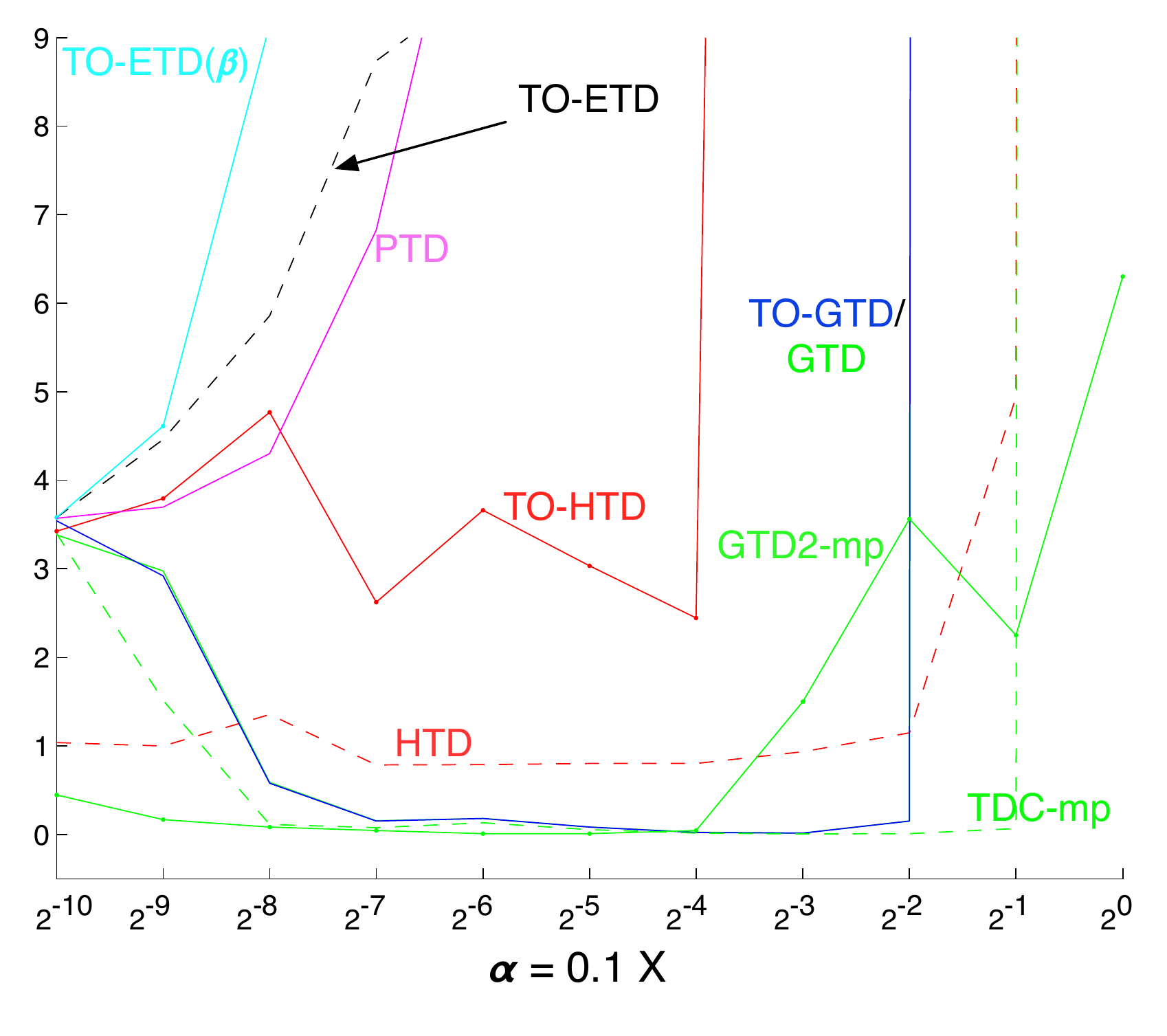} \\
% (a) Learning curves & (b) Alpha sensitivity \\
  \includegraphics[width=\gwidthtwo]{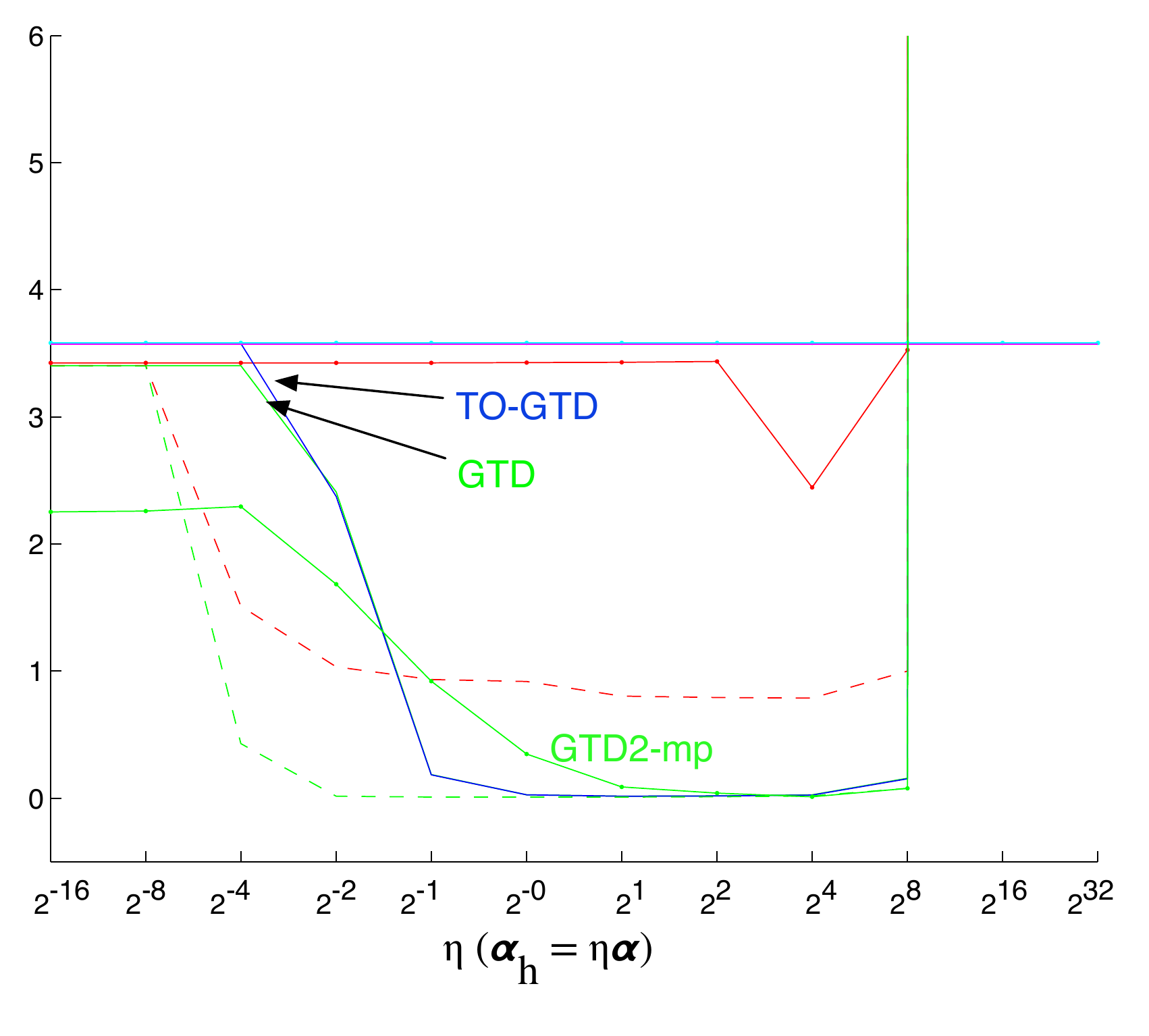} &   \includegraphics[width=\gwidthtwo]{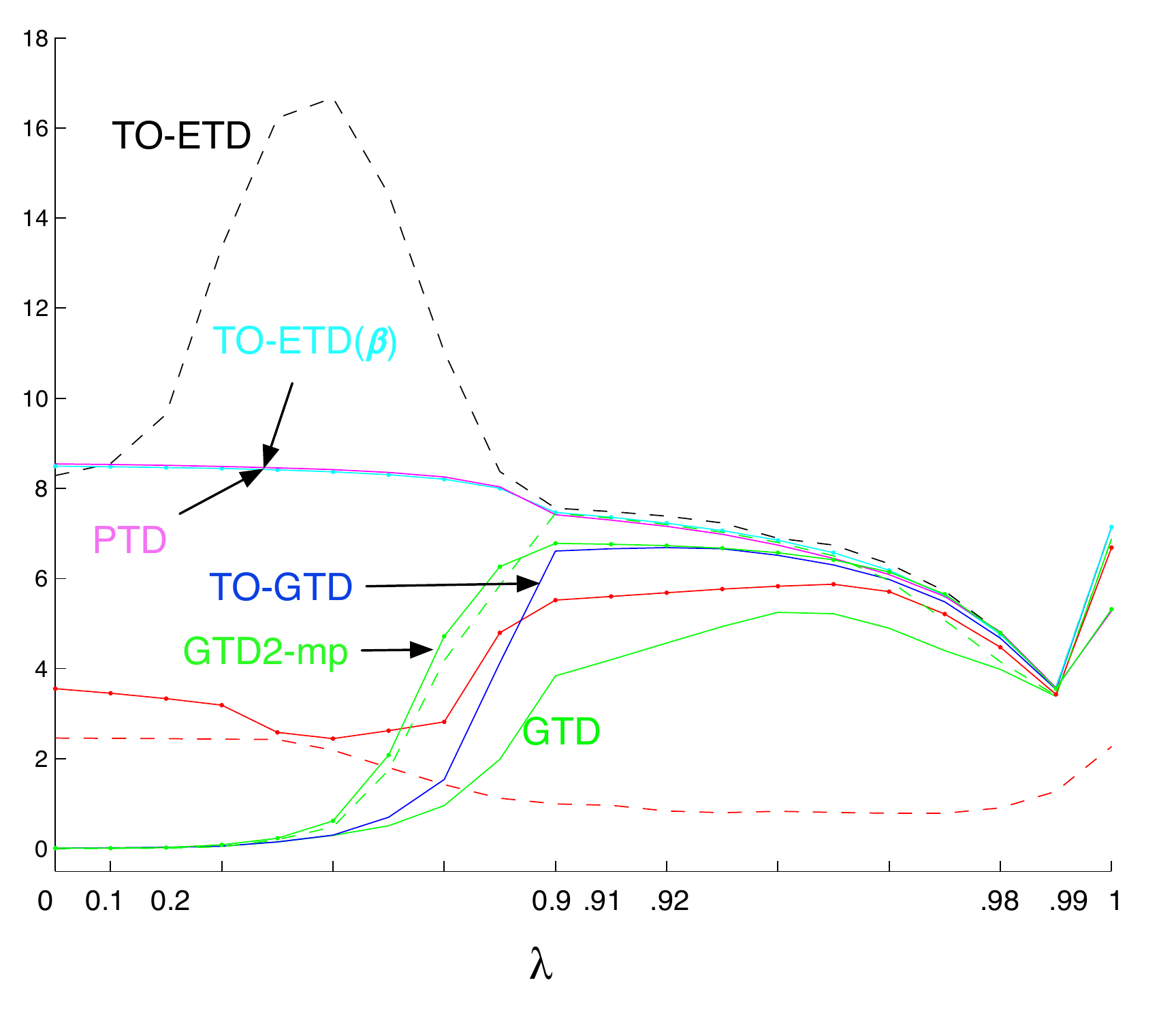} \\
 %(c) Beta sensitivity & (d) Lambda senstivity 
 %\multicolumn{2}{c}{\includegraphics[width=\gwidth]{final_figs/onPolicySparseBinaryAlphaPro.pdf} }\\
 %\multicolumn{2}{c}{(e) fifth}
 \end{tabular}
 \caption{{\bf Off-policy} performance on a variant of Baird's 7-state counterexample. All plots report the root mean square projected Bellman error (RMSPBE). See White's thesis (2015) for a detailed explanation of how to compute the MSPBE from the parameters of an MDP. The top left graph reports the RMSPBE is averaged 200 plotted against time, under the best parameter setting found over an extensive sweep. The remaining plots depict the parameter sensitivity, in RMSPBE, of each method with respect to the key algorithm parameters$\alpha$,  $\eta$, and $\lambda$.}
 \end{figure*}
 
%   \begin{figure*}
%\begin{tabular}{cc}
%   \includegraphics[width=\gwidth]{final_figs/BairdLCMSE.pdf} &   \includegraphics[width=\gwidth]{final_figs/BairdalphaMSE.pdf} \\
% (a) Learning curves & (b) Alpha sensitivity \\[6pt]
%  \includegraphics[width=\gwidth]{final_figs/BairdbetaMSE.pdf} &   \includegraphics[width=\gwidth]{final_figs/BairdlambdaMSE.pdf} \\
% (c) Beta sensitivity & (d) Lambda senstivity \\[6pt]
% \end{tabular}
% \caption{Off-policy Baird's counterexample, RMSE}
% \end{figure*}

 \begin{figure*}
 \hspace*{-0.75cm}
\begin{tabular}{ccc}
   \includegraphics[width=\gwidth]{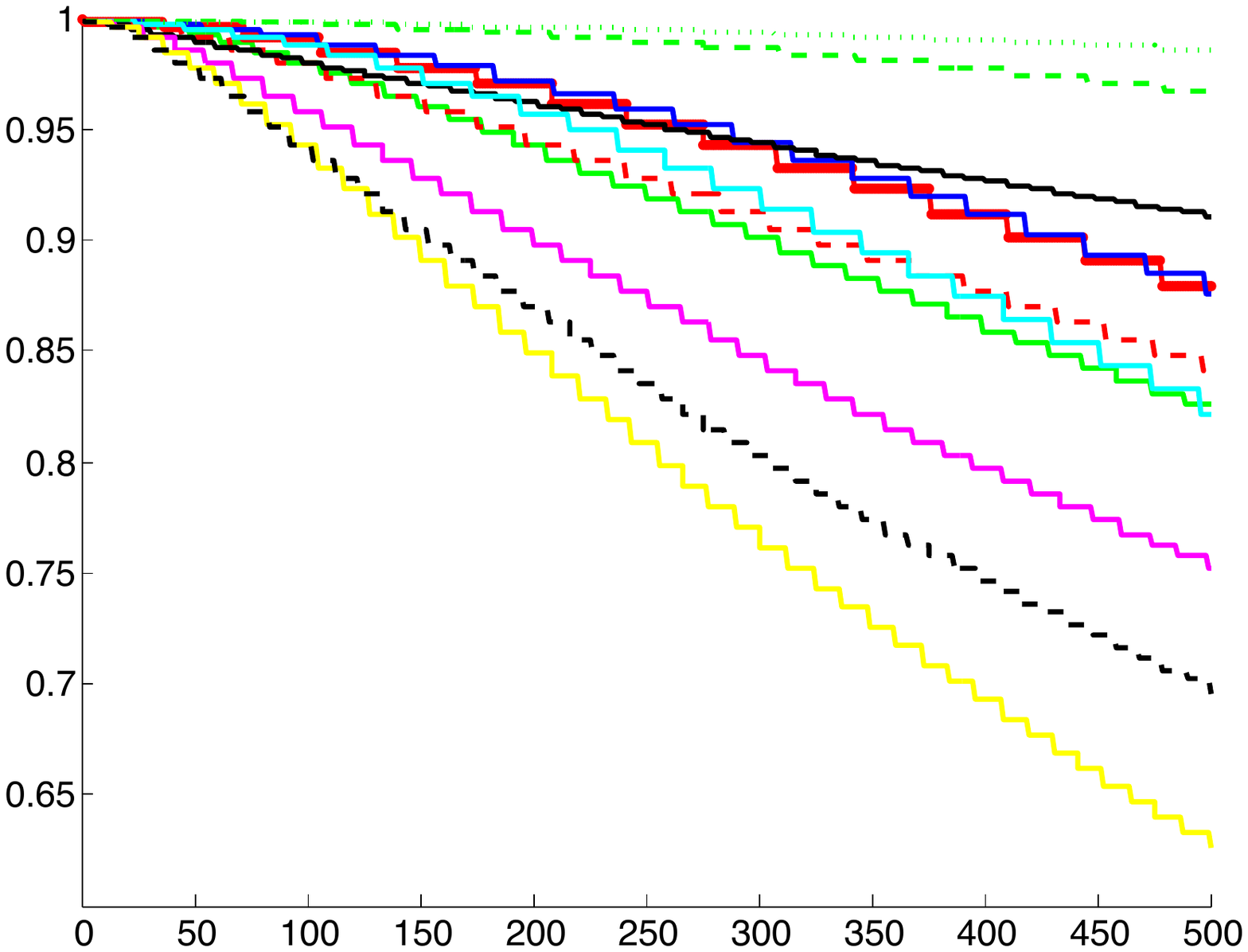} &   \includegraphics[width=\gwidth]{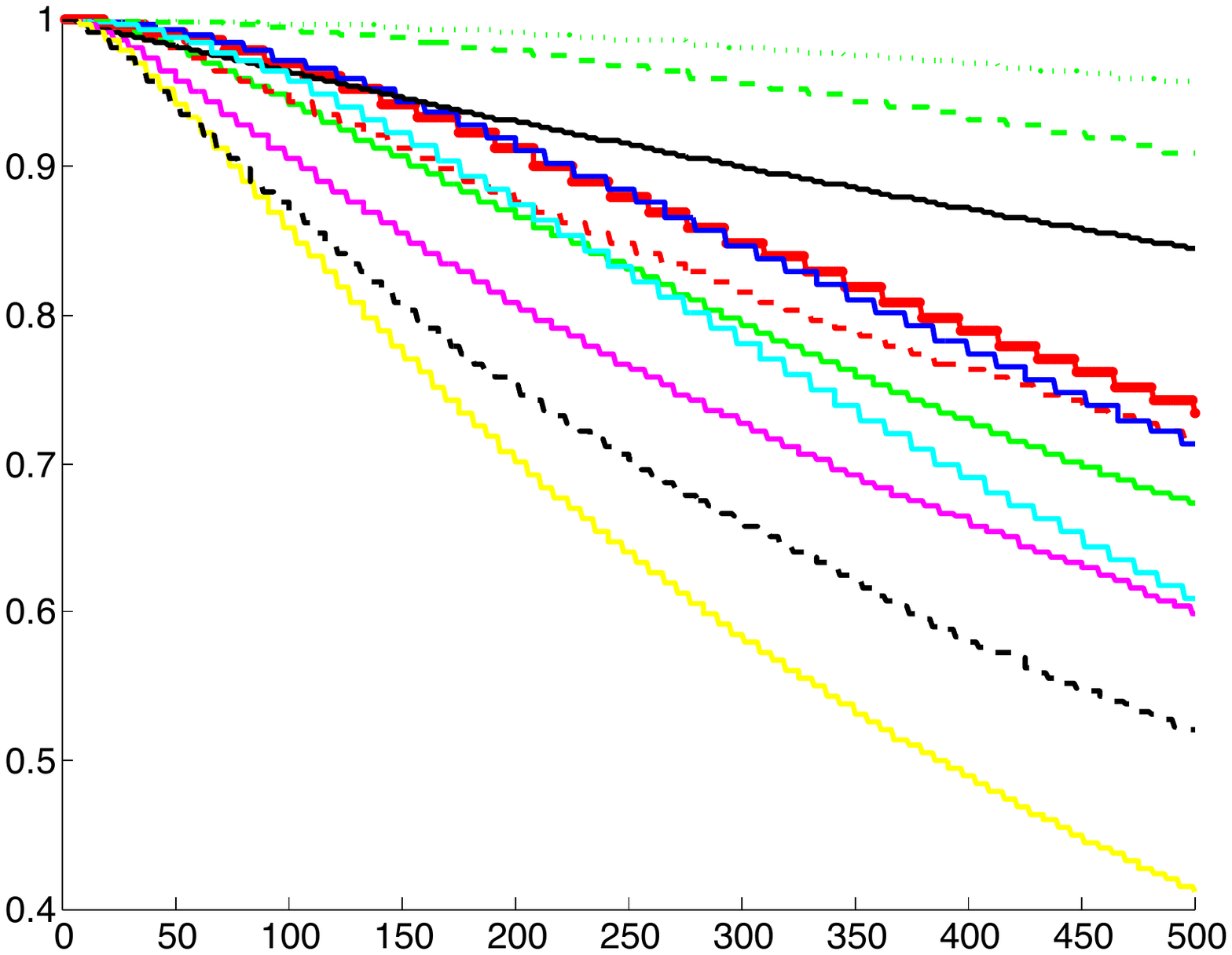} &   \includegraphics[width=\gwidth]{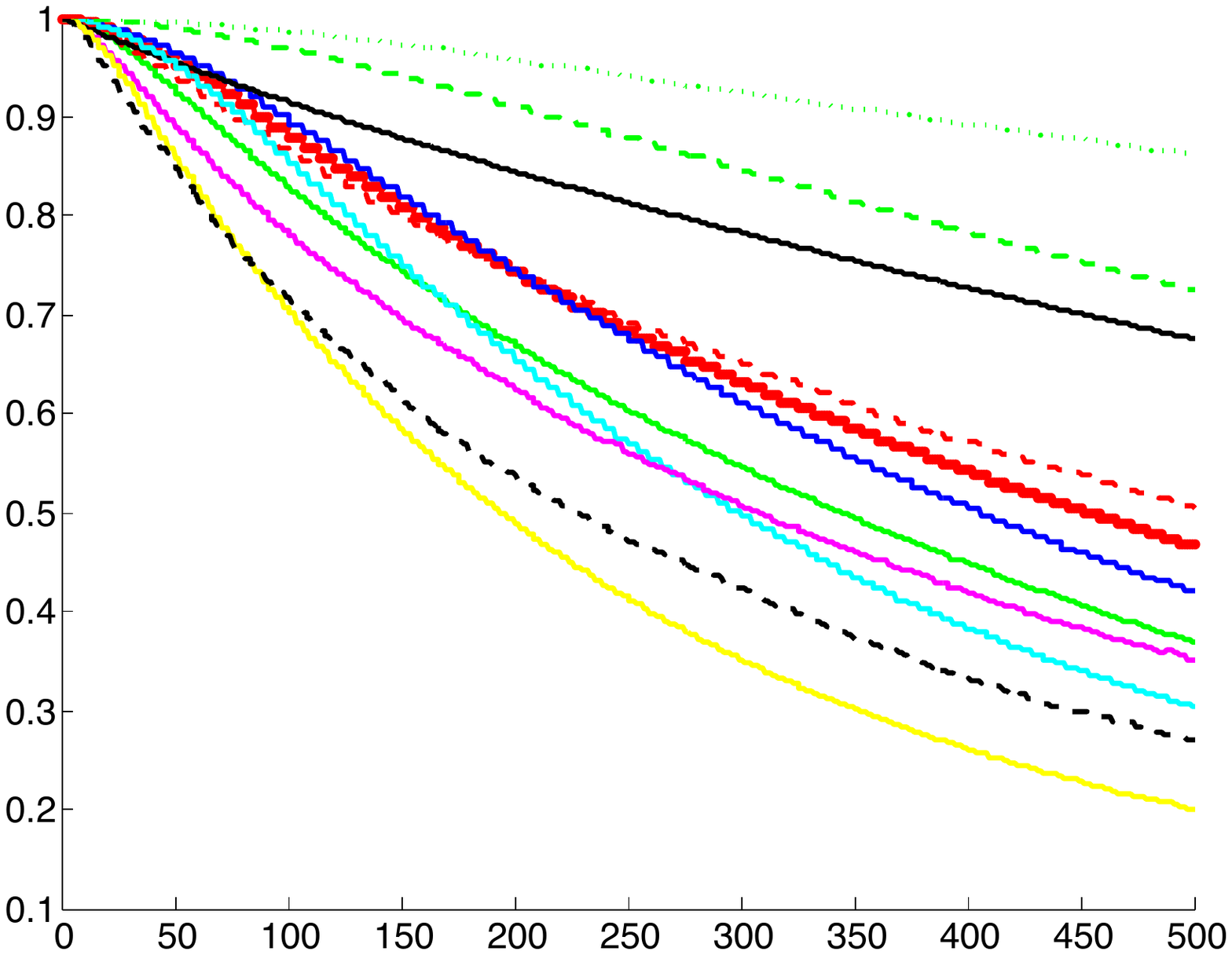} \\
 $c=0.05$ & $c=0.1$ & $c=0.25$ \\
    \includegraphics[width=\gwidth]{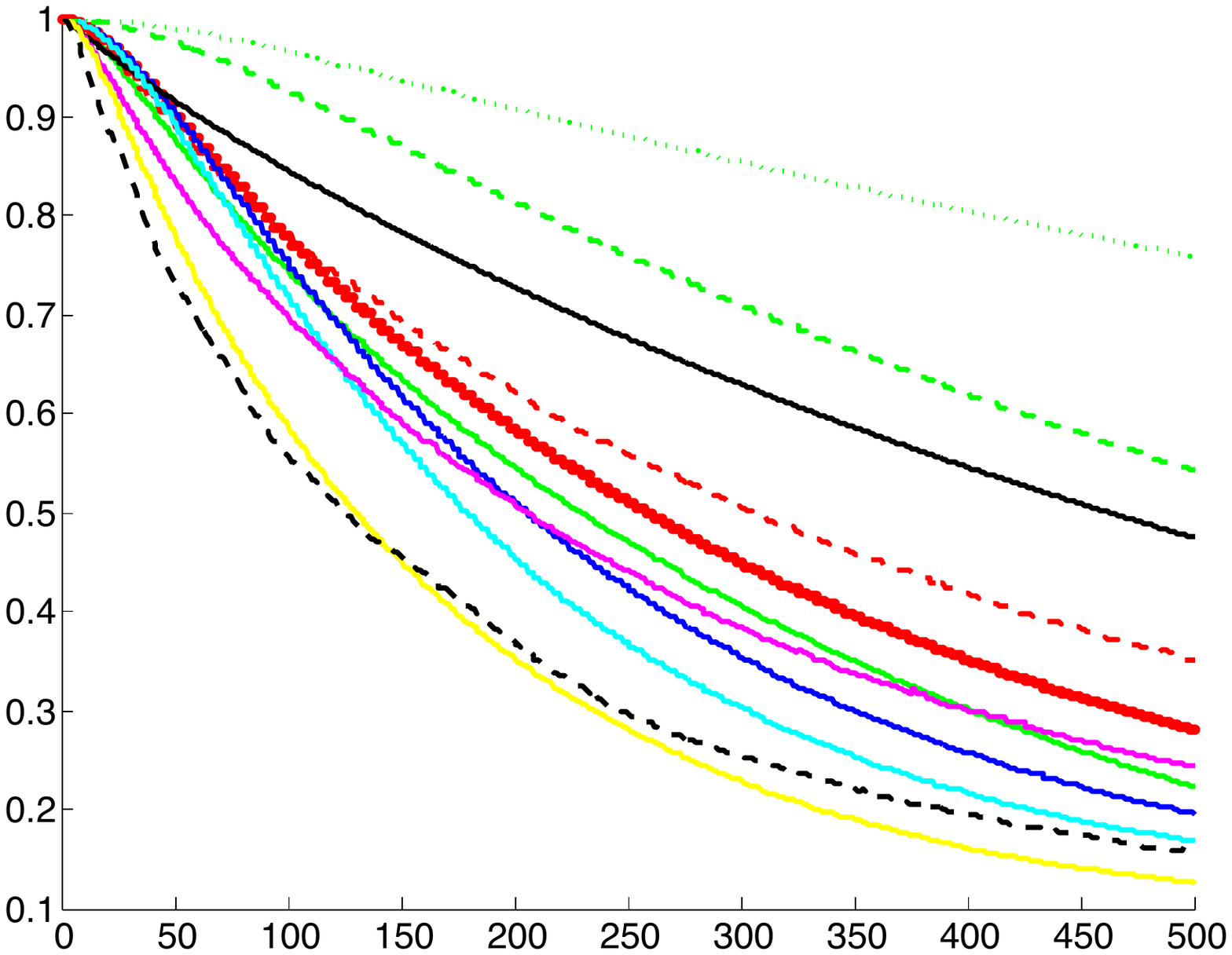} &   \includegraphics[width=\gwidth]{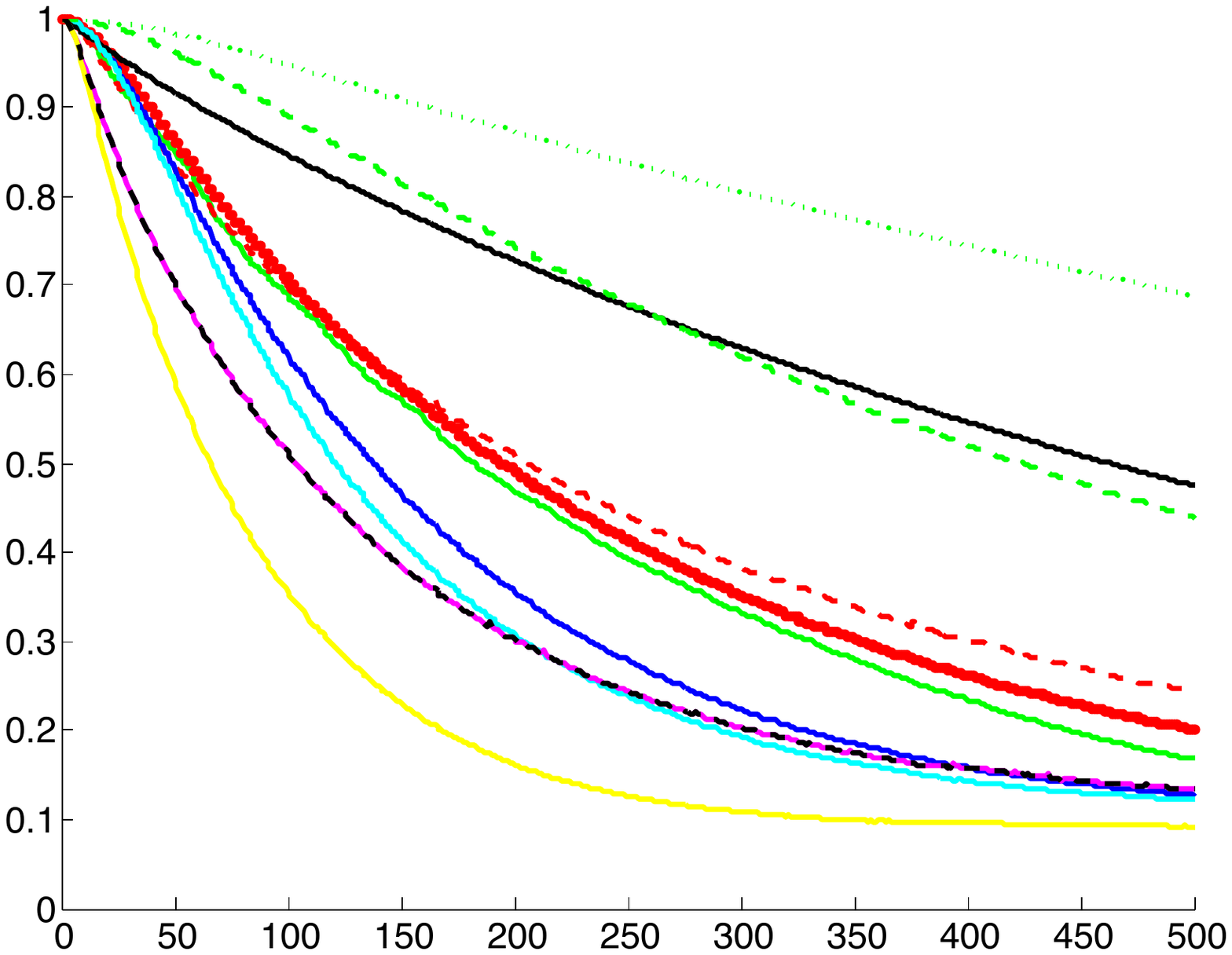} &   \includegraphics[width=\gwidth]{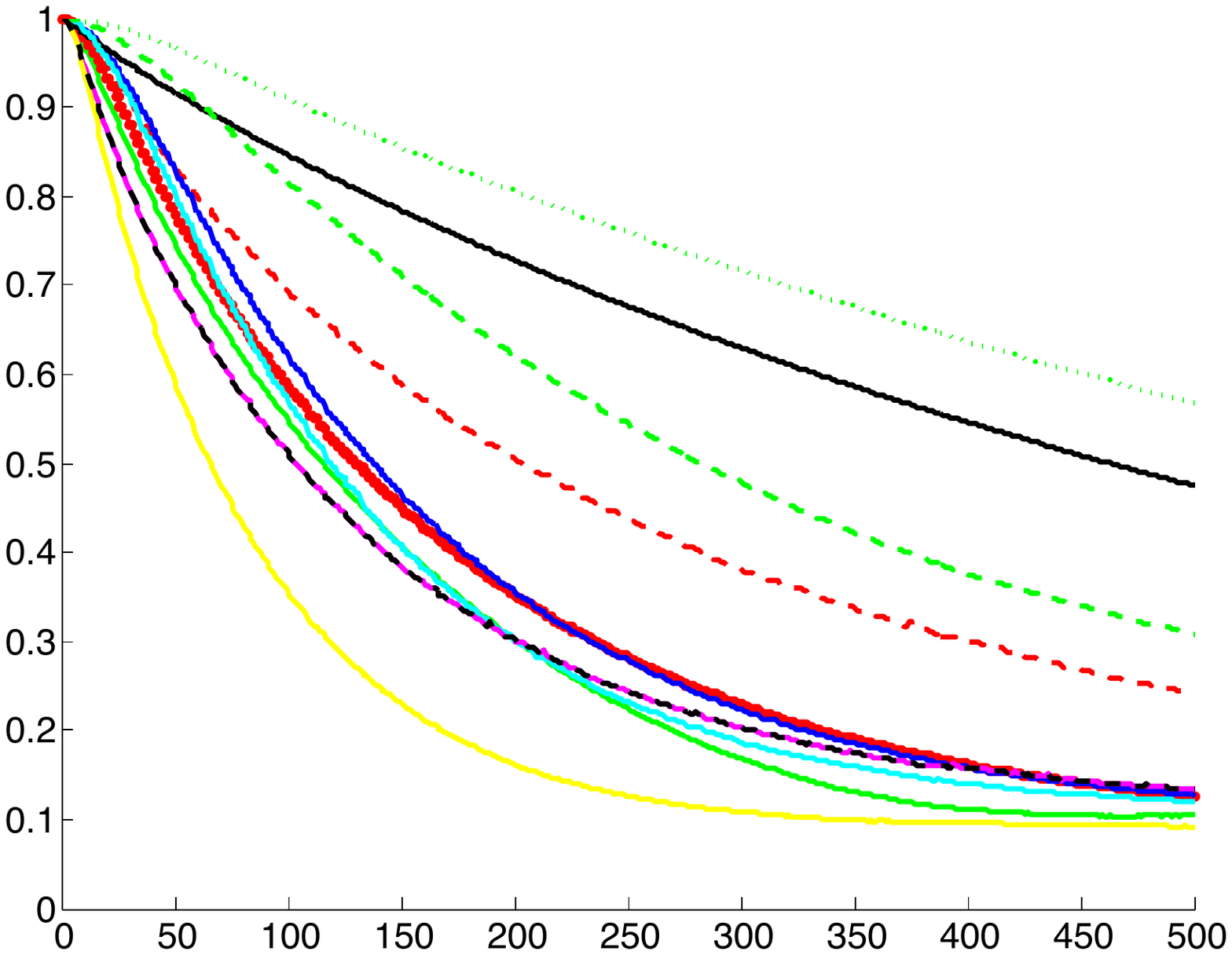}\\
 $c=0.5$ & $c=0.75$ & $c=1.0$ \\
  \includegraphics[width=\gwidth]{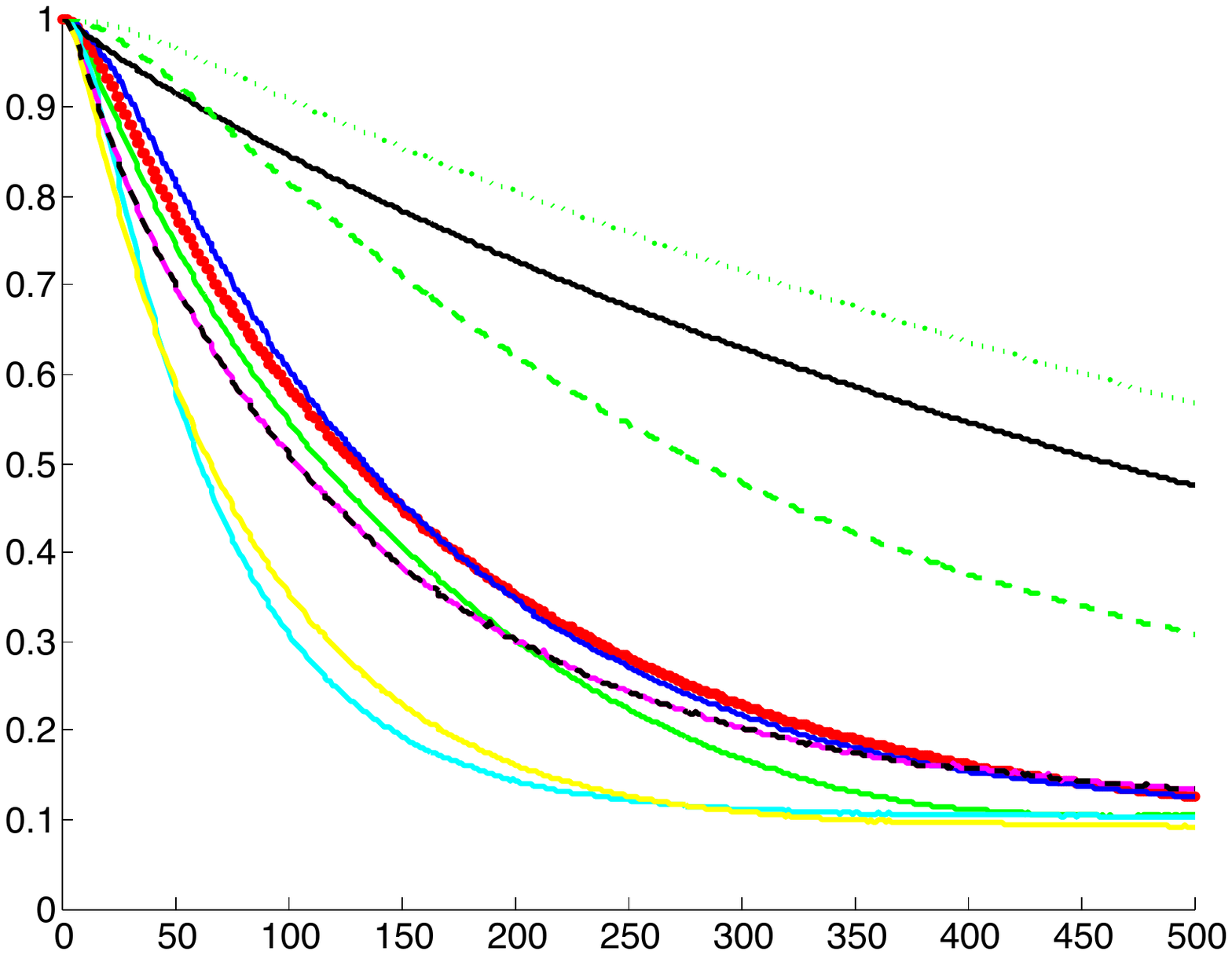} &   \includegraphics[width=\gwidth]{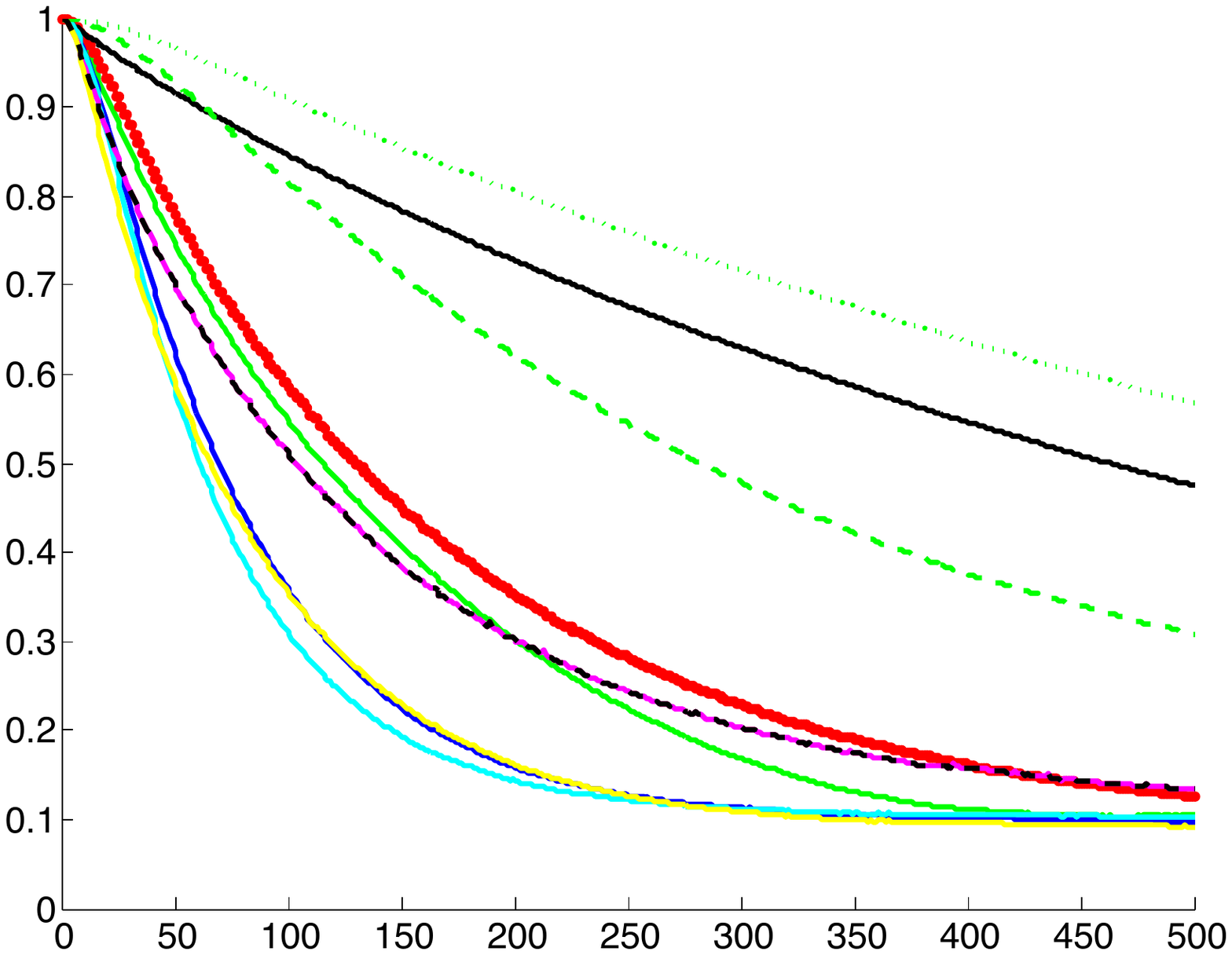} &   \includegraphics[width=\gwidth]{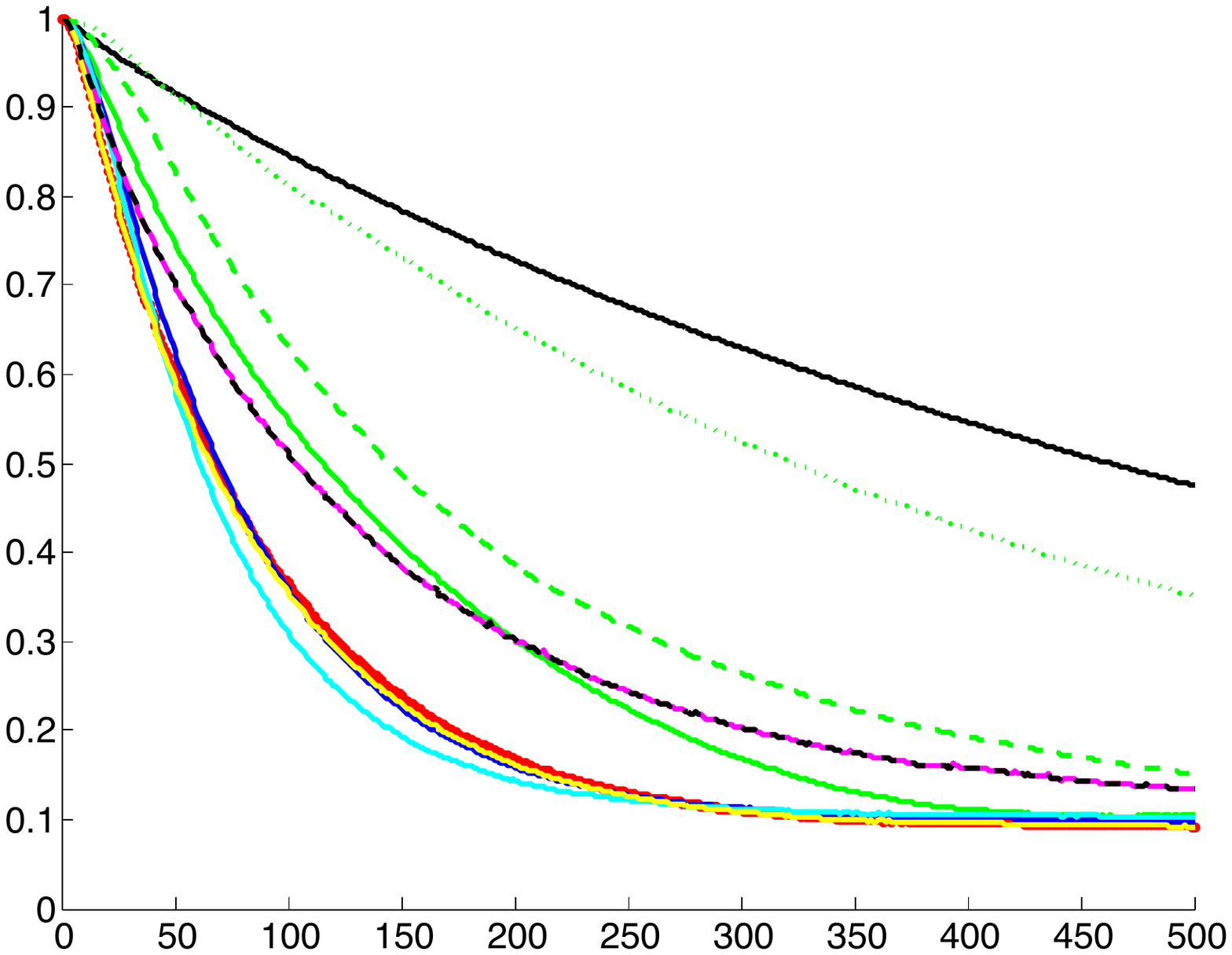}\\
 $c=1.25$ & $c=1.5$ & $c=1.75$ \\
 \end{tabular}
 \caption{Runtime analysis in on-policy random MDPs, with tabular features. Once the time per iteration is increased to 1.75 milliseconds, we obtain the original learning curve graphs: there are no runtime restrictions on the algorithms at that point since they are all fast enough with so much time per second. The line style and colors correspond exactly with the labels in the main paper. For a detailed discussion of the figure see the appendix text.}\label{figure_onruntime}
 \end{figure*}
 
  \begin{figure*}
 \hspace*{-0.75cm}
\begin{tabular}{ccc}
   \includegraphics[width=\gwidth]{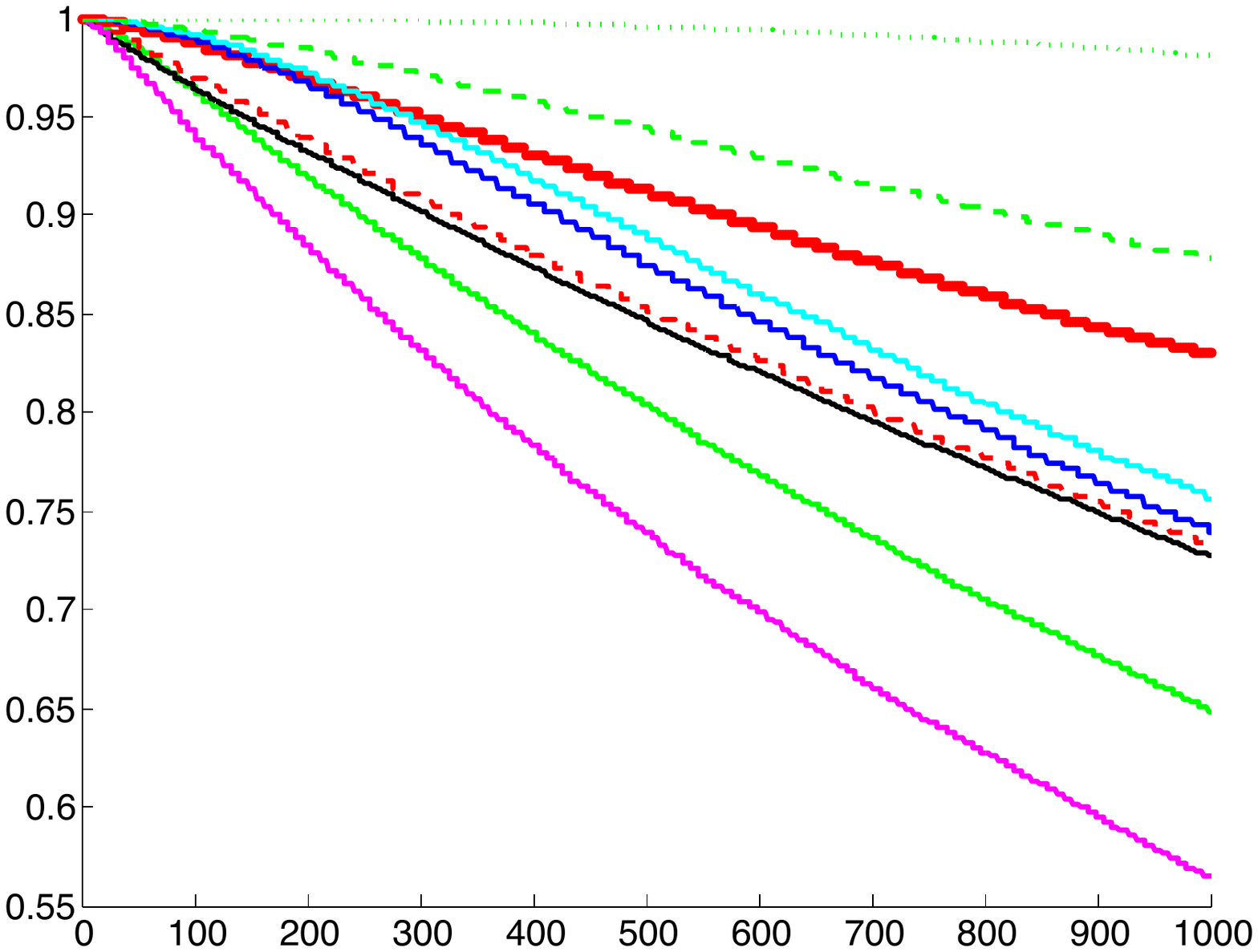} &   \includegraphics[width=\gwidth]{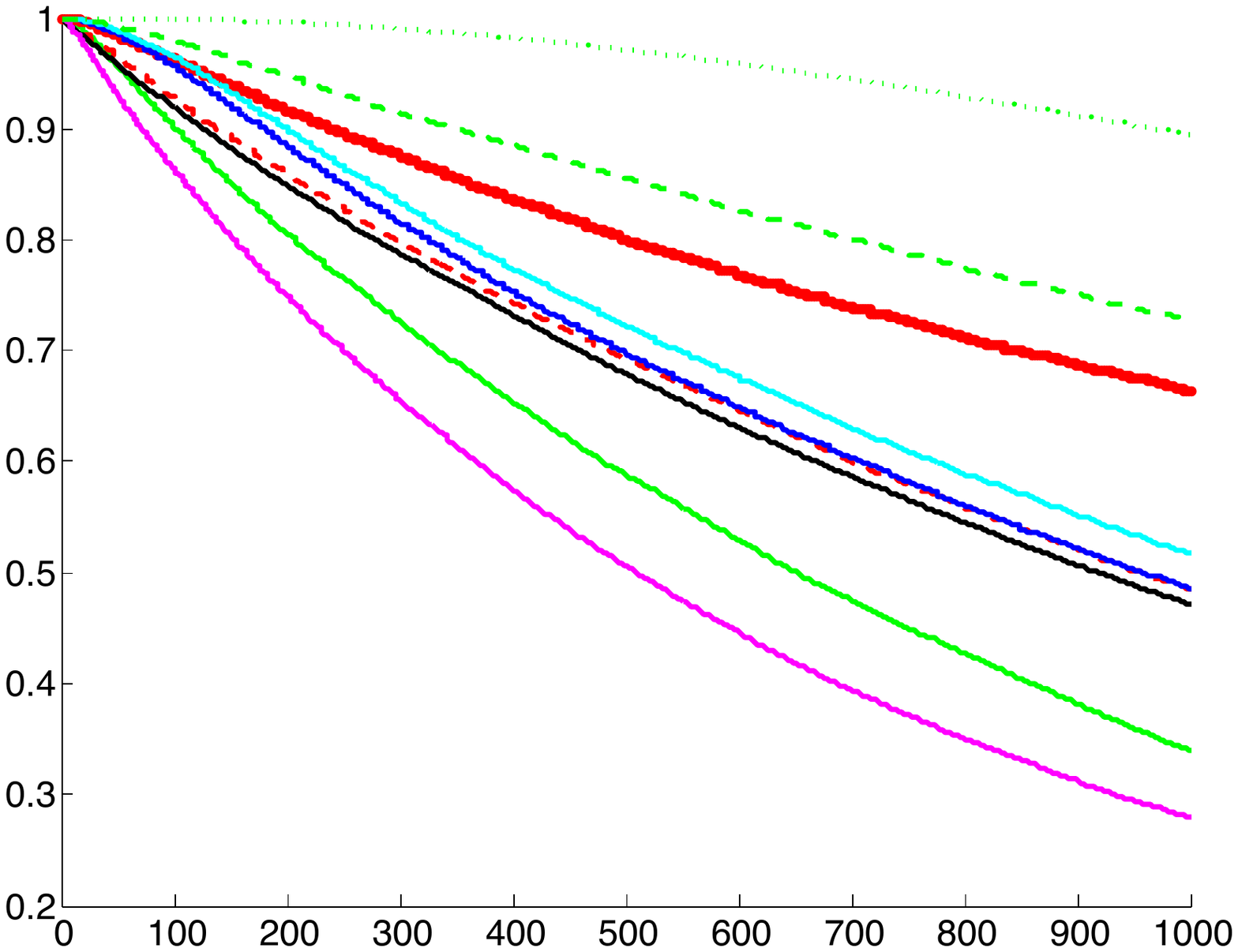}  &    \includegraphics[width=\gwidth]{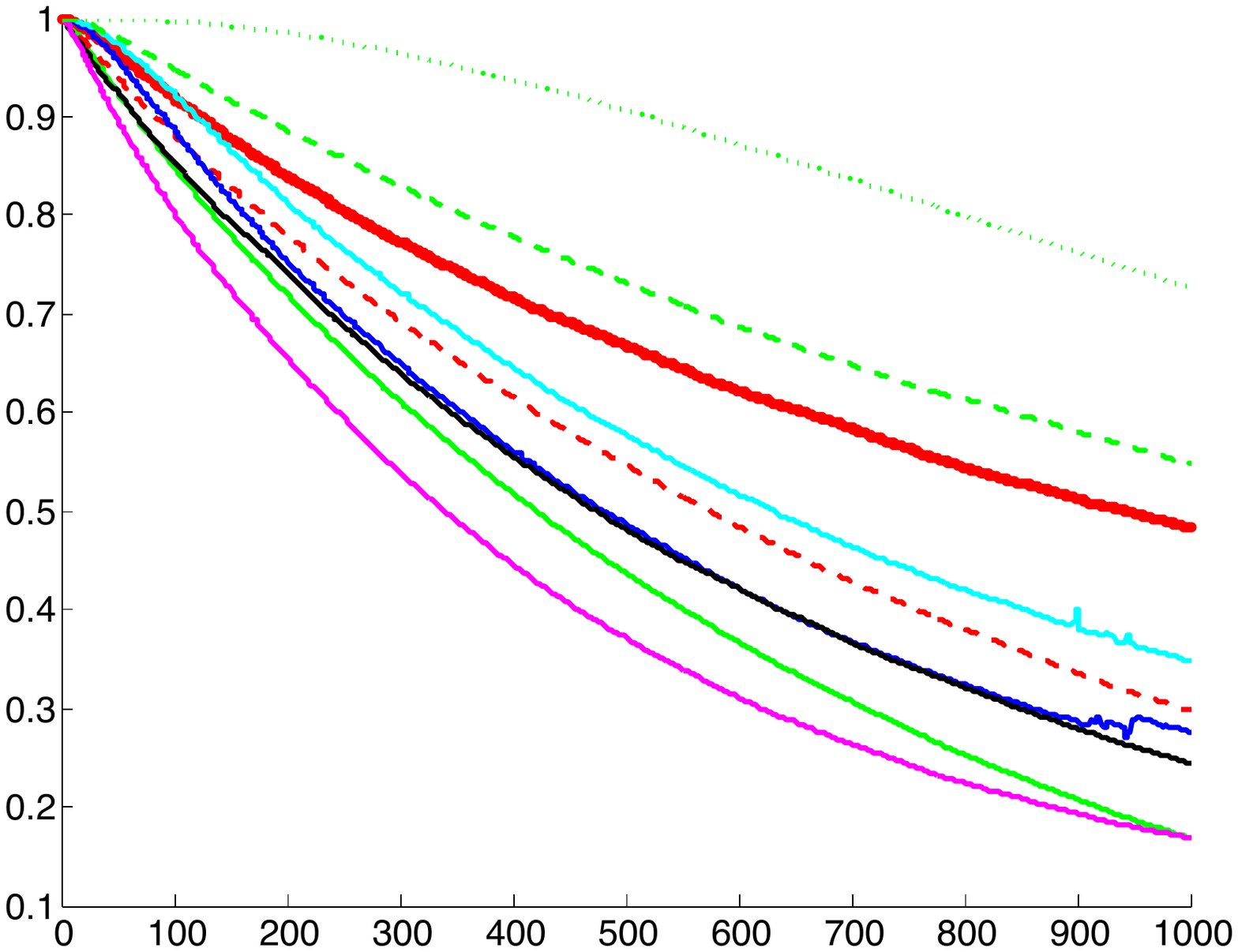} \\
 $c=0.1$ & $c=0.25$ & $c=0.5$\\
   \includegraphics[width=\gwidth]{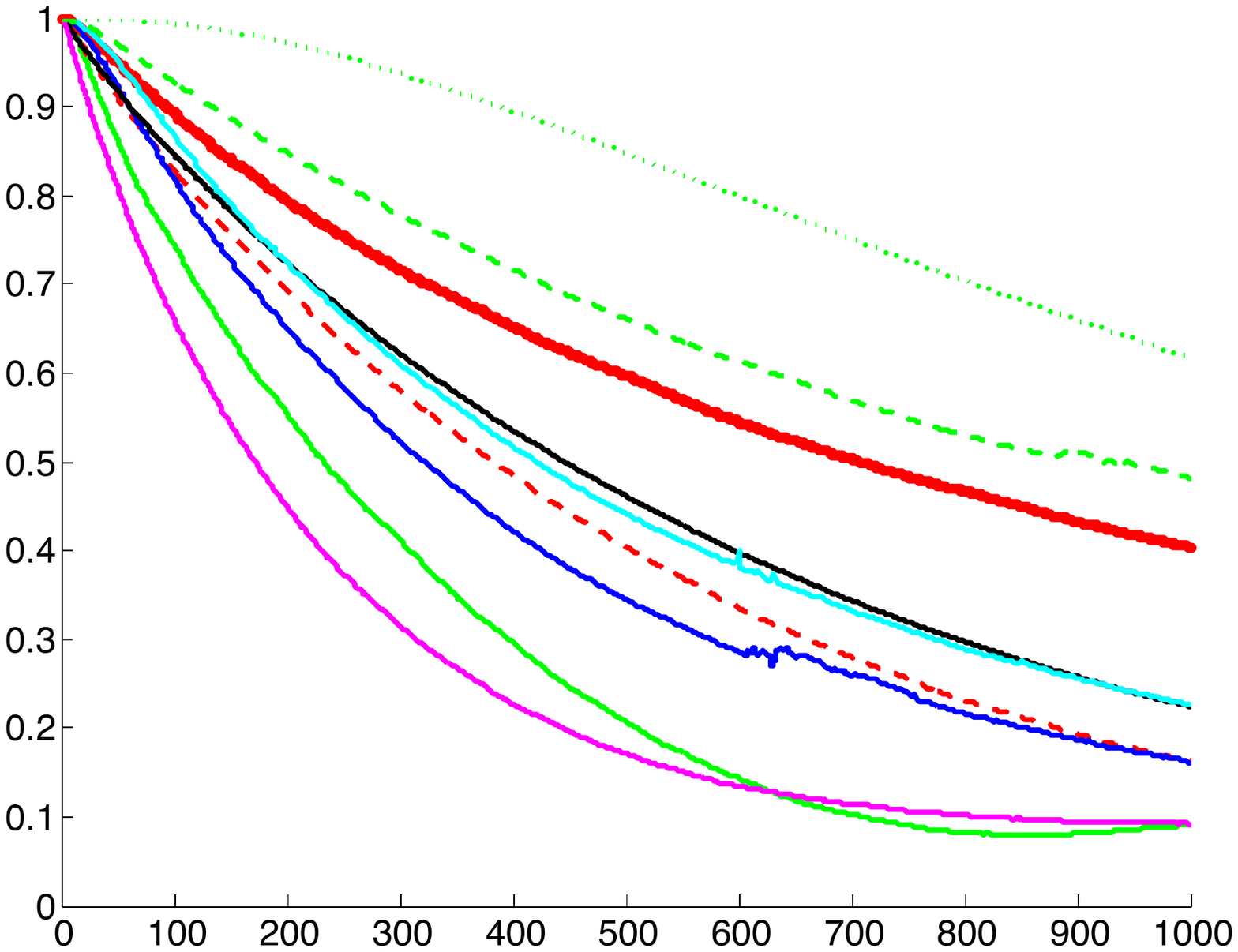} &   \includegraphics[width=\gwidth]{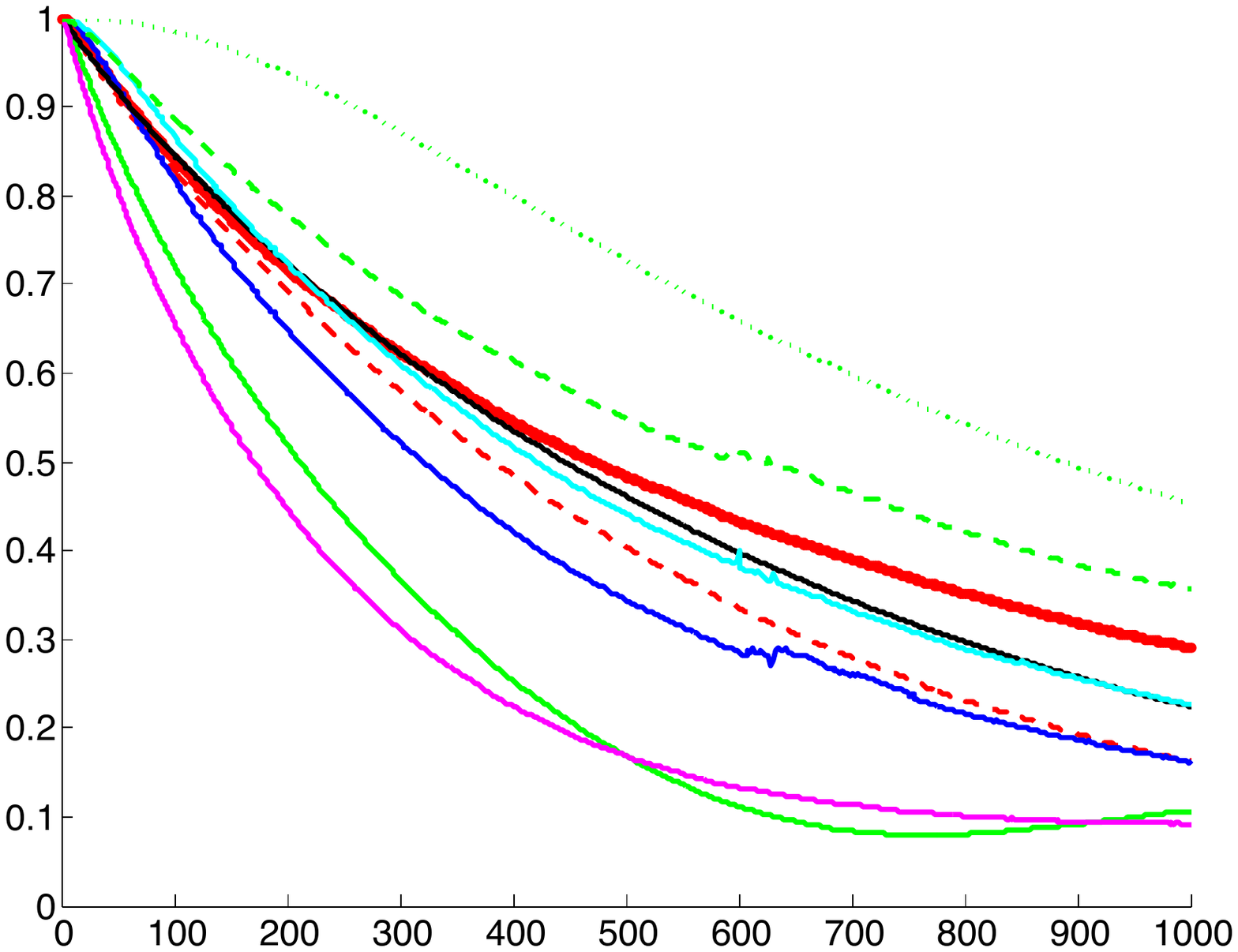} &     \includegraphics[width=\gwidth]{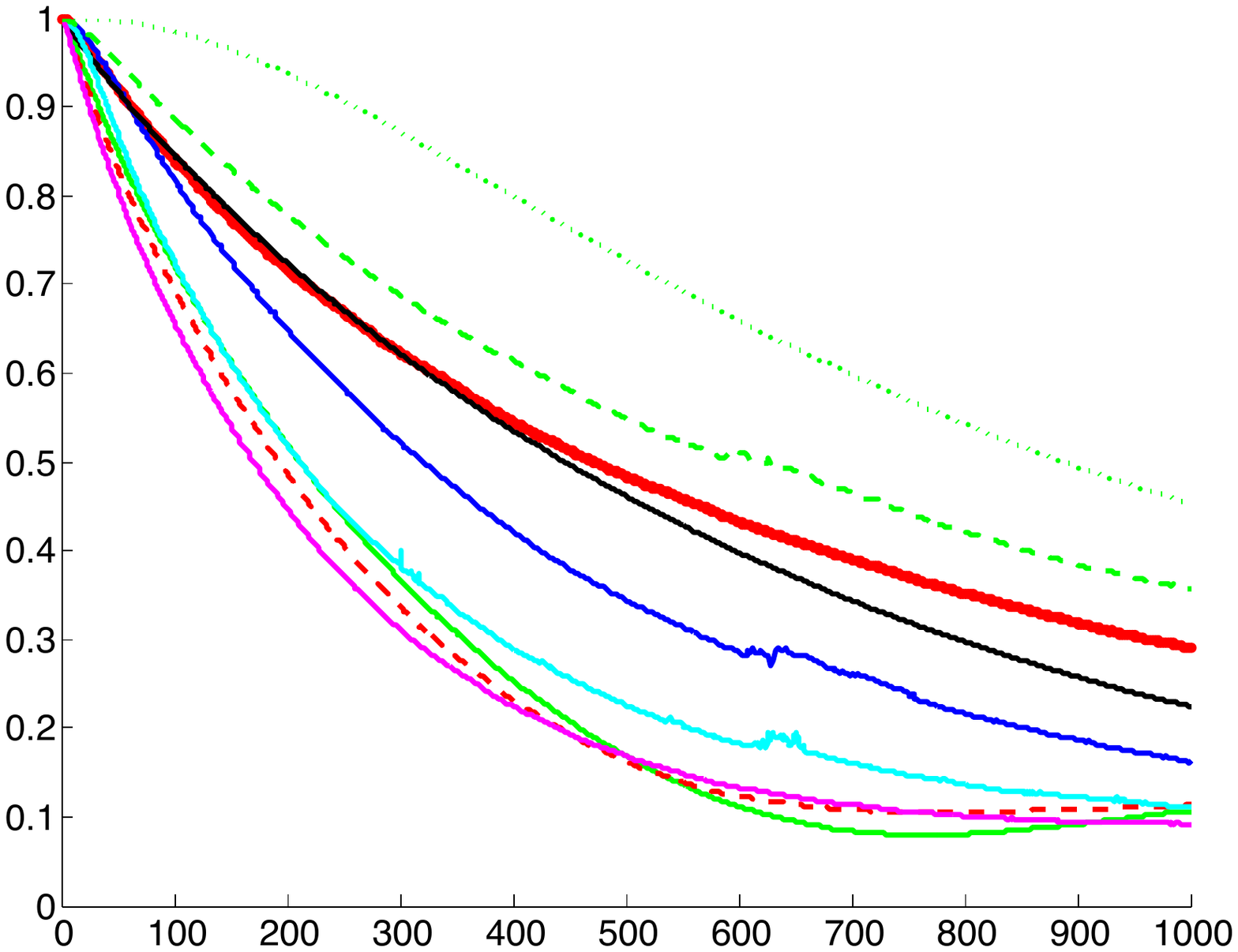} \\
  $c=0.75$ & $c=1.0$&$c=1.25$ \\
     \includegraphics[width=\gwidth]{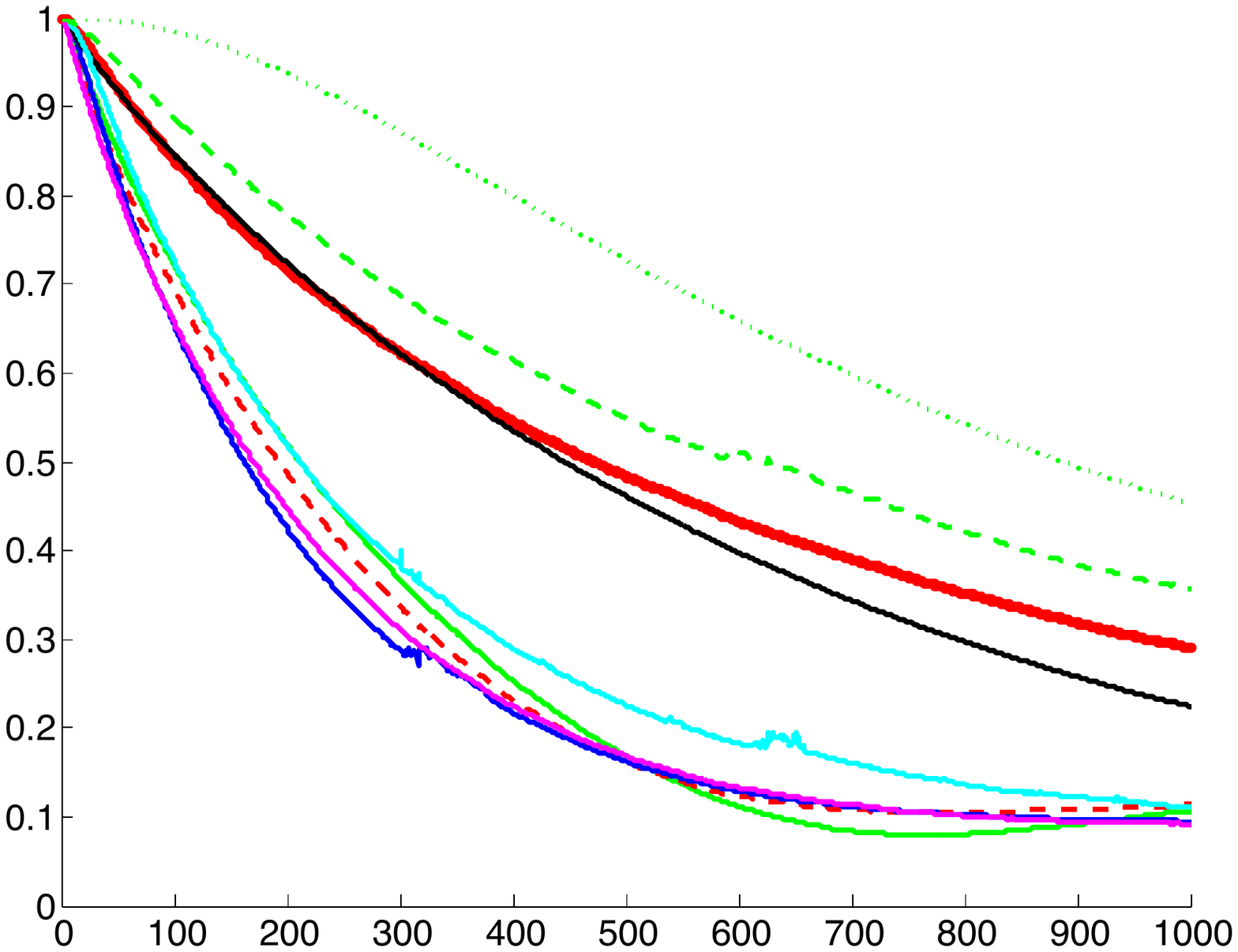} &   \includegraphics[width=\gwidth]{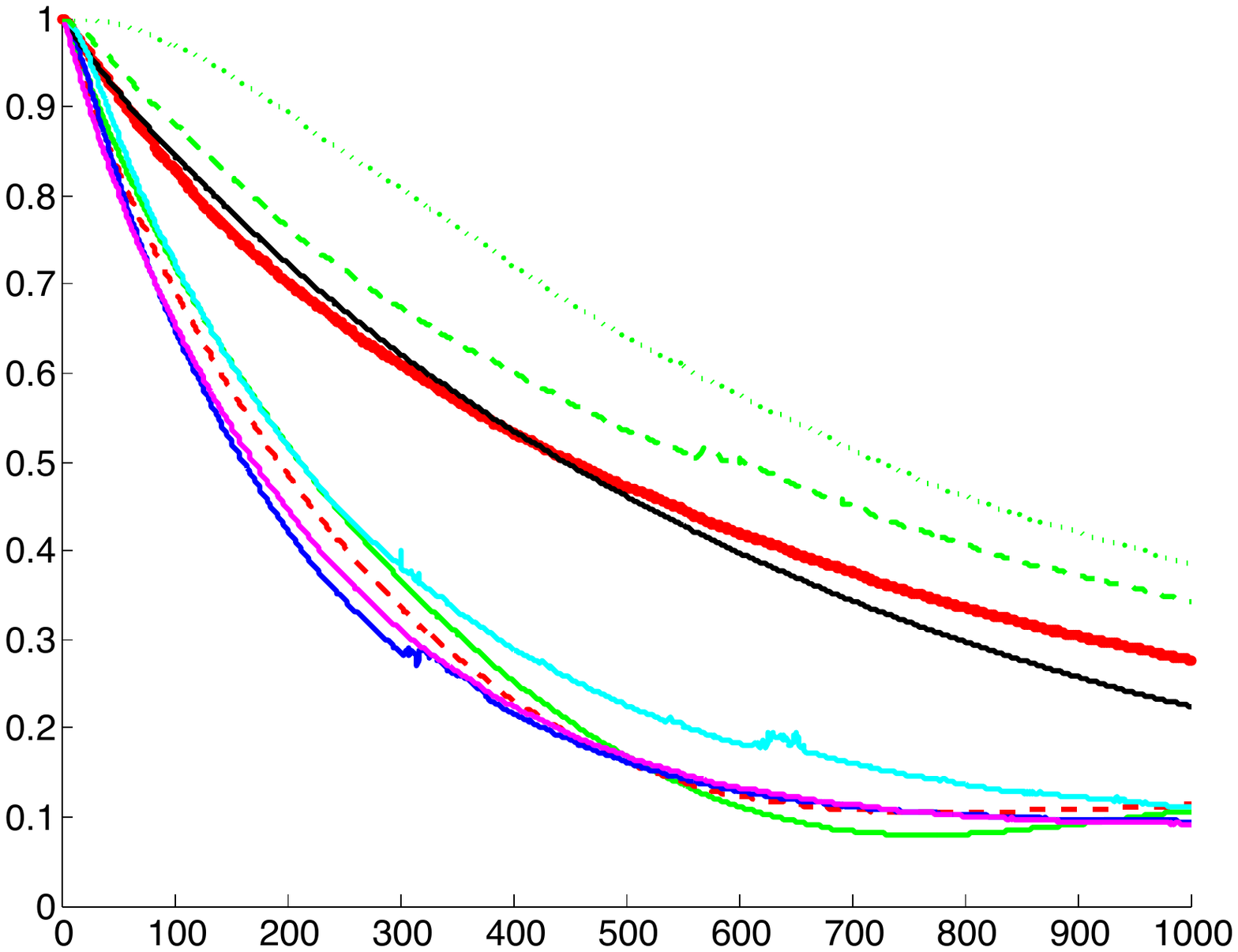} &  \includegraphics[width=\gwidth]{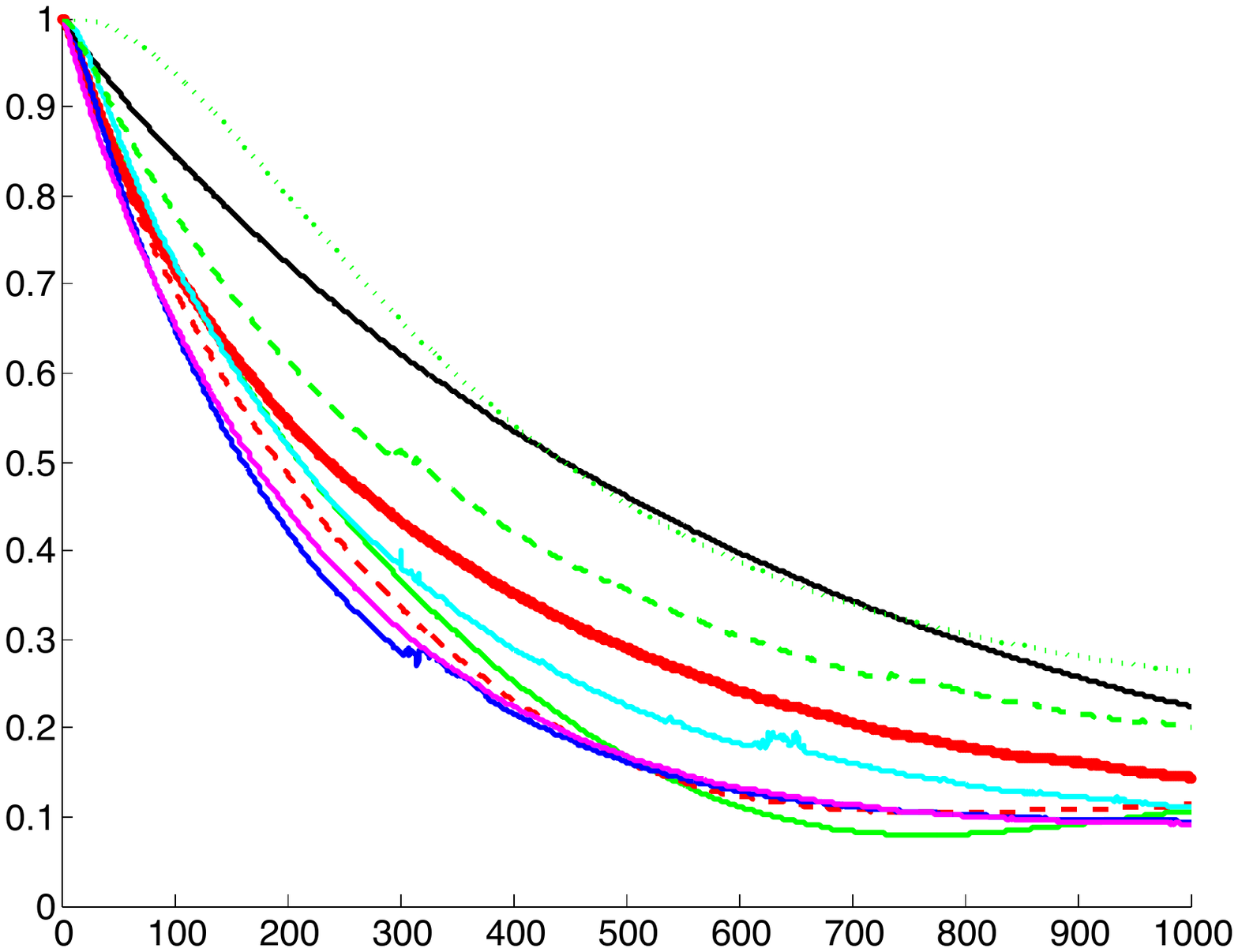}\\
  $c=1.5$ & $c=1.75$ & $c=2.0$ \\
 \end{tabular}
 \caption{Runtime analysis in off-policy random MDPs, with tabular features. Once the time per iteration is increased to 2 milliseconds, we obtain the original learning curve graphs: there are no runtime restrictions on the algorithms at that point since they are all fast enough with the time allotted . The line style and colors correspond exactly with the labels in the main paper. For a detailed discussion of the figure see the appendix text.}\label{figure_offruntime}
 \end{figure*}

%  \begin{figure*}
%\begin{tabular}{ccc}
%   \includegraphics[width=\gwidth]{final_figs/onpolicy_c1.pdf} &   \includegraphics[width=\gwidth]{final_figs/onpolicy_c1.pdf}   &   \includegraphics[width=\gwidth]{final_figs/onpolicy_c1.pdf} \\
% (a) On-policy, $c = 1$ & (b) On-policy, $c = 1$ (c) On-policy, $c = 10$ 
% \end{tabular}
% \caption{Runtime result for increasing time given to the algorithms per sample.}
% \end{figure*}

\end{document}